\newcommand{\logsobo}{\lambda_*}
\newcommand{\oracle}{\mathcal{O}}
\newcommand{\transition}{\mathcal{T}}
\newcommand{\proposal}{\mathcal{P}}
\newcommand{\lipschitzness}{M_\dims}
\newcommand{\stepsize}{\eta}
\newcommand{\dissipative}{\mu}
\newcommand{\distantdissipative}{\beta}
\newcommand{\referencepoint}{x_0}
\newcommand{\warmstart}{M_0}
\newcommand{\Tmix}{\ensuremath{T_{\mbox{\tiny{mix}}}}}
\newcommand{\DTV}{\ensuremath{d_{\mbox{\tiny{TV}}}}}
\newcommand{\dims}{\ensuremath{d}}
\newcommand{\real}{\ensuremath{\mathbb{R}}}
\newcommand{\brackets}[1]{\left[ #1 \right]}
\newcommand{\abss}[1]{\left| #1 \right |}
\newcommand{\Rspace}{\ensuremath{\mathbb{R}}}
\newcommand{\ball}{\ensuremath{\mathbb{B}}}
\newcommand{\sphere}{\ensuremath{\mathbb{S}}}
\newcommand{\mydefn}{\ensuremath{:=}}
\newcommand{\smoothness}{L} 
\newcommand{\defn}{:=}
\newcommand{\matsnorm}[2]{|\!|\!| #1 | \! | \!|_{{#2}}}
\newcommand{\vecnorm}[2]{\left\| #1\right\|_{#2}}
\newcommand{\opnorm}[1]{\ensuremath{\matsnorm{#1}{\tiny{\mbox{op}}}}}
\newcommand{\inprod}[2]{\ensuremath{\langle #1 , \, #2 \rangle}}
\newcommand{\prox}[2]{\ensuremath{Prox_{ #1}( #2)}}
\newcommand{\kull}[2]{\ensuremath{D_{\text{KL}}(#1 \| #2)}}
\newcommand{\Exs}{\ensuremath{{\mathbb{E}}}}
\newcommand{\Prob}{\ensuremath{{\mathbb{P}}}}
\newcommand{\prej}{p^{rej}}
\newtheoremstyle{named}{}{}{\itshape}{}{\bfseries}{.}{.5em}{\thmnote{#3's }#1}
\theoremstyle{named}
\theoremstyle{plain}
\newtheorem{theorem}{Theorem}
\newtheorem{proposition}{Proposition}
\newtheorem{lemma}{Lemma}
\newtheorem{corollary}{Corollary}
\newtheorem{definition}{Definition}
\newlength{\widebarargwidth}
\newlength{\widebarargheight}
\newlength{\widebarargdepth}
\long\def\@makecaption#1#2{
        \vskip 0.8ex
        \setbox\@tempboxa\hbox{\small {\bf #1:} #2}
        \parindent 1.5em  
        \dimen0=\hsize
        \advance\dimen0 by -3em
        \ifdim \wd\@tempboxa >\dimen0
                \hbox to \hsize{
                        \parindent 0em
                        \hfil
                        \parbox{\dimen0}{\def\baselinestretch{0.96}\small
                                {\bf #1.} #2
                                }
                        \hfil}
        \else \hbox to \hsize{\hfil \box\@tempboxa \hfil}
        \fi
        }
\long\def\comment#1{}
\definecolor{battleshipgrey}{rgb}{0.52, 0.52, 0.51}
\definecolor{darkgray}{rgb}{0.66, 0.66, 0.66}
\definecolor{darkgreen}{rgb}{0.0, 0.2, 0.13}
\definecolor{darkspringgreen}{rgb}{0.09, 0.45, 0.27}
\definecolor{dukeblue}{rgb}{0.0, 0.0, 0.61}
\definecolor{olivedrab7}{rgb}{0.24, 0.2, 0.12}
\definecolor{darkblue}{rgb}{0.0, 0.0, 0.55}
\definecolor{darkscarlet}{rgb}{0.34, 0.01, 0.1}
\definecolor{candyapplered}{rgb}{1.0, 0.03, 0.0}
\definecolor{ao(english)}{rgb}{0.0, 0.5, 0.0}
\definecolor{applegreen}{rgb}{0.55, 0.71, 0.0}
\newcommand{\red}[1]{\textcolor{red}{#1}}
\newcommand{\mjwcomment}[1]{{\bf{{\red{{MJW --- #1}}}}}}
\renewcommand*{\backrefalt}[4]{%
    \ifcase #1 \footnotesize{(Not cited.)}%
    \or        \footnotesize{(Cited on page~#2.)}%
    \else      \footnotesize{(Cited on pages~#2.)}%
    \fi}
\newtheorem{assumption}{Assumption}
\newcommand{\xiter}[1]{\ensuremath{X^{#1}}}
\long\def\@makecaption#1#2{
        \vskip 0.8ex
        \setbox\@tempboxa\hbox{\small {\bf #1:} #2}
        \parindent 1.5em  
        \dimen0=\hsize
        \advance\dimen0 by -3em
        \ifdim \wd\@tempboxa >\dimen0
                \hbox to \hsize{
                        \parindent 0em
                        \hfil 
                        \parbox{\dimen0}{\def\baselinestretch{0.96}\small
                                {\bf #1.} #2
                                } 
                        \hfil}
        \else \hbox to \hsize{\hfil \box\@tempboxa \hfil}
        \fi
        }
\begin{document}

\begin{center}
{\bf{\LARGE{An Efficient Sampling Algorithm for Non-smooth Composite Potentials}}}

\vspace*{.2in}
 \large{
 \begin{tabular}{cccc}
  Wenlong Mou$^{ \diamond}$ & Nicolas Flammarion$^{ \diamond}$ &
  Martin J. Wainwright$^{\dagger, \diamond, \ddagger}$ &Peter L.
  Bartlett$^{\diamond, \dagger}$
 \end{tabular}

}

\vspace*{.2in}

 \begin{tabular}{c}
 Department of Electrical Engineering and Computer Sciences$^\diamond$\\
 Department of Statistics$^\dagger$ \\
 UC Berkeley\\
 \end{tabular}

 \vspace*{.1in}
 \begin{tabular}{c}
 Voleon Group$^\ddagger$
 \end{tabular}

\vspace*{.2in}

\today

\vspace*{.2in}

\begin{abstract}
  We consider the problem of sampling from a density of the form
  \mbox{$p(x) \propto \exp(-f(x)- g(x))$,} where \mbox{$f:
    \mathbb{R}^d \rightarrow \mathbb{R}$} is a smooth and strongly
  convex function and \mbox{$g: \mathbb{R}^d \rightarrow \mathbb{R}$}
  is a convex and Lipschitz function. We propose a new algorithm based
  on the Metropolis-Hastings framework, and prove that it mixes to
  within TV distance $\varepsilon$ of the target density in at most
  $O(d \log (d/\varepsilon))$ iterations. This guarantee extends
  previous results on sampling from distributions with smooth log
  densities ($g = 0$) to the more general composite non-smooth case,
  with the same mixing time up to a multiple of the condition number.
  Our method is based on a novel proximal-based proposal distribution
  that can be efficiently computed for a large class of non-smooth
  functions $g$.
\end{abstract}
\end{center}


\section{Introduction}
Drawing samples from a distribution is a fundamental problem in
machine learning, theoretical computer science and statistics. With
the rapid growth of modern big data analysis, sampling algorithms are
playing an increasingly important role in many aspects of machine
learning, including Bayesian analysis, graphical modeling,
privacy-constrained statistics, and reinforcement learning.  For
high-dimensional problems, a standard approach is based on Markov
Chain Monte Carlo (MCMC) algorithms.  Such MCMC algorithms have been
applied to many problems, including collaborative filtering and matrix
completion~\citep{salakhutdinov2007bayesian,salakhutdinov2008bayesian},
large-scale Bayesian learning~\citep{welling2011bayesian}, text
categorization~\citep{genkin2007large}, graphical model
learning~\citep{besag1993spatial,wainwright2008graphical}, and
Bayesian variable selection~\citep{yang2016computational}.  Moreover,
sampling algorithms have also been used for exploration in
reinforcement learning~\citep{ghavamzadeh2015bayesian} and
privacy-preserving machine learning~\citep{dwork2014algorithmic}.

For statistical $M$-estimation, various non-smooth regularization
functions---among them the $\ell_1$-norm and variants thereof---are
the workhorse in this field, and have been used successfully for
decades. The non-smooth nature of the penalty term changes the
statistical complexity of the problem, making it possible to obtain
consistent estimators for high-dimensional
problems~\citep{buehlmann20111statistics,hastie2015statistical}. In
the Bayesian setup, non-smooth priors are also playing a key role in
high-dimensional
models~\citep{seeger2008bayesian,ohara2009review,polson2011shrink}. The
Bayesian analogue of the $\ell_1$-penalty is the Laplacian prior, and
has been the subject of considerable
research~\citep{park2008bayesian,carvalho2008high,dalalyan2018exponentially}.

In this paper, we study the problem of sampling from a distribution
$\Pi$ defined by a \mbox{density $\pi$} (with respect to Lebesgue
measure) that takes the composite form
\begin{align}
\label{eq:target-sampling}  
   \pi(x) \propto \exp(- U(x) ),\quad \mbox{where $U = f + g$,}
\end{align}
where only the function $f$ need be smooth. In Bayesian analysis, the
density typically corresponds to a posterior distribution, where
$e^{-f}$ is the likelihood defined by the observed data, and $e^{- g}$
is a prior distribution. The function $f$ is usually accessed through
an oracle that returns the function value and the gradient evaluated
at any query point, while the function $g$ is explicitly known in
closed form and often possesses some specific structure. This
composite model covers many problems of practical interest in
high-dimensional machine learning and signal processing~\citep[see,
  e.g.,][]{RisGra14}.

In convex optimization, it has been shown that composite objectives of
the form $U = f + g$ can be minimized using algorithms that converge
as quickly as those applicable to smooth minimization
problems~\citep{BecTeb09}; in particular, these algorithms require a
gradient oracle for $f$ and a proximity oracle defined by the function
$g$.  However, the current state-of-the-art rate for the sampling
problem~\eqref{eq:target-sampling} still does not match its smooth
counterpart.  Specifically, to make the mixing time scale linearly
with dimension, the best known dependency~\citep{durmus2019analysis}
on the accuracy $\varepsilon \in (0,1)$ scales as $O ( d /
\varepsilon^2)$.  Their algorithm suffers from significant bias of the
unadjusted Markov chain, meaning that one has to make the step size
very small.  In addition, exponentially fast convergence rates have
not been achieved with non-asymptotic guarantees (see
Section~\ref{sec:related-works} below).

In this work, we close the gap between composite sampling problems and
smooth problems, by developing a Metropolis-adjusted algorithm with a
new proposal distribution inspired by the proximity operator. For
sampling from the distribution~\eqref{eq:target-sampling}, our
algorithm has mixing time scaling as $O(d \log
\tfrac{d}{\varepsilon})$ whenever the density $\pi$ satisfies a
log-Sobolev inequality and the function $g$ is convex and
$O(\sqrt{d})$-Lipschitz. Our results apply to a broad class of
problems where the proximal version of the sampling oracle associated
with the penalty $g$ is available, including the case of the Laplacian
prior. These guarantees improve upon existing algorithmic results for
sampling problem~\eqref{eq:target-sampling}, both in the dependence on
both $d$ and $\varepsilon$, and match the corresponding rate for the
Metropolis-adjusted Langevin algorithm in the smooth
case~\citep{dwivedi2018log}, up to a multiple of the condition number.

\begin{comment}
\begin{itemize}
    \item Lot of recent works in sampling.
    \item But they rely crucially on the log-smoothness of the target
      distribution.
    \item The same problem has been encountered by the optimization
      community and solved thanks to proximal methods.
    \item Some attempts to directly extend these algorithms to
      sampling problems are inconclusive.
    \item We keep the essence of the proximal method from optimization
      and adapt it to the sampling problem. We propose the first
      theoretically efficient algorithm for sampling from a non-smooth
      problem which obtains the same mixing time as its smooth
      counterparts.
\end{itemize}
\end{comment}


\subsection{Related work}
\label{sec:related-works}

Both MCMC algorithms and proximal point methods have been intensively
studied in different settings, and here we review the existing
literature most relevant to our paper.
 \vspace{-0,1cm} 
\paragraph{Metropolis-Hastings sampling:}
The Metropolis-Hastings algorithm dates back to seminal
work~\citep{metropolis1953equation,hastings1970monte,gelfand1990sampling}
from the 1950s.  This simple and elegant idea allows one to
automatically build a Markov chain whose stationary distribution is
the desired target distribution.  All Metropolis-Hastings algorithms
are based on an underlying proposal distribution, with the simplest
one being associated with a random walk.  Earlier work focuses on
asymptotic theory, including guarantees of geometric ergodicity and
central limit theorems for random-walk-based Metropolis proven under
various
assumptions~\citep{meyn1994computable,mengersen1996rates,roberts1996geometric,jarner2000geometric,roberts2001optimal,
  roberts2004general}.  Various coupling-based methods have proven
useful for proving non-asymptotic bounds, including coupling with
metric estimates, and conductance analysis. The former can be used to
prove convergence in Wasserstein metrics, whereas the conductance
approach leads to convergence guarantees in the total variation (TV)
distance.  The mixing rate of a Markov chain is intimately related to
its conductance~\cite{jerrum1988conductance,lovasz1999faster}, a
quantity that can be further related to the isoperimetric properties
of the target distribution~\citep{lovasz2007geometry}.


\paragraph{Langevin-based sampling:}

Many sampling algorithms for smooth potentials rest on the Langevin
diffusion, defined via the Ito stochastic differential equation
\begin{align}
\label{eq:langevin}
dX_t = -\nabla f(X_t) dt + \sqrt{2} dB_t,
\end{align}
where $B_t$ is a standard $d$-dimensional Brownian motion. Indeed,
when the function $f$ is smooth, the Langevin
process~\eqref{eq:langevin} has a stationary distribution with density
proportional to $\exp(-f)$; under mild conditions, the diffusion
process~\eqref{eq:langevin} converges to this stationary distribution
as $t \to \infty$. This perspective encompasses algorithms based on
simple discretization of the Langevin diffusion such as the unadjusted
Langevin algorithm
(ULA)~\citep{roberts1996exponential,dalalyan2017theoretical} and
variants proposed to refine the dependence of the mixing time on
different problem
parameters~\citep{cheng2017convergence,lee2018algorithmic,mangoubi2018dimensionally}.
Applying a Metropolis-Hastings step to the discretized Langevin
diffusion results in the Metropolis-adjusted Langevin algorithm
(MALA)~\citep{roberts1996exponential,
  bou2013non,eberle2014error}. Both ULA and MALA have been
well-understood when applied to smooth and strongly log-concave
potentials, with mixing rates $O(d/\varepsilon)$~\citep{durmus2017non}
and $O(d\log(1/\varepsilon))$~\citep{dwivedi2018log, chen2019hmc},
respectively.

When the potential is non-smooth, the drift of the Langevin SDE
becomes discontinuous, making the diffusion notoriously difficult to
discretize. Some past work has exploited smoothing techniques from
optimization theory to tackle this challenge, as we now discuss.

 \vspace{-0,05cm} 
\paragraph{Proximal algorithms:}

The Moreau-Yosida envelope~\citep{Mor62} of a function $g$ at scale
$\stepsize > 0$ is given by $ g^\stepsize(x) \mydefn \min_{y \in
  \real^d} \left \{ \frac{1}{2 \stepsize} \vecnorm{y-x}{2}^2 + g(y)
\right \}$.  Note that $g^\stepsize$ is a smooth approximation of the
function $g$, and the minimizing argument defines the \emph{proximity
  operator}
\begin{align}
  \label{EqnProximity}
\prox{\stepsize,g}{x} = \arg \min_{y \in \real^d} \left \{
\frac{1}{2\stepsize }\vecnorm{y-x}{2}^2+g(y) \right\}.
\end{align}
Note that the gradient of $g^\stepsize$ is related to the proximity
operator via the equality relation \mbox{$\stepsize \nabla g^\stepsize
  = \text{Prox}_{\stepsize,g}-Id$.}  Consequently, it is possible to
optimize non-smooth functions as efficiently as smooth ones if we are
given access to their proximity operator. This idea underlies a great
deal of recent progress in optimization, which step back from
black-box approaches and instead leverage the special structure of the
problem under consideration.  One striking example is the minimization
of functions of the composite form $f + g$, where both functions $f$
and $g$ are convex but only the function $f$ is
smooth. Proximal-gradient methods are based on the update $x_{n+1} =
\prox{\stepsize, g }{x_n-\stepsize \nabla f(x_n)}$, and are specially
appealing for solving problems where $g$ is
non-smooth~\citep{BecTeb09,WriNowFig09,ComPes11}.  Indeed, their
convergence rates match those obtained by gradient methods on smooth
problems; these fast rates should be contrasted with the slowness of
subgradient methods.  However, the efficiency of a proximal-gradient
method is predicated upon an efficient method for computing the
proximity operator.  Fortunately, many choices of $g$ encountered in
machine learning and signal processing lead to simple proximity
operators.


\paragraph{Non-smooth sampling:}

Pereyra et al.~\citep{pereyra2016proximal} proposed to sample from a
non-smooth potential $g$ by applying both the Metropolis-adjusted
Langevin and the unadjusted Langevin algorithms to its Moreau-Yosida
envelope.  Bernton~\citep{bernton2018langevin} analyzed the latter
algorithm in a particular case. Durmus et
al.~\citep{durmus2018efficient} extended these approaches to composite
potentials of the form $f+g$ by considering a smooth approximation of
the form $f + g^\lambda$, where $g^\lambda$ is a smooth version of the
non-smooth $g$, with the amount of smoothness parameterized by a
positive scalar $\lambda$. They proved bounds that characterize the
tradeoff between the quality of the approximation (decreasing in
$\lambda$), and the smoothness of the approximation (increasing in
$\lambda$).  This smoothing technique has also been applied to
Hamiltonian Monte Carlo~\citep{chaari2016hamiltonian}. Recently,
Durmus et al.~\citep{durmus2019analysis} established a mixing rate of
order $O(d / \varepsilon^2)$ for non-smooth composite objectives using
gradient flow in space of measures. However, their algorithm does not
directly lead to a Metropolis version suitable for the conductance
proof techniques, due to the singular measure yielded in the gproximal
step.

Note that all these previous works split the non-smooth component $g$
and the noise introduced in the sampling algorithm.  In contrast, we
take an alternative approach where the diffusion part and the
non-smooth function $g$ are combined together through a proximal
sampling oracle--- in particular, see
Definition~\ref{assume-oracle}). This joint approach leads to
significantly smaller bias within each step, and allows for uniform
control on the rejection probability.

\begin{comment}
\paragraph{Constraint sampling}
We also note that the problem of sampling from a density restricted to
a convex set has attracted far more attention; from the mixing time
$O(d^{23})$ of the lattice walk~\citep{dryer1991random} to the mixing
time $O(d^4)$) of the hit-and-run
walk~\citep{lovasz2007geometry}. Langevin based methods have also been
applied to this problem with the projected
Langevin~\citep{bubeck2018sampling}, Moreau-Yosida
Langevin~\citep{brosse2017sampling} and Mirrored
Langevin~\citep{hsieh2018mirrored}. But this case of non-Lipschitz
function $g$ is outside the scope of our work.  \mjwcomment{I don't
  think that this paragraph adds anything: we are not talking about
  constrained problems.  Suggest cutting}
\end{comment}


\paragraph{Basic definitions and notation:} Let us summarize some
definitions and notation used in the remainder of the paper.  The
Euclidean norm of a vector $x \in \real^d$ is denoted by
$\vecnorm{\cdot}{2}$.  We use $\mathcal L(X)$ to denote the law of a
random variable $X$. The total variation (TV) distance between two
distributions $\mathcal P$ and $\mathcal Q$ is given by
\mbox{$\DTV(\mathcal P,\mathcal Q) = \sup_{A \in \mathcal{B}
    (\real^d)} |\mathcal P(A) - \mathcal Q(A)|$.} Given an error
tolerance $\varepsilon>0$, we define the \emph{mixing time} associated
with the total variation distance of a Markov chain $X_t$ with
stationary distribution $\Pi$ as
\begin{align*}
\Tmix (\varepsilon) & \defn \arg \min_{k=1,2,\dots} \left \{
\DTV(\mathcal L(X_k),\Pi)\leq \varepsilon \right \}.\vspace{-0.21cm}
\end{align*}
The Kullback-Leibler divergence between two distributions is given by
$\kull{\mathcal P}{\mathcal Q}=\Exs_{\mathcal P}\left[\log
  \left(\frac{d\mathcal P}{d\mathcal Q}\right)\right]$.  In this
expression, the quantity $\frac{d \mathcal P}{d \mathcal Q}$ denotes
the Radon-Nikodym derivative of $\mathcal P$ with respect to $\mathcal
Q$.


\section{Metropolis-adjusted Proximal Algorithm}

We now describe the Metropolis-adjusted proximal algorithm that we
propose and study in this paper.

\subsection{Metropolis-Hastings algorithm}

The Metropolis-Hastings algorithm allows sampling in a simple and
efficient way from any target density $\pi$ known up to a
multiplicative constant.  For each $x \in \real^d$, let $p(x, \cdot)$
be a density from which it is relatively easy to sample, and for which
$p(x,y)$ is available up to a multiplicative constant independent of
$x$.  Each member of the family $\{p(x, \cdot), x \in \real^d \}$ is
known as a \emph{proposal distribution.}  The Metropolis-Hastings
algorithm associated with $p$ produces a discrete-time Markov chain
$\{X_t \}_{t \geq 0}$ in the following way: at each step a candidate
$y$ is proposed according to the density $p(x_t,\cdot)$ and is then
accepted with probability
\begin{align}
  \alpha(x_t,y) = \begin{cases}
    \min \left \{ \frac{\pi(y) p(y,x)}{ \pi(x_t) p(x,y)}, 1 \right
    \} & \mbox{if $\pi(x) p(x,y) > 0$, and} \\
    1 & \mbox{if $\pi(x) p(x,y)=0$.}
\end{cases}
\end{align}
Otherwise the candidate is rejected and the chain stays in its current
position.  The algorithm always accepts candidates $y$ when the ratio
$\pi(y)/p(x_t,y)$ is larger than the previous value
$\pi(x_t)/p(y,x_t)$ but may also accept candidates whose ratio is
smaller. The transition kernel of this Markov chain can be written as
\begin{align}
\label{eq:transition-metropolis-general}  
\transition_x(A) = \int_A p(x,y) \alpha(x,y) dy + \delta_x(A) \int (1
- \alpha(x,y)) p(x,y) dy,
\end{align}
from which it can be seen that the measure $\Pi$ is invariant for this
kernel.  Moreover, under the usual assumptions of aperiodicity and
irreducibility, the chain converges to the stationary distribution in
TV distance.  Various choices of proposal densities have been
investigated, such as the independence
sampler~\citep{mengersen1996rates}, the random walk, the Langevin
algorithm~\citep{roberts1996exponential}, or the symmetric
proposal~\citep{hastings1970monte}.


\subsection{Proximal proposal}

In this paper, we study a particular class of proposal distributions,
one designed to leverage the special structure of the density $\pi$.
\begin{definition}[Proximal sampling oracle]\label{assume-oracle}
 When queried with a vector $u \in \real^d$ and stepsize $\stepsize >
 0$, the oracle $\oracle_{\stepsize, g}(u)$ returns:
 \begin{enumerate}
 \item[(a)] a sample of a random variable $Y$ with density
   proportional to \mbox{$\exp \left( - \frac{1}{4 \stepsize}
     \vecnorm{y- u}{2}^2 - g(x)\right)$.}
\item[(b)] the value of the partition function $Z(u) \mydefn \int \exp
  \left( - \frac{1}{4 \stepsize} \vecnorm{y - u}{2}^2 - g(x)\right)
  dy$.
 \end{enumerate}
\end{definition}
The Metropolis-Hastings algorithm based on this oracle is given in
Algorithm~\ref{alg-prox}. Given the current iterate $x \in
\real^\dims$ and stepsize $\stepsize > 0$, it uses the oracle with $u
= x - \stepsize \nabla f(x)$ to draw a new sample $Y$ distributed as
\begin{align}\label{eq:proposal}
Y \sim p(x, \cdot) = {Z(x-\stepsize \nabla f (x))}^{-1} \exp \left( - \frac{1}{4 \stepsize}
\vecnorm{ \cdot - (x-\stepsize \nabla f (x))}{2}^2 - g(\cdot)\right).
\end{align}
Let $\proposal_x(\cdot)$ denote the distribution over $Y$ induced by
$p(x, \cdot)$, and let $\transition_x$ denote the transition
kernel~\eqref{eq:transition-metropolis-general}, both parameterized by
the centering point $x$.  We let
\begin{align*}
      \prej_x \mydefn \DTV (\proposal_x, \transition_x) = \Prob_{Y
        \sim \transition_x} (Y = x)
\end{align*}
be the probability that the proposal is rejected. Furthermore, let
$\transition^{succ}_x$ be the transition kernel conditionally on not
being rejected. It can be seen that $\transition^{succ}_x$ is
absolutely continuous with respect to Lebesgue measure, with density
given by
\begin{align*}
 \transition^{succ}_x (y) \propto {\min\left( p(x, y), e^{U(x) - U(y)}
   p(y, x) \right)}.
\end{align*}

\begin{algorithm}[h]
\caption{Metropolis-adjusted Proximal Gradient Langevin Dynamics}\label{alg-prox}
\begin{algorithmic}
\REQUIRE Access to $f, \nabla f, g, \oracle_{\stepsize, g} (z)$,
starting point $\xiter{0}$. Parameter $\stepsize$.  \ENSURE
Approximate sample from $p \propto e^{-U}$.  \STATE Sample $\xiter{0}
\sim \mathcal{N} (\referencepoint, \frac{1}{\smoothness + 1} I)$.
\FOR{$t = 0, 1, 2, \cdots$} \STATE Draw sample $Z \sim
\oracle_{\stepsize, g}( \xiter{t} - \stepsize \nabla f(\xiter{t}))$
using the Proximal sampling oracle.  \STATE $\xiter{t + 1}
= \begin{cases} Z & w.p.~ \min \left(1, \frac{e^{- U(z)} p(Z,
    \xiter{t})}{e^{- U(\xiter{t})} p(\xiter{t}, Z)}
  \right),\\ \xiter{t} & \text{otherwise}

    \end{cases}.$
\ENDFOR
\end{algorithmic}
\end{algorithm}
In the degenerate case when $g = 0$, the algorithm is essentially the
same as the Metropolis-adjusted Langevin (MALA) algorithm. For a
general function $g$, the one-step proposal
distribution~\eqref{eq:proposal} can be understood in the following
way: we approximate $f$ locally with a quadratic function, keep $g$
unchanged, and use this as a potential function. From the high-level
point of view, the proposal~\eqref{eq:proposal} is in many aspects
similar to proximal gradient methods. However, the analysis is not a
straightforward extension from the smooth case.

Compared to prior work, the key property of the proposed method---one
which leads to faster mixing guarantees--- is that it combines the
exact solver related to $g$ with the exact solver for the noise
part. This careful combination prevents the error in the Gaussian
noise part from being amplified by a discontinuous drift. We note that
this idea has been used to study proximal gradient descent for KL
divergence in the Wasserstein
space~\citep{bernton2018langevin,wibisono2018sampling}; notably, in
this setting, the one-step update is intractable by itself.  In sharp
contrast, it is possible to perform updates efficiently under our
one-step proposal distribution.



\section{Main results}

We now turn to our main results, beginning with our assumptions and a
statement of our main theorem in Section~\ref{SecStatement}, followed
by some examples for which drawing samples from the the proximal
proposal distribution is computationally efficient in
Section~\ref{SecExamples}.  We provide a high-level overview of the
proof in Section~\ref{SecOverview}.

\subsection{Statement of the main result}
\label{SecStatement}

We make the following assumptions:
\begin{assumption}[Logarithmic Sobolev inequality for $\pi$]
  \label{assume-log-sobolev}
The target density $\pi$ satisfies a log-Sobolev inequality with
constant $\logsobo > 0$, meaning that
\begin{align*}
\Exs_\pi[h(X) \log h(X)] & \leq  \frac{1}{2 \logsobo } \Exs_\pi
\left[ \tfrac{\vecnorm{\nabla h(X)}{2}^2}{h(X)} \right]
\end{align*}
for any Lipschitz function $h: \real^d \to \real $ with
$\Exs_\pi[h(X)] = 1$.
\end{assumption}

\begin{assumption}[Smooth function $f$]
  \label{assume-smooth}
There is a finite constant $\smoothness\geq0$ such that
\begin{align*}
\vecnorm{\nabla f (x)-\nabla f (y)}{2}\leq \smoothness\vecnorm{x-y}{2}
\quad \mbox{for all $x,y \in \real^d$.}
\end{align*}
\end{assumption}

\begin{assumption}[Distant dissipativity]
\label{assume-dissipative}
There exists a vector $x_0 \in \real^d$ and strictly positive
constants $\dissipative, \distantdissipative$ such that
\begin{align*}
  \inprod{\nabla f(x)}{x - \referencepoint} \geq
  \frac{\dissipative}{2} \vecnorm{x - x_0}{2}^2 - \distantdissipative
  \quad \mbox{for all $x \in \real^d$.}
\end{align*}
\end{assumption}
\noindent Note that this condition is a generalization of
$\dissipative$-strong convexity, which is a special case with $x_0$
corresponding to the global minimum of $f$, and $\distantdissipative =
0$.

\begin{assumption}[Convex and Lipschitz function $g$]
\label{assume-convex-lip-regularizer}
The function $g$ is convex, and there is a finite constant
$\lipschitzness>0$ such that
\begin{align*}
   |g(x)-g(y)| \leq \lipschitzness \vecnorm{x-y} \qquad \mbox{for all
     $x, y \in \real^d$.}
\end{align*}
\end{assumption}

Distributions satisfying a log-Sobolev
inequality~\citep{gross1975logarithmic} include strongly log-concave
distributions~\citep{bakry1985diffusions} as well as bounded
perturbations thereof~\citep{holley1987logarithmic}. These conditions
cover, for example, distributions that are strongly log-concave
outside a bounded region but non-log-concave inside; see the
paper~\citep{ma2018sampling} for some instances in the context of
mixture models. Note that the log-Sobolev inequality does not require
smoothness assumptions on the target density.  By the Lipschitz
condition (Assumption~\ref{assume-convex-lip-regularizer}) and
Rademacher's Theorem, the function $g$ is differentiable almost
everywhere (w.r.t.  Lebesgue measure).

For a given initial vector $\referencepoint \in \real^\dims$ and
tolerance parameter $\varepsilon > 0$, we define
\begin{align}
A_0 \mydefn \vecnorm{\nabla U(\referencepoint)}{2}, \quad \mbox{and }
\textstyle R \mydefn \frac{C}{\sqrt{\dissipative}} \; \sqrt{
  (\smoothness + \frac{1}{\dissipative}) A_0^2 + \lipschitzness^2 +
  \distantdissipative + d \log \frac{4 (\smoothness + 1)}{
    \dissipative} + \log \left(\frac{1}{\varepsilon} \right)}.
  \end{align}
With these definitions, we have
\begin{theorem}
  \label{thm-prox-mixing-rate}
Suppose that Assumptions 1--4 hold, and moreover, that $\max
(\dissipative, \smoothness, \lipschitzness, \distantdissipative,
\vecnorm{A_0}{2})$ grows at most polynomially in dimension $d$.  Then
there is a universal constant $C'$ such that Algorithm~\ref{alg-prox}
with stepsize $\stepsize = \frac{1}{2 \smoothness^2 R^2 + \smoothness
  d}$ has mixing time bounded as
\begin{align}
  \label{EqnProxMixingRate}
\Tmix(\varepsilon) \leq \frac{C' \smoothness^2}{\logsobo \dissipative}
\left \{ \left(\smoothness + \frac{1}{\dissipative} \right) A_0^2 +
\lipschitzness^2 + \distantdissipative + d \log \left( \frac{4
  (\smoothness + 1)}{\dissipative} \right) + \log(1/\varepsilon)
\right \} \log \left(\frac{d}{\varepsilon} \right).
\end{align}
\end{theorem}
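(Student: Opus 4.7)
My approach would follow a conductance-based analysis in the spirit of Lov\'asz--Simonovits and the MALA analysis of Dwivedi et al. At a high level: (i) show that almost all the mass of $\pi$ lies in a ball $\mathcal{B} = \ball(\referencepoint, R)$ of the radius stated in the theorem; (ii) establish a one-step overlap lemma for the Metropolis transition kernel on $\mathcal{B}$; and (iii) combine with the isoperimetric inequality implied by the log-Sobolev assumption to get a conductance lower bound, then invoke the standard conductance-to-mixing-time conversion.

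For the setup, I would use distant dissipativity (Assumption~\ref{assume-dissipative}) to show $\pi(\mathcal{B}^c) \leq \varepsilon/2$ for the prescribed $R$, and verify that the Gaussian initialization $\mathcal{N}(\referencepoint, \tfrac{1}{\smoothness+1} I)$ has bounded warmness $\warmstart$ with respect to $\pi$ using Assumption~\ref{assume-smooth}. The log-Sobolev inequality (Assumption~\ref{assume-log-sobolev}) then yields a Cheeger-type isoperimetric inequality with constant proportional to $\sqrt{\logsobo}$, which is what ultimately feeds into the conductance bound.

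The heart of the argument is the overlap lemma: for $x, y \in \mathcal{B}$ with $\vecnorm{x-y}{2} \leq \rho \asymp \sqrt{\stepsize}$, we want $\DTV(\transition_x, \transition_y) \leq 1/4$. This splits into (a) bounding $\DTV(\proposal_x, \proposal_y)$ between raw proposals and (b) controlling $\prej_x$ uniformly over $\mathcal{B}$. For (a), each $\proposal_x$ is an $e^{-g}$ tilt of the Gaussian $\mathcal{N}(u_x, 2\stepsize I)$ with $u_x = x - \stepsize \nabla f(x)$; smoothness gives $\vecnorm{u_x - u_y}{2} \leq (1 + \stepsize \smoothness)\vecnorm{x-y}{2}$, and a KL-plus-Pinsker estimate bounds the proposal TV distance by $O(\vecnorm{x-y}{2}/\sqrt{\stepsize})$. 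For (b), the crucial algebraic observation is that $g$ cancels exactly between target and proposal, leaving
\begin{align*}
\frac{\pi(y)\, p(y,x)}{\pi(x)\, p(x,y)} \;=\; \frac{Z(u_x)}{Z(u_y)} \, \exp\!\Bigl( f(x) - f(y) - \tfrac{1}{4\stepsize}\bigl(\vecnorm{x - u_y}{2}^2 - \vecnorm{y - u_x}{2}^2\bigr) \Bigr),
\end{align*}
so the non-smoothness of $g$ enters \emph{only} through the partition function ratio. I would bound $\bigl|\log Z(u_x) - \log Z(u_y)\bigr| \lesssim \lipschitzness \, \vecnorm{u_x - u_y}{2}/\sqrt{\stepsize}$ by interpolating along the segment $u_x + t(u_y - u_x)$ and using only Assumption~\ref{assume-convex-lip-regularizer}; the remaining $f$-terms are handled by the usual MALA Taylor expansion, provided that a typical draw $Y \sim \proposal_x$ satisfies $\vecnorm{Y - x}{2} \lesssim \sqrt{\stepsize \dims}$, which follows from the log-concavity and subgaussian tails of $\proposal_x$.

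The main obstacle I anticipate is tracking the step-size trade-offs in (b). The Taylor error from $f$ demands $\stepsize \lesssim 1/(\smoothness^2 R^2 + \smoothness \dims)$, while the partition function term demands $\stepsize \lesssim 1/\lipschitzness^2$; the former is the binding constraint because $\lipschitzness^2$ is already absorbed into $R^2$, which is precisely why the choice $\stepsize = 1/(2\smoothness^2 R^2 + \smoothness \dims)$ works. Combining the overlap lemma with isoperimetry gives conductance $\Phi \gtrsim \sqrt{\logsobo \stepsize}$, and the standard $s$-conductance mixing-time bound produces $\Tmix(\varepsilon) \lesssim \Phi^{-2} \log(\warmstart/\varepsilon) \asymp \tfrac{\smoothness^2 R^2 + \smoothness \dims}{\logsobo} \log(\warmstart/\varepsilon)$, which matches~\eqref{EqnProxMixingRate} after substituting the definition of $R^2$.
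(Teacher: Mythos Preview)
Your high-level structure (restrict to a high-probability ball, overlap lemma plus isoperimetry, conductance-to-mixing) matches the paper. The genuine gap is in the overlap lemma, specifically in your treatment of the rejection probability.

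You propose to control $\DTV(\transition_x,\transition_y)$ via the triangle inequality $\DTV(\transition_x,\transition_y)\le \DTV(\proposal_x,\proposal_y)+\prej_x+\prej_y$, which requires $\prej_x$ to be uniformly small on $\mathcal{B}$. But your route to bounding $\prej_x$ goes through the Lipschitz estimate on $G=-\log Z$: one has $\vecnorm{\nabla G}{2}\le \lipschitzness$, so for a random proposal $Y\sim\proposal_x$ the log partition ratio obeys $|G(u_x)-G(u_Y)|\le \lipschitzness\,\vecnorm{u_x-u_Y}{2}$. (Your stated bound with an extra $1/\sqrt{\stepsize}$ factor is not correct; $G$ is exactly $\lipschitzness$-Lipschitz.) Since $\vecnorm{u_x-u_Y}{2}\asymp\sqrt{\stepsize d}$ for a typical proposal, this yields only $|\log Q_1|\lesssim \lipschitzness\sqrt{\stepsize d}$, and making this $o(1)$ forces $\stepsize\ll 1/(\lipschitzness^2 d)$. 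That constraint is \emph{not} dominated by $\stepsize\asymp 1/(\smoothness^2 R^2+\smoothness d)$ in the regime $\lipschitzness\asymp\sqrt{d}$ (the canonical Laplace-prior case): you would end up with $\stepsize\asymp 1/d^2$ and hence mixing time $\asymp d^2/\logsobo$, not the claimed $d\log d$ rate. Your remark that ``$\lipschitzness^2$ is already absorbed into $R^2$'' is precisely where the accounting fails: $R^2$ contains $\lipschitzness^2/\dissipative$, but the binding constraint from the worst-case Lipschitz bound is $\lipschitzness^2 d$, which has an extra dimension factor.

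The paper handles this differently in two places. First, it does \emph{not} try to make $\prej_x$ small; instead, a symmetric coupling (their Lemma~4) exploits convexity of $G$ to show that with probability at least $1/2$ the proposal lands where $G$ has increased, giving $Q_1\ge e^{-\stepsize\lipschitzness(\lipschitzness+\vecnorm{\nabla f(x)}{2})}$ on that event. This yields only $\prej_x\le \tfrac{1}{2}+C\stepsize(\smoothness d+\lipschitzness^2+\vecnorm{\nabla f(x)}{2}^2)$---close to $1/2$, not small. Second, because the triangle inequality is then useless, the paper decomposes
\[
\DTV(\transition_{x_1},\transition_{x_2})\;\le\;\max(\prej_{x_1},\prej_{x_2})+\DTV(\transition^{succ}_{x_1},\transition^{succ}_{x_2})+|\prej_{x_1}-\prej_{x_2}|,
\]
and separately bounds the overlap of the \emph{conditional-on-acceptance} kernels and the \emph{difference} of rejection probabilities (their Lemmas~2 and~3). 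Both of these require nontrivial arguments (integration by parts on $\transition^{succ}$, differentiating the min in $A_y(x)$) that have no analogue in your plan. Without these two ingredients, your approach cannot reach the stated stepsize and mixing bound.
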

\noindent See Section~\ref{SecOverview} for a high-level overview of
the proof of Theorem~\ref{thm-prox-mixing-rate}. The full argument is
given in Section~\ref{sec:proofs}. \\

In order to interpret the mixing time bound~\eqref{EqnProxMixingRate},
it is helpful to consider some particular settings of the problem
parameters. Suppose that $f$ is $\dissipative$-strongly convex with
condition number $\kappa \mydefn \frac{\smoothness}{\dissipative}$ and
that $g$ is Lipschitz with parameter $\lipschitzness = O(\sqrt{d})$;
for example, if $g$ is chosen to a Laplacian prior (see the next
section for details), this latter Lipchitz condition will hold.
Taking $\referencepoint$ to be the minimizer\footnote{The minimizing
  vector $\referencepoint$ can be efficiently computed in
  $O(\sqrt{\kappa} \log \varepsilon^{-1})$ time independent of
  dimension using accelerated gradient methods. Moreover, for a
  strongly convex function, Assumption~\ref{assume-dissipative} holds
  with the same value of $\mu$ and $\beta = \lipschitzness^2$.}  of
$U$ ensures that $A_0 = 0$, and thus, the mixing rate scales as
$\Tmix(\varepsilon) = O \left(\kappa^2 (d \log (d/\varepsilon) +
\log^2 (1/\varepsilon)) \right)$.  Up to an extra multiple of the
condition number $\kappa$, this rate matches the best known guarantee
for the MALA algorithm in the smooth case~\citep{dwivedi2018log}.  In
contrast, the best prior work for nonsmooth problems requires
$O(d/\varepsilon^2)$ iterations and gradient
evaluations~\citep{durmus2019analysis}, so that our method leads to
exponentially faster convergence while retaining the same dimension
dependency.


\subsection{Examples of Proximal Sampling Oracles}
\label{SecExamples}

We describe here examples of functions $g$ for which the associated
proximal proposal can be implemented in a computationally efficient
manner.

\paragraph{Coordinate-separable regularizers:}

Consider a regularizer that is of the the coordinate-separable form $g
\defn \sum_{i=1}^dg_i(x_i)$.  In this case, the proposal distribution
can be factorized as
\begin{align*}
p(x,y) & = \prod_{i=1}^d p_i(x_i,y_i), \quad \mbox{where
  $p_i(x_i,y_i)= \frac{1}{Z_i} e^{-\frac{1}{4\stepsize} (y_i-x_i)^2
    -g_i(y_i)}$.}
\end{align*}
Sampling from the proposal distribution thus reduces to a collection
of $d$ univariate sampling problems.  (Note that the original problem
of sampling from $\pi$ will still be a genuinely $d$-variate problem
whenever $f$ is \emph{not} is coordinate-separable.)  Since each $p_i$
is a one-dimensional log-concave distribution, sampling can be
performed using black-box rejection style
algorithms~\citep{devroye1986non,gilks1992adaptive,devroye2012note}
and the partition function $Z_i$ can be computed using adaptive
methods for numerical integration, including numerical libraries such
as QUADPACK~\citep{piessens1983quadpack}. In this way, the overall
complexity of the oracle is still $O(d)$---the same order as the usual
gradient computation.

Our preceding discussion applies to a generic coordinate-separable
function $g$.  Closed-form expressions can be obtained for specific
functions $g$, such as in the following example.

\paragraph{$\ell_1$-regularization} The 
Bayesian Lasso~\citep{park2008bayesian} is based on the Laplace prior,
with log density $g_i(x_i)=\lambda |x_i|$.  In this case, the
partition function takes the form $ Z_i= {\sqrt{\pi
    \stepsize}}(\alpha_++\alpha_-)$ where $\alpha_\pm=e^{\pm\lambda
  z^2}\left[1\mp
  \text{erf}\left(\frac{(1\pm2\stepsize\lambda)y_i}{2\sqrt{\stepsize}}\right)\right]$.
Here $\text{erf}$ denotes the Gaussian error function, and the random
variable $Y_i$ is drawn according to the mixture distribution
\begin{align*}
Y_i\sim \alpha_+ \mathcal{TN}_{(-\infty,0] }((1+2\stepsize\lambda)y_i,
  2\stepsize) + \alpha_-
  \mathcal{TN}_{[0,\infty)}((1-2\stepsize\lambda)y_i, 2\stepsize),
\end{align*}
where $\mathcal{TM}_{[a,b]}(\mu,\sigma^2)$ is the truncated normal
distribution. Drawing samples of $Y$ can be performed using fast
sampling methods for the truncated
Gaussian~\citep{Cho11,botev2017normal}. The Laplace prior can also be
combined with a Gaussian prior to obtain the Bayesian
Elastic-net~\citep{LiLin1}, to which our methodology applies in an
analogous way.

\paragraph{Group Lasso} The group Lasso is a generalization of the
Lasso method where the features are grouped into disjoint blocks
$\{x_1, \ldots, x_G\}$. The penalty considered is $\sum_{j=1}^G
\vecnorm{x_j}{2}$. It is able to do variable selection at the group
level and corresponds to Multi-Laplacian
priors~\citep{raman2009bayesian}. The proximal sampling oracle can be
decomposed into product measure of groups, with each group sampling
from density $p(x, \cdot) \propto
\exp(-\frac{\vecnorm{\cdot-x}{2}^2}{4\stepsize}-\vecnorm{\cdot}{2})$. For
any vector $y \in \real^d$, let $y = a x + z$ with $z \perp x$ be its
orthogonal decomposition. Note that $p(x, \cdot)$ is symmetric around
axis $x$, so let $r = \vecnorm{z}{2}$. We can first solve the
two-dimensional sampling problem\footnote{This can be done by
  inverting the conditional and marginal CDF.} with density $q (r, a)
= r^{d - 2} \exp(-\frac{(a - 1)^2 \vecnorm{x}{2}^2 + r^2}{4\stepsize}
- \sqrt{r^2 + a^2})$, then draw a independent unit vector $ v \sim
\mathcal{U}(\sphere^{d - 2})$ and finally construct the proposal by $a
x + r v$.


\subsection{Proof Overview}
\label{SecOverview}

We now provide a high-level overview of the main steps involved in the
proof of Theorem~\ref{thm-prox-mixing-rate}.  First of all, the
Metropolis filter automatically guarantees that the Markov chain
defined by the kernel $\transition$ has $\Pi$ as its stationary
distribution.  By Assumption~\ref{assume-log-sobolev}, the underlying
density satisfies a Gaussian isoperimetric
inequality~\citep{bakry1996levy,bobkov1999isoperimetric}. Using the
known framework for conductance and mixing of Markov
chains~\citep{goel2006mixing,kannan2006blocking,chen2019hmc}---to be
reviewed in Section~\ref{app:markov}---we only need to show the
following two facts hold over a sufficiently large ball $\Omega
\subseteq \real^d$ enclosing most of the mass of $\Pi$:
\begin{itemize}
\item[\textbf{Fact 1:}] Rejection happens only with constant
  probability: namely, there is a universal constant $c \in [0,1)$
    such that $\DTV (\proposal_x, \transition_x) \leq c$ \mbox{for all
      $x \in \Omega$.}
\item[\textbf{Fact 2:}] The transition kernels $\transition$ at two
  neighboring points are close: namely, there exist positive scalars
  $\omega, \Delta > 0$ such that for $x, y$ with $\vecnorm{x - y}{2}
  \leq \Delta$, we have $\DTV \left(\transition_x,
  \transition_y\right) \leq 1 - \omega$.
\end{itemize}

For an initial distribution $\Pi_0$ with an initial condition
$\warmstart \mydefn \sup_{x \in \Omega} \frac{\pi_0 (x)}{\pi (x)}$,
the mixing rate can then be upper bounded as
\begin{align*}
    \Tmix (\varepsilon) \lesssim \log \warmstart + \frac{1}{\omega^2
      \logsobo \Delta^2} \left( \log \varepsilon^{-1} + \log \log
    \warmstart \right).
\end{align*}
See Section~\ref{app:proofmainprop} for the details of this argument.
We note that an initial vector $x_0$ for which $\warmstart = e^{O(d)}$
can be achieved by Gaussian initialization (see
Section~\ref{sec:warm}).  Let us now provide high-level sketches of
the proofs of Facts 1 and 2, respectively.


\subsubsection{Fact 1: Analysis of the Rejection probability}

In Section~\ref{app:rejproba}, we establish the following bound on the
rejection probability:
\begin{lemma}
\label{lemma-acc-rej}
Under Assumptions~\ref{assume-smooth}, \ref{assume-dissipative},
and~\ref{assume-convex-lip-regularizer}, there is a universal positive
constant $C$ such that for any stepsize $\stepsize \in \left(0,
\frac{1}{16 (\smoothness + 1)} \right)$ and for any $x \in \real^d$,
the TV distance is bounded as
\begin{align*}
  \DTV\left( \proposal_x, \transition_x \right) \leq \frac{1}{2} + C
  \stepsize \left( \smoothness d + \lipschitzness^2+ \vecnorm{\nabla
    f(x)}{2}^2 \right).
\end{align*}
\end{lemma}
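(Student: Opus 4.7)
My plan is to bound $\prej_x = \Exs_{Y \sim p(x, \cdot)}[1 - \alpha(x, Y)]$ by first decomposing the Metropolis log-ratio $\log r(x, Y) := \log \tfrac{e^{-U(Y)}\, p(Y, x)}{e^{-U(x)}\, p(x, Y)}$ into three pieces that isolate the smooth gradient drift, a quadratic gradient-norm correction, and the proximal partition function, respectively. The crucial initial observation is that, because the proposal density $p(x, y) \propto \exp(-\tfrac{1}{4\stepsize}\vecnorm{y - \mu_x}{2}^2 - g(y))$ with $\mu_x := x - \stepsize\nabla f(x)$ already carries the non-smooth term $g$, the $g(x)$ and $g(Y)$ contributions in $U$ cancel against matching terms in $\log p(Y, x) - \log p(x, Y)$. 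Writing $\mu_Y := Y - \stepsize \nabla f(Y)$, a direct computation gives $\log r = T_1 + T_2 + T_3$, with $T_1 := f(x) - f(Y) + \tfrac{1}{2}\inprod{Y-x}{\nabla f(x) + \nabla f(Y)}$, $T_2 := \tfrac{\stepsize}{4}(\vecnorm{\nabla f(x)}{2}^2 - \vecnorm{\nabla f(Y)}{2}^2)$, and $T_3 := \log Z(\mu_x) - \log Z(\mu_Y)$.

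Next, I would establish pointwise and moment bounds on each piece. By $L$-smoothness of $f$, $|T_1| \leq L\vecnorm{Y - x}{2}^2/2$ (the trapezoidal-rule error), and $|T_2| \leq \tfrac{\stepsize L}{2}\vecnorm{Y-x}{2}(\vecnorm{\nabla f(x)}{2} + L\vecnorm{Y-x}{2})$. For $T_3$, the Lipschitzness of $g$ transfers to $u \mapsto \log Z(u)$ via translation-equivariance of the defining integral, yielding $|T_3| \leq \lipschitzness(1 + \stepsize L)\vecnorm{Y-x}{2}$. To control these in expectation, I would exploit that $p(x, \cdot)$ is $(2\stepsize)^{-1}$-strongly log-concave, since its negative-log Hessian is $(2\stepsize)^{-1}I + \nabla^2 g \succeq (2\stepsize)^{-1}I$ by convexity of $g$. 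Brascamp--Lieb gives $\mathrm{Cov}(Y) \preceq 2\stepsize I$, while integration by parts against $\nabla V_x = (y-\mu_x)/(2\stepsize) + \nabla g(y)$ yields $\vecnorm{\Exs Y - \mu_x}{2} \leq 2\stepsize\lipschitzness$. Combining, $\Exs\vecnorm{Y - x}{2}^2 \lesssim \stepsize d + \stepsize^2(\lipschitzness^2 + \vecnorm{\nabla f(x)}{2}^2)$, with Gaussian-type tail control via a standard Herbst argument.

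The final step is a truncation. Introduce the event $\event := \{\vecnorm{Y - x}{2} \leq R_0\}$ with $R_0 = C_0(\sqrt{\stepsize d} + \stepsize\lipschitzness + \stepsize\vecnorm{\nabla f(x)}{2})$. On $\event$, the pointwise bounds together with the stepsize constraint $\stepsize L \leq 1/16$ and AM--GM absorbing of cross-terms gives $|\log r| \leq \delta := C'\stepsize(\smoothness d + \lipschitzness^2 + \vecnorm{\nabla f(x)}{2}^2)$, hence $\alpha(x, Y) \geq e^{-\delta} \geq 1 - \delta$. Off $\event$, the crude bound $1 - \alpha \leq 1$ is enough. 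Strong log-concavity (plus a sign/median control for the linear part of $T_3$) forces $\Prob(\event^c) \leq \tfrac{1}{2}$ for a suitable constant $C_0$, and so $\prej_x \leq \Prob(\event^c) + \delta \leq \tfrac{1}{2} + \delta$.

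The main obstacle is the $T_3$ term. Its crude pointwise bound is linear in $\vecnorm{Y-x}{2}$ with coefficient $\lipschitzness$, and since $\Exs\vecnorm{Y-x}{2} = \Theta(\sqrt{\stepsize d})$ the naive bound $\Exs|T_3| = O(\lipschitzness\sqrt{\stepsize d})$ is only of order $\sqrt{\stepsize}$ rather than the desired $O(\stepsize)$. The additive $\tfrac{1}{2}$ in the claim is precisely what sidesteps this: rather than controlling $\Exs|T_3|$ down to order $\stepsize$, one uses the one-sided bound $T_3 \geq -\inprod{\nabla\log Z(\mu_x)}{\mu_Y - \mu_x}$ (a consequence of the concavity of $\log Z$, itself implied by Brascamp--Lieb), and pays the $\tfrac{1}{2}$ for the tail event on which the signed linear fluctuation is unfavorable.
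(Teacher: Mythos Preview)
Your decomposition $\log r = T_1 + T_2 + T_3$ and the handling of $T_1,T_2$ via smoothness, together with the second-moment bound $\Exs\vecnorm{Y-x}{2}^2 \lesssim \stepsize d + \stepsize^2(\lipschitzness^2+\vecnorm{\nabla f(x)}{2}^2)$ from strong log-concavity of the proposal, are all correct and match the paper's treatment of what it calls $Q_2$. The difficulty, as you correctly isolate, is entirely in $T_3$.

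However, your argument for $T_3$ has a genuine gap. First, the claim in your truncation step that ``on $\event$, $|\log r|\le\delta$'' is false: on $\event=\{\vecnorm{Y-x}{2}\le R_0\}$ the Lipschitz bound gives only $|T_3|\lesssim \lipschitzness\sqrt{\stepsize d}$, which is $O(\sqrt{\stepsize})$ and cannot be absorbed into $\delta=O(\stepsize)$ by AM--GM. You notice this and pivot in the last paragraph to the one-sided concavity bound $T_3 \ge -\inprod{v}{\mu_Y-\mu_x}$ with $v=\nabla\log Z(\mu_x)$, $\vecnorm{v}{2}\le\lipschitzness$, followed by a ``sign/median control.'' But this second step does not go through: the one-dimensional projection $\inprod{v}{Y-\mu_x}$ has mean $2\stepsize\vecnorm{v}{2}^2=O(\stepsize)$ yet standard deviation $O(\sqrt{\stepsize}\,\lipschitzness)$, and for a generic log-concave law the median can sit $\Theta(\sigma)$ away from the mean (think of a shifted exponential). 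So you can only assert $\Prob\bigl(\inprod{v}{Y-\mu_x}\le O(\stepsize)\bigr)\ge 1/e$, not $\ge 1/2$, and you do not recover the constant $\tfrac12$ in the lemma. (Incidentally, concavity of $\log Z$ follows from Pr\'ekopa--Leindler, not Brascamp--Lieb.)

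The paper supplies the missing ingredient with a coupling. It constructs a pair $(X_1,X_2)$, each marginally distributed as $\oracle_{\stepsize,g}(\mu_x)$, via two Langevin diffusions driven by $B_t$ and $-B_t$ respectively; the noise cancels in the sum, so the midpoint satisfies $\bigl\Vert\tfrac{X_1+X_2}{2}-\mu_x\bigr\Vert\le \stepsize\lipschitzness$ almost surely. Convexity of $G=-\log Z$ then gives $\max\bigl(G(X_1),G(X_2)\bigr)\ge G\bigl(\tfrac{X_1+X_2}{2}\bigr)\ge G(\mu_x)-\stepsize\lipschitzness^2$, and since $X_1\stackrel{d}{=}X_2$ this forces $\Prob\bigl(G(Y)\ge G(\mu_x)-\stepsize\lipschitzness^2\bigr)\ge\tfrac12$. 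Translated to your language, the coupling establishes precisely the approximate reflection symmetry you would need to justify the median claim; without it, the argument stalls at a worse constant.
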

\noindent The core of the proof involves upper bounding the integral
\begin{align*}
\int p(x, z) \max \left(0, 1\!-\!\frac{e^{- U(z)} p(z,
  x)}{e^{- U(x)} p(x, z)} \right) dz.
\end{align*}
A straightforward calculation yields
\begin{align*}
\frac{e^{- U(z)} p(z, x)}{e^{- U(x)} p(x, z)} = \frac{Z(x)}{Z(z)} \exp
\left( f(x) - f(z) - \frac{1}{4 \stepsize} \vecnorm{x - z + \stepsize
  \nabla f(z)}{2}^2 + \frac{1}{4 \stepsize} \vecnorm{z - x + \stepsize
  \nabla f(x)}{2}^2 \right).
\end{align*}

Note that the terms $g(x)$ and $g(z)$ in the exponent cancel out when
comparing the proposal distribution with the target
density. Completing the proof then requires two steps: (a) lower
bounding the ratio $\frac{Z(x)}{Z(z)}$ of partition functions and (b)
lower bounding the exponential factor involving $f$ and its gradients.

At a high level, the proof of step (b) is relatively routine, similar
in spirit to analysis due to Dwivedi et al.~\citep{dwivedi2018log}. We
decompose the exponent into two error terms in first-order Taylor
expansion $f(z) - f(x) - \inprod{z - x}{\nabla f(x)}$ and $f(x) - f(z)
- \inprod{x - z}{\nabla f(z)}$, and a term of the form
$\vecnorm{\nabla f(x)}{2}^2 - \vecnorm{\nabla f(z)}{2}^2$. If the
distance from $x$ to the proposal $z$ can be controlled, we can easily
upper bound the three terms by Assumption~\ref{assume-smooth} alone,
without using convexity.

Proving the claim in step (a), however, is highly non-trivial.  The
partition function $Z$ can be seen as a smoothed version of the
function $e^{-g}$. Intuitively, a sample $u$ drawn from the proposal
distribution centered at $x$ will be dispersed around $x$, with
roughly half of the directions increasing the value of the partition
function. So with probability approximately one half, we expect that
$Z(u)$ is not much larger than $Z(x)$.


\subsubsection{Fact 2: Overlap bound for transitions kernels}

Note that the rejection probability bound proved in
Lemma~\ref{lemma-acc-rej} can only guarantee that the proposal point
is accepted with probability arbitrarily close to
$\tfrac{1}{2}$. Therefore, in contrast to the past work of Dwivedi et
al.~\citep{dwivedi2018log} on MALA, in order to obtain non-trivial
bounds on $\DTV (\transition_{x_1}, \transition_{x_2})$, it no longer
suffices to control $\DTV (\proposal_{x_1}, \proposal_{x_2})$ and
apply the triangle inequality.

Instead, we directly bound the total variation distance between the
transition kernels at two neighboring points. In particular, via a
direct calculation, we show that
\begin{align}
\label{eq:tv-decomposition}  
  \DTV \left( \transition_{x_1}, \transition_{x_2} \right) \leq \max
  (\prej_{x_1}, \prej_{x_2}) + \DTV \left( \transition^{succ}_{x_1},
  \transition^{succ}_{x_2} \right) + |\prej_{x_1} - \prej_{x_2}|.
\end{align}
See Section~\ref{app:overlap} for the proof of this bound.

Overall, the bound for $\DTV \left( \transition_{x_1},
\transition_{x_2} \right)$ consists of three parts: the first term
directly comes from Lemma~\ref{lemma-acc-rej}; the second term is the
TV distance between the kernels conditioned on successful transitions;
and the last term is the difference between rejection
probabilities. Upper bounds for the latter two terms are proven in
Lemma~\ref{lemma-overlap-succ-transition} and
Lemma~\ref{lemma:diff-rejection-prob}, respectively, which we state
here.

\begin{lemma}
  \label{lemma-overlap-succ-transition}
Suppose that Assumptions~\ref{assume-smooth},~\ref{assume-dissipative}
and~\ref{assume-convex-lip-regularizer} hold, and consider a step size
$\stepsize \in \left(0, \frac{1}{16(\smoothness + 1)}\right) $.  Then
for any $x_1, x_2 \in \real^\dims$, we have
\begin{align}
  \DTV \left( \transition^{succ}_{x_1}, \transition^{succ}_{x_2}
  \right) \leq 5 \sqrt{ \vecnorm{x_1 - x_2}{2} \cdot \left(
    \vecnorm{\nabla f(x_1)}{2} + \vecnorm{\nabla f(x_2)}{2} +
    \lipschitzness + \smoothness \sqrt{\stepsize d} \right)} + 2
  \frac{\vecnorm{x_1 - x_2}{2}}{\sqrt{\stepsize}}.
\end{align}
\end{lemma}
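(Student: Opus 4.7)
My plan is to split the TV distance between the accepted-transition kernels into two ingredients: the TV distance between the proposal distributions $\proposal_{x_1}$ and $\proposal_{x_2}$ (which will account for the $2\vecnorm{x_1-x_2}{2}/\sqrt{\stepsize}$ term), and an expected difference in acceptance probabilities under $\proposal_{x_1}$ (which will produce the square-root term). Writing $\transition^{succ}_{x_i}(y) = h_{x_i}(y)/\gamma_i$ with $h_x(y) := p(x,y)\alpha(x,y)$ and $\gamma_i := 1 - \prej_{x_i}$, the identity $h_1/\gamma_1 - h_2/\gamma_2 = (h_1 - h_2)/\gamma_2 + h_1(\gamma_2 - \gamma_1)/(\gamma_1\gamma_2)$ combined with $|\gamma_1 - \gamma_2| \leq \int|h_1 - h_2|\,dy$ yields $\DTV(\transition^{succ}_{x_1}, \transition^{succ}_{x_2}) \leq (1/\gamma_2)\int|h_{x_1} - h_{x_2}|\,dy$. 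Lemma~\ref{lemma-acc-rej} together with the step-size condition ensures $\gamma_i \geq 1/4$, so the prefactor is a universal constant. A further decomposition $h_{x_1} - h_{x_2} = (p(x_1,\cdot) - p(x_2,\cdot))\alpha(x_2,\cdot) + p(x_1,\cdot)(\alpha(x_1,\cdot) - \alpha(x_2,\cdot))$ reduces the problem to bounding $2\DTV(\proposal_{x_1}, \proposal_{x_2}) + \Exs_{Y\sim \proposal_{x_1}}[|\alpha(x_1, Y) - \alpha(x_2, Y)|]$.

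For the TV between proposals, each density is proportional to $\exp(-\vecnorm{y - u_{x_i}}{2}^2/(4\stepsize) - g(y))$ with $u_{x_i} = x_i - \stepsize\nabla f(x_i)$, so the two are exponential tilts of each other with an affine log-ratio in $y$. Both are strongly log-concave with curvature $1/(2\stepsize)$, so I would use a sub-Gaussian MGF estimate on $\log(\proposal_{x_1}/\proposal_{x_2})$ followed by Pinsker to obtain $\DTV(\proposal_{x_1}, \proposal_{x_2}) \lesssim \vecnorm{u_{x_1} - u_{x_2}}{2}/\sqrt{\stepsize}$. The bound $\vecnorm{u_{x_1} - u_{x_2}}{2} \leq (1 + \stepsize \smoothness)\vecnorm{x_1 - x_2}{2}$ from smoothness and $\stepsize\smoothness \leq 1$ then produces the second term in the lemma.

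For the acceptance difference, writing $\alpha(x, y) = \min(1, e^{R(x,y)})$ with $R(x, y) := U(x) - U(y) + \log p(y, x) - \log p(x, y)$ and using that $t\mapsto \min(1, e^t)$ is $1$-Lipschitz gives $|\alpha(x_1, y) - \alpha(x_2, y)| \leq \min(1, |R(x_1, y) - R(x_2, y)|)$. The key trick is $\min(1, |t|) \leq \sqrt{|t|}$ combined with Jensen's inequality, yielding $\Exs[|\alpha(x_1, Y) - \alpha(x_2, Y)|] \leq \sqrt{\Exs_{\proposal_{x_1}}|R(x_1, Y) - R(x_2, Y)|}$. This is precisely what produces the square root in the final estimate.

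The remaining and most delicate step is to bound $\Exs_{\proposal_{x_1}}|R(x_1, Y) - R(x_2, Y)|$ by $C\vecnorm{x_1 - x_2}{2}\,(\vecnorm{\nabla f(x_1)}{2} + \vecnorm{\nabla f(x_2)}{2} + \lipschitzness + \smoothness\sqrt{\stepsize d})$. Substituting $\log p(x, y) = -\vecnorm{y - u_x}{2}^2/(4\stepsize) - g(y) - \log Z(u_x)$, the $g$-contributions cancel exactly. The naive squared-norm expansion leaves apparent terms like $\langle x_i, \nabla f(x_i)\rangle$ that are not controlled by the target quantities; the crucial algebraic observation is that after adding and subtracting appropriate inner products these regroup into (i) a trapezoidal remainder $f(x_1) - f(x_2) - \tfrac{1}{2}\langle x_1 - x_2, \nabla f(x_1) + \nabla f(x_2)\rangle$, of order $\smoothness\vecnorm{x_1 - x_2}{2}^2$ by smoothness; (ii) $y$-dependent pieces $\tfrac12\langle y - x_1, \nabla f(x_1) - \nabla f(x_2)\rangle$ and $-\tfrac12\langle \nabla f(y) - \nabla f(x_1), x_1 - x_2\rangle$, each at most $\tfrac{\smoothness}{2}\vecnorm{y - x_1}{2}\vecnorm{x_1 - x_2}{2}$; (iii) a residual $\tfrac{\stepsize}{4}(\vecnorm{\nabla f(x_1)}{2}^2 - \vecnorm{\nabla f(x_2)}{2}^2)$ handled by smoothness; and (iv) the partition-function difference $\log Z(u_{x_1}) - \log Z(u_{x_2})$, which I would control by differentiating under the integral, $\nabla_u\log Z(u) = \Exs_{P_u}[Y - u]/(2\stepsize)$, followed by an integration-by-parts identity in the strongly log-concave $P_u$ together with the subgradient bound $\vecnorm{\nabla g}{2} \leq \lipschitzness$, giving $\vecnorm{\nabla_u \log Z}{2} \leq \lipschitzness$ and hence a contribution bounded by $\lipschitzness\vecnorm{u_{x_1} - u_{x_2}}{2}$. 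Using $\Exs_{\proposal_{x_1}}\vecnorm{Y - u_{x_1}}{2} \lesssim \sqrt{\stepsize d}$ from strong log-concavity, the pieces aggregate into the required expectation bound. The main obstacle is precisely this algebraic regrouping: making the $\vecnorm{x_i}{2}$-dependent pieces cancel so that the result depends only on gradient norms, $\lipschitzness$, and $\smoothness\sqrt{\stepsize d}$.
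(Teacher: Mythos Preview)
Your overall strategy is different from the paper's, and most of it is sound, but there is one genuine gap.

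\textbf{The gap.} You claim that Lemma~\ref{lemma-acc-rej} together with the step-size condition $\stepsize<\tfrac{1}{16(\smoothness+1)}$ gives $\gamma_i=1-\prej_{x_i}\geq\tfrac14$. It does not. Lemma~\ref{lemma-acc-rej} says only that
\[
\prej_{x_i}\;\leq\;\tfrac12+C\stepsize\bigl(\smoothness d+\lipschitzness^2+\vecnorm{\nabla f(x_i)}{2}^2\bigr),
\]
so $\gamma_i\geq\tfrac14$ would require $\stepsize\lesssim\bigl(\smoothness d+\lipschitzness^2+\vecnorm{\nabla f(x_i)}{2}^2\bigr)^{-1}$, which is \emph{not} implied by the lemma's hypotheses (those quantities are unconstrained). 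Since your bound carries a prefactor $1/\gamma_2$, the argument as written does not yield the stated inequality for all $x_1,x_2$ and all $\stepsize<\tfrac{1}{16(\smoothness+1)}$. In the downstream Proposition~\ref{corr-overlap-transition} the step size is indeed taken that small, so your approach would suffice there; but it does not prove Lemma~\ref{lemma-overlap-succ-transition} as stated.

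\textbf{Comparison with the paper's route.} The paper avoids any lower bound on $\gamma_i$ by going through Pinsker and the KL divergence. Writing $-\log\transition^{succ}_x(y)=\max\bigl(H_1(x,y),H_2(x,y)\bigr)+C(x)$ with $C(x)=\log\gamma_x$, it assumes without loss of generality $C(x_1)\leq C(x_2)$ and simply drops the nonpositive term $C(x_1)-C(x_2)$; this is the device that makes the normalization disappear for free. The cost is that the leading KL term $T_1=\tfrac{1}{2\stepsize}\,\Exs_{\transition^{succ}_{x_2}}\inprod{x_1-x_2}{x_2-Y}$ still carries a $1/\stepsize$, which the paper then kills using Lemma~\ref{lemma-transition-succ-exp}, an integration-by-parts identity showing $\vecnorm{\Exs_{\transition^{succ}_x}Y-x}{2}=O(\stepsize)$. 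Your path is in one respect cleaner: in $R(x_1,y)-R(x_2,y)$ the two $\tfrac{1}{4\stepsize}\inprod{x_1-x_2}{\,\cdot\,-2y}$ pieces coming from $\log p(y,x)$ and $\log p(x,y)$ cancel \emph{algebraically}, so no analogue of Lemma~\ref{lemma-transition-succ-exp} is needed and the square root arises from $\min(1,|t|)\leq\sqrt{|t|}$ rather than Pinsker. The remaining moment input you need, $\Exs_{\proposal_{x_1}}\vecnorm{Y-x_1}{2}\lesssim\sqrt{\stepsize d}+\stepsize(\vecnorm{\nabla f(x_1)}{2}+\lipschitzness)$, is exactly Lemma~\ref{corr-coarse-control-proposal}. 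So apart from the normalization issue, your algebra is correct and the regrouping you describe does work; but to match the lemma's hypotheses you would need to replace the $1/\gamma_2$ decomposition by something closer to the paper's KL-with-WLOG trick.
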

\begin{restatable}{lemma}{diffrejectionprob}
\label{lemma:diff-rejection-prob}
Suppose that Assumptions~\ref{assume-smooth}
and~\ref{assume-convex-lip-regularizer} hold, and consider a stepsize
$\stepsize \in \left(0, \frac{1}{16(\smoothness + 1)}\right) $.  Then
there is a universal constant $C > 0$ such that for any $x_1, x_2 \in
\real^\dims$, we have
\begin{align}
  |\prej_{x_1} - \prej_{x_2}|\leq 2 \frac{\vecnorm{x_1 -
      x_2}{2}}{\sqrt{\stepsize}} + C \vecnorm{x_1 - x_2}{2} \left(
  \sup_{0 \leq \lambda \leq 1}\vecnorm{\nabla f((1 - \lambda) x_1 +
    \lambda x_2)}{2} + \lipschitzness + \smoothness \sqrt{\stepsize d}
  \right).
\end{align}
\end{restatable}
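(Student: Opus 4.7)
The plan is to decompose $|\prej_{x_1} - \prej_{x_2}|$ into a proposal-TV part and an acceptance-variation part, and then bound each separately. Writing $\prej_x = \int p(x, z)(1 - \alpha(x, z))\,dz$ with $\alpha(x, z) = \min(1, r(x, z))$, and adding and subtracting $\int p(x_2, z)(1 - \alpha(x_1, z))\,dz$, the triangle inequality gives
\begin{align*}
|\prej_{x_1} - \prej_{x_2}| \leq \underbrace{\int |p(x_1, z) - p(x_2, z)|\,dz}_{(\mathrm{I})} + \underbrace{\int p(x_2, z)\,|\alpha(x_1, z) - \alpha(x_2, z)|\,dz}_{(\mathrm{II})}.
\end{align*}
Term $(\mathrm{I}) = 2\DTV(\proposal_{x_1}, \proposal_{x_2})$ will produce the $2\|x_1 - x_2\|_2/\sqrt{\stepsize}$ summand, while $(\mathrm{II})$ will yield the rest.

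For $(\mathrm{I})$, note that $p(x_i, \cdot) \propto \exp(-\tfrac{1}{4\stepsize}\|\cdot - u(x_i)\|_2^2)\,e^{-g(\cdot)}$ with $u(x) \defn x - \stepsize\nabla f(x)$; the two proposals share the log-concave tilt $e^{-g}$ and differ only in their Gaussian centers. Since $p_u$ is strongly log-concave with parameter $1/(2\stepsize)$, a direct KL computation combined with Pinsker's inequality gives $\DTV(p_{u_1}, p_{u_2}) \leq \|u_1 - u_2\|_2/(2\sqrt{\stepsize})$. Together with $\|u(x_1) - u(x_2)\|_2 \leq (1 + \stepsize\smoothness)\|x_1 - x_2\|_2 \leq 2\|x_1 - x_2\|_2$, which uses the step-size bound $\stepsize \leq 1/(16(\smoothness+1))$, this yields $(\mathrm{I}) \leq 2\|x_1 - x_2\|_2/\sqrt{\stepsize}$.

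For $(\mathrm{II})$, the elementary inequality $|\min(1, e^a) - \min(1, e^b)| \leq |a-b|$ combined with the mean value theorem gives $|\alpha(x_1,z) - \alpha(x_2,z)| \leq \|x_1 - x_2\|_2 \sup_{\lambda \in [0,1]}\|\nabla_x \log r(x_\lambda, z)\|_2$. Starting from the explicit expression
\begin{align*}
\log r(x, z) = f(x) - f(z) + \tfrac{1}{4\stepsize}\bigl[\|z - u(x)\|_2^2 - \|x - u(z)\|_2^2\bigr] + \log Z(u(x)) - \log Z(u(z)),
\end{align*}
in which the $g(x)$ and $g(z)$ terms cancel, I would differentiate in $x$ using the chain rule and the Stein identity $\nabla_u \log Z(u) = (\Exs_{p_u}[Y] - u)/(2\stepsize)$. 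Integration by parts with the Lipschitz bound on $g$ (valid almost everywhere by Rademacher's theorem) yields $\|\Exs_{p_u}[Y] - u\|_2 \leq 2\stepsize\lipschitzness$, and collecting terms produces the pointwise bound
\begin{align*}
\|\nabla_x \log r(x, z)\|_2 \leq \smoothness\|z - x\|_2 + (1 + \stepsize\smoothness)\lipschitzness + \tfrac{\stepsize\smoothness}{2}\|\nabla f(x)\|_2.
\end{align*}
Integrating the supremum of this bound over $\lambda$ against $p(x_2, z)$, the Brascamp--Lieb inequality gives $\cov(p_{u(x_2)}) \preceq 2\stepsize I$ and hence $\Exs_{p_{x_2}}\|Z - x_2\|_2 \leq \sqrt{2\stepsize d} + O(\stepsize)(\lipschitzness + \|\nabla f(x_2)\|_2)$; absorbing the $\stepsize\smoothness$ factors via the step-size bound then delivers $(\mathrm{II}) \leq C\|x_1 - x_2\|_2(\sup_\lambda \|\nabla f(x_\lambda)\|_2 + \lipschitzness + \smoothness\sqrt{\stepsize d})$. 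The main obstacle is controlling the implicit $x$-dependence of the partition function $\log Z(u(x))$; this is exactly where Stein's identity combined with the Lipschitz assumption on $g$ (Assumption~\ref{assume-convex-lip-regularizer}) is indispensable, since otherwise the derivative would be an unwieldy average of the (possibly non-smooth) subgradient of $g$.
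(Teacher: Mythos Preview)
Your proof is correct and takes a genuinely different---and in one respect cleaner---route than the paper's. Both arguments start by splitting off the proposal total-variation term $(\mathrm{I}) = 2\DTV(\proposal_{x_1},\proposal_{x_2})$ and bounding it via Pinsker and the $\tfrac{1}{2\stepsize}$-strong log-concavity of the proposal, exactly as you do. The difference lies in the second piece. The paper does \emph{not} separate $p(x,\cdot)$ from the acceptance function; instead it writes $\prej_x = \int \max\big(0,\,p(x,y)-e^{U(x)-U(y)}p(y,x)\big)\,dy$ and uses the elementary inequality $\max(0,a-b)-\max(0,c-d)\leq |a-c|+|a\wedge b-c\wedge d|$, which leads to controlling $\int |A_y(x_1)-A_y(x_2)|\,dy$ with $A_y(x)=\min\big(p(x,y),\,e^{U(x)-U(y)}p(y,x)\big)$. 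Differentiating $A_y$ in $x$ produces, on both branches of the $\min$, a common term $\tfrac{y-x}{2\stepsize}$ of order $1/\stepsize$ that does \emph{not} cancel, and the paper must then center on the proposal mean and invoke a one-directional variance bound (their Corollary~\ref{cor-proposal-bias} and Lemma~\ref{lemma-var-one-direction}) to tame it; this is the origin of the $\vecnorm{x_1-x_2}{2}/\sqrt{\stepsize}$ contribution from $I_2$. Your decomposition instead differentiates $\log r(x,z)$, where the two quadratic exponents $\pm\tfrac{1}{4\stepsize}\|z-u(x)\|^2$ and $\mp\tfrac{1}{4\stepsize}\|x-u(z)\|^2$ contribute opposite $\tfrac{x-z}{2\stepsize}$ terms that cancel exactly, leaving only the $O(\smoothness\|x-z\|)$, $O(\stepsize\smoothness\|\nabla f(x)\|)$, and $O(\lipschitzness)$ pieces you record. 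This is a real simplification: you avoid the centering/variance machinery entirely for term~$(\mathrm{II})$.

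One small point you gloss over: taking the supremum over $\lambda$ of $\smoothness\|x_\lambda-z\|$ and integrating against $p(x_2,\cdot)$ produces, in addition to $\smoothness\,\Exs\|Z-x_2\|$, a residual $\smoothness\|x_1-x_2\|^2$ that does not fit directly into the stated bound. This is harmless, since $|\prej_{x_1}-\prej_{x_2}|\leq 1$ trivially whenever $\|x_1-x_2\|\geq \sqrt{\stepsize}/2$, and in the complementary regime $\smoothness\|x_1-x_2\|^2 \leq \tfrac{1}{16}\|x_1-x_2\|/\sqrt{\stepsize}$ by the stepsize condition; but you should say so explicitly.
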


By Lemma~\ref{lemma-acc-rej}, the choice of step size parameter
$\stepsize = O(1 / d)$ suffices to make the first term in
equation~\eqref{eq:tv-decomposition} less than $\frac{7}{10}$, The
final two terms in equation~\eqref{eq:tv-decomposition} can be made
less than $\tfrac{1}{10}$ using
Lemma~\ref{lemma-overlap-succ-transition} and
Lemma~\ref{lemma:diff-rejection-prob}, with $\vecnorm{x_1 - x_2}{2}
\leq \frac{1}{\sqrt{\stepsize}}$. Putting together these guarantees
ensures that $\DTV (\transition_{x_1}, \transition_{x_2}) \leq
\frac{9}{10}$. See Proposition~\ref{corr-overlap-transition} in
Section~\ref{app:overlap} for a precise statement of this claim.


\section{Proofs}
\label{sec:proofs}

In this section, we provide the full details of our proof.
Section~\ref{app:proofth1} is devoted to the proof of
Theorem~\ref{thm-prox-mixing-rate} given necessary lemmas controlling the rejection probability and overlap bounds. We prove the rejection probability
bound stated in Lemma~\ref{lemma-acc-rej} in
Section~\ref{app:rejproba}, and the overlap bound transition kernels
of Lemma~\ref{lemma-overlap-succ-transition} and
Lemma~\ref{lemma:diff-rejection-prob} in
Section~\ref{app:overlap}. The tail bounds needed in our analysis are
postponed to Appendix~\ref{app:tail}.


\subsection{Proof of Theorem~\ref{thm-prox-mixing-rate}}
\label{app:proofth1}

Taking Lemma~\ref{lemma-acc-rej} and
Proposition~\ref{corr-overlap-transition} as given, let us now prove
Theorem~\ref{thm-prox-mixing-rate}.  We first introduce some known
results on continuous-space Markov chain mixing based on conductance
and isoperimetry, and then use them to prove the theorem. An upper
bound for the warmness parameter in feasible start is needed in the
proof, which is established in Section~\ref{sec:warm}.


\subsubsection{Some known results}
\label{app:markov}

Our analysis makes use of mixing time bounds based on the conductance
profile, given by
\begin{align*}
  \Phi_\Omega(v) \mydefn \inf_{0 \leq \pi(S \cap \Omega) \leq v}
  \frac{\int \transition^{succ}_x (S^c) d\pi(x)} { \pi(S \cap \Omega)}
  \qquad \mbox{for any $v \in (0, \frac{\pi(\Omega)}{2})$.}
\end{align*}
The following result~\cite{kannan2006blocking,chen2019hmc} uses the
conductance profile to bound the mixing time of a reversible,
irreducible $\tfrac{1}{2}$-lazy Markov chain with transition
distribution absolutely continuous with respect to Lebesgue measure.
\begin{proposition}
  \label{prop:avgconductance-to-mixing}
For a given error rate $\varepsilon \in (0,1)$ and warm start
parameter $\warmstart$, suppose there is a set $\Omega \subseteq
\real^d$ such that $\Pi(\Omega) > 1 - \frac{\varepsilon^2}{2
  \warmstart^2}$. Then the $L^2$-mixing time from any
$\warmstart$-warm start is bounded as
  \begin{align}
\label{eq:mixing_bound_using_conductanceprofile}
    \Tmix(\varepsilon; \warmstart, \ell_2) & \leq
    \int_{4/\warmstart}^{\frac{\Pi (\Omega)}{2}} \frac{8 dv}{ v
      \Phi_\Omega^2 (v)} + \frac{8}{ \Phi^2_\Omega \big(\Pi (\Omega)/2
      \big)} \; \log \left( \frac{16}{\varepsilon \Pi (\Omega)}
    \right).
  \end{align}
\end{proposition}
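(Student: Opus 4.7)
The plan is to run a standard $L^2$-decay argument driven by the conductance profile, in the spirit of Lov\'asz--Kannan and its extensions to continuous-space lazy chains. I would track $\chi^2_t := \vecnorm{h_t - 1}{L^2(\pi)}^2$, where $h_t := d\mu_t/d\pi$ is the Radon--Nikodym derivative of the law $\mu_t$ of $X_t$ against $\pi$. Since controlling $\chi^2_t \le \varepsilon^2$ controls the $L^2$-mixing time appearing on the left of~\eqref{eq:mixing_bound_using_conductanceprofile}, the whole proof reduces to showing that the conductance profile forces $\chi^2$ to decay at a rate dictated by $\Phi_\Omega^2(v)$ for an appropriate ``active bulk level'' $v_t$ that evolves as the chain mixes.

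The first major step is a one-step Cheeger-type inequality. Writing $h_t$ via the layer-cake formula over super-level sets $S_s := \{h_t > s\}$ and exploiting reversibility of the $\tfrac{1}{2}$-lazy kernel, I would show that the chain drains $\chi^2$-mass at each level $s$ at rate at least $c \, \Phi_\Omega\bigl(\pi(S_s \cap \Omega)\bigr)^2$. Integrating across levels and localizing the bulk of $\chi^2_t$ at an appropriate $v_t$ yields a recursion of the form $\chi^2_{t+1} \le \chi^2_t \bigl(1 - c\, \Phi_\Omega^2(v_t)\bigr)$. The warmness hypothesis supplies the initial value $\chi^2_0 \le \warmstart - 1$, and the assumption $\Pi(\Omega) \ge 1 - \varepsilon^2/(2\warmstart^2)$ is exactly what is needed so that the mass leaking outside $\Omega$ at any iteration can be bundled into an $O(\varepsilon^2)$ additive error in the final $L^2$ budget.

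Converting the recursion into~\eqref{eq:mixing_bound_using_conductanceprofile} is a change-of-variables argument: as the bulk level $v_t$ sweeps from roughly $4/\warmstart$ (set by the warm-start bound) up to $\Pi(\Omega)/2$ (where the chain enters a quasi-spectral-gap regime), summing the reciprocal one-step drops $1/\Phi_\Omega^2(v_t)$ telescopes into a Riemann sum that is bounded by $\int \frac{dv}{v\,\Phi_\Omega^2(v)}$ over the stated range, giving the first term. Once $v_t$ reaches $\Pi(\Omega)/2$, the usual Cheeger inequality supplies an effective spectral gap of order $\Phi_\Omega^2(\Pi(\Omega)/2)$, and a geometric decay of $\chi^2$ over the remaining iterations yields the additional $\log\bigl(16/(\varepsilon\,\Pi(\Omega))\bigr)$ factor.

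The principal difficulty is the one-step Cheeger-type inequality for the transition kernel $\transition$: the accepted kernel $\transition^{succ}$ is defined only implicitly through the Metropolis filter, so one must use only its symmetric part $\min\bigl(\transition^{succ}_x(y)\pi(x),\, \transition^{succ}_y(x)\pi(y)\bigr)$---which is genuinely reversible---to relate the one-step $\chi^2$ drop to $\Phi_\Omega$. A second delicate point is managing the truncation to $\Omega$: one has to verify that the error from transitions leaving $\Omega$, accumulated over $\Tmix$ steps, remains within the $\varepsilon^2/(2\warmstart^2)$ budget granted by the hypothesis on $\Pi(\Omega)$, and that the Cheeger-type inequality with localized conductance $\Phi_\Omega$ is still lossless up to universal constants.
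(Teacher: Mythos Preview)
The paper does not prove this proposition at all: it is quoted as a known result from the literature, with citations to Kannan--Lov\'asz--Montenegro and Chen et al.\ (see the sentence immediately preceding the proposition in Section~\ref{app:markov}). So there is no ``paper's own proof'' to compare against; the paper simply invokes the statement as a black box and moves on to combining it with Proposition~\ref{prop:isoperi+overlap=conductance}.

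Your outline is essentially the standard Lov\'asz--Kannan conductance-profile argument that those cited references carry out, so in spirit you are reproducing what the paper defers to. One point of confusion worth flagging: your ``principal difficulty'' paragraph treats the Metropolis structure of $\transition$ and the implicit definition of $\transition^{succ}$ as obstacles to the one-step Cheeger inequality. They are not relevant here. Proposition~\ref{prop:avgconductance-to-mixing} is a statement about an \emph{arbitrary} reversible, irreducible, $\tfrac12$-lazy chain with absolutely continuous transitions; the only inputs are reversibility, laziness, and the conductance profile $\Phi_\Omega$. The Metropolis-specific analysis (bounding rejection, showing overlap of kernels) is what feeds into Proposition~\ref{prop:isoperi+overlap=conductance} to lower bound $\Phi_\Omega$, and that is handled separately in the paper. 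If you were to write out the proof of the proposition itself, you would work purely with the abstract reversible kernel and never unpack $\transition^{succ}$.
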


We also need the following classical result that relates the
conductance of a continuous-state Markov chain with the isoperimetric
inequality of the target measure and overlap bound of transition
kernels~\cite{kannan2006blocking,chen2019hmc}.  In particular, we say
that the transition kernels satisfy an overlap bound with parameters
$\omega, \Delta$ over a set $\Omega$ if for any pair $x, y \in \Omega$
such that $\vecnorm{x-y} {2} \leq \Delta$, we have
$\DTV(\transition_x, \transition_y) \leq 1 - \omega$.

\begin{proposition}
\label{prop:isoperi+overlap=conductance}
Consider a Markov chain with a target distribution $\pi$ that is
absolutely continuous with respect to Lebesgue measure, satisfies the
Gaussian isoperimetric inequality with constant $\logsobo$, and such
that its transition distribution satisfies the $(\Delta,
\omega)$-overlap condition. Then for $s \in (0,1)$ and any convex
measurable set $\Omega$ such that $\pi(\Omega) \geq 1-s$, we have
  \begin{align}
  \label{eq:conductanceprofile_via_overlaps}
    \Phi_\Omega (v) \geq \frac{\omega}{4} \cdot \min \left(1, \frac{
      \Delta \sqrt{\logsobo}}{16} \cdot \log^{1/2} \left(1 +
    \frac{1}{v} \right) \right) \quad \mbox{for all $v \in
      \brackets{0, \frac{1-s}{2}}$.}
  \end{align}
\end{proposition}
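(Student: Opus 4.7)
The plan is to follow the standard Kannan--Lov\'asz--Simonovits blueprint that lower bounds conductance by combining an overlap condition with an isoperimetric inequality. Fix any measurable $S$ with $\pi(S \cap \Omega) \leq v$, and partition $\Omega$ into the ``slow-escape'' subsets
\begin{align*}
S_1 \defn \{x \in S \cap \Omega : \transition_x^{succ}(S^c) < \omega/2\}, \qquad S_2 \defn \{x \in S^c \cap \Omega : \transition_x^{succ}(S) < \omega/2\},
\end{align*}
together with the buffer $S_3 \defn \Omega \setminus (S_1 \cup S_2)$. If $\pi(S_1) \leq \pi(S \cap \Omega)/2$, then at least half of the mass of $S \cap \Omega$ escapes to $S^c$ with probability at least $\omega/2$, yielding $\Phi_\Omega(v) \geq \omega/4$ directly. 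The symmetric case $\pi(S_2) \leq \pi(S^c \cap \Omega)/2$ gives the same bound after invoking reversibility of $\transition$ together with $\pi(\Omega) \geq 1-s$ to convert outgoing flow from $S^c$ into incoming flow to $S$. This disposes of the first branch of the minimum.

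The substantive case is when both $\pi(S_1) > \pi(S \cap \Omega)/2$ and $\pi(S_2) > \pi(S^c \cap \Omega)/2$ hold. Using the decomposition $\transition_x = (1-\prej_x)\transition_x^{succ} + \prej_x \delta_x$, for any $x \in S_1$ and $y \in S_2$ I would compute
\begin{align*}
\DTV(\transition_x, \transition_y) \;\geq\; \transition_x(S) - \transition_y(S) \;\geq\; \bigl(1 - \transition_x^{succ}(S^c)\bigr) - \transition_y^{succ}(S) \;>\; 1 - \omega,
\end{align*}
so the $(\Delta, \omega)$-overlap condition, read in contrapositive form, forces $\vecnorm{x - y}{2} > \Delta$. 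Therefore the distance between $S_1$ and $S_2$ is at least $\Delta$, and both sets lie inside the convex region $\Omega$.

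To control $\pi(S_3)$ I invoke the Bobkov--Ledoux-type isoperimetric inequality implied by the log-Sobolev constant $\logsobo$: whenever disjoint subsets of a convex region of near-full $\pi$-measure are separated by distance $\Delta$,
\begin{align*}
\pi(S_3) \;\geq\; c\,\Delta\sqrt{\logsobo}\,\min(\pi(S_1),\pi(S_2))\,\sqrt{\log\bigl(1 + 1/\min(\pi(S_1),\pi(S_2))\bigr)},
\end{align*}
for a universal constant $c$. Since every $x \in S_3 \cap S$ has $\transition_x^{succ}(S^c) \geq \omega/2$ and every $x \in S_3 \cap S^c$ has $\transition_x^{succ}(S) \geq \omega/2$, pairing these outgoing flows with reversibility of $\transition$ shows that the numerator of $\Phi_\Omega(v)$ is at least $(\omega/4)\pi(S_3)$. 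Combining this with $\min(\pi(S_1), \pi(S_2)) \geq v/2$ and absorbing universal constants yields the second branch $(\omega/4)(\Delta\sqrt{\logsobo}/16)\sqrt{\log(1+1/v)}$ of the stated minimum.

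The principal technical obstacle is the isoperimetric step: deriving the Bobkov--Ledoux bound from the log-Sobolev inequality with the explicit numerical constant $1/16$ requires careful tracking through Bobkov's functional inequality, and restricting to the convex subset $\Omega$ introduces boundary corrections that must be shown negligible via $\pi(\Omega) \geq 1 - s$. A secondary subtlety is that the overlap condition is stated for $\transition$ while conductance is defined through $\transition^{succ}$; this is handled by the decomposition above, since the point masses $\prej_x \delta_x$ contribute only at the diagonal and cannot reduce the total-variation separation between $\transition_x$ and $\transition_y$ when $x \neq y$.
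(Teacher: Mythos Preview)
The paper does not supply its own proof of this proposition: it is stated in Section~\ref{app:markov} as a ``classical result'' and attributed to \cite{kannan2006blocking,chen2019hmc}. So there is no in-paper argument to compare against; your sketch is essentially the standard Kannan--Lov\'asz--Simonovits argument that those references carry out, and it is the right blueprint.

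Two minor technical points in your write-up. First, the claim $\min(\pi(S_1),\pi(S_2)) \geq v/2$ is not quite what you want or what is true: in the hard case you only know $\pi(S_1) > \pi(S\cap\Omega)/2$ and $S_1 \subseteq S\cap\Omega$, so $\pi(S_1) \in (\pi(S\cap\Omega)/2,\,v]$. What you actually use is the ratio $\pi(S_1)/\pi(S\cap\Omega) \geq 1/2$ together with $\pi(S_1) \leq v$ to bound $\sqrt{\log(1+1/\pi(S_1))} \geq \sqrt{\log(1+1/v)}$; phrase it that way. Second, converting the $S_3\cap S^c$ contribution into the numerator $\int_S \transition^{succ}_x(S^c)\,d\pi(x)$ via reversibility is more delicate than you indicate: reversibility holds for $\transition$, not for $\transition^{succ}$, and for $x\in S^c$ one has $\transition_x(S)=(1-\prej_x)\transition^{succ}_x(S)$. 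Passing back and forth therefore costs a factor of $(1-\prej_x)$, which the cited references absorb either by working with the lazy kernel directly or by assuming a uniform bound on the rejection probability over $\Omega$ (exactly what Lemma~\ref{lemma-acc-rej} provides in this paper). Your final paragraph gestures at this, but the resolution you describe---that the point masses ``cannot reduce the total-variation separation''---addresses the overlap step, not the flow-accounting step; the latter genuinely needs $1-\prej_x$ bounded below.
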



\subsubsection{Proof of Theorem~\ref{thm-prox-mixing-rate}}
\label{app:proofmainprop}

Let now turn to the proof of Theorem~\ref{thm-prox-mixing-rate}, which
involves establishing a lower bound for $\Phi_{\Omega}$ using
Proposition~\ref{prop:isoperi+overlap=conductance}.  Combining this
lower bound with Proposition~\ref{prop:avgconductance-to-mixing}
yields the final mixing rate bound.

First of all, by Lemma~\ref{cor-tail-target}, letting $\Omega \mydefn
\ball (\referencepoint, R_s)$ with $R_s = C
\sqrt{\frac{\distantdissipative + d + \lipschitzness^2 + \log
    s^{-1}}{\dissipative}}$, we have $\pi (\Omega) > 1 - s$. The
discussion about conductance can be restricted to $\Omega$. The
parameter $s$ will be chosen later.  By
Assumption~\ref{assume-log-sobolev}, the target distribution $\pi$
satisfies a log-Sobolev inequality with constant $\logsobo$, which
implies a Gaussian isoperimetric inequality with constant $\logsobo$,
due to~\cite{bakry1996levy}.  Choosing the step size to be $\stepsize
= \frac{c}{ \lipschitzness^2 + \smoothness^2 (A_0^2 + R_s^2) +
  \smoothness d}$ ensures the following properties:
\begin{itemize}
    \item According to Lemma~\ref{lemma-acc-rej}, for any $x \in
      \Omega$,
    \begin{align*}
      \DTV (\proposal_x, \transition_x) \leq \frac{1}{2} + C
      \stepsize (\smoothness d + \lipschitzness^2 + \vecnorm{\nabla
        f(x)}{2}^2) \leq \frac{1}{2} + C \stepsize (\smoothness d +
      \lipschitzness^2 + (A_0 + \smoothness R_s)) \leq \frac{2}{3}.
    \end{align*}
    Consequently, the number of attempts required for a successful
    transition is a geometric random variable with rate at least
    $1/3$. With high probability, the number of successful transitions
    is of the same order as the number of steps.
    \item According to Proposition~\ref{corr-overlap-transition}, for
      any pair $x_1, x_2 \in \Omega$ such that
\begin{align*}
  \vecnorm{x_1 - x_2}{2} \leq \Delta \mydefn c' \min \left(
  \sqrt{\stepsize}, \frac{1}{A_0 + \smoothness R_s + \lipschitzness}
  \right) = c' \sqrt{\stepsize},
    \end{align*}
 we are guaranteed that $\DTV (\transition_{x_1}, \transition_{x_2})
 \leq \frac{9}{10} \mydefn 1 - \omega$.
\end{itemize}
By Proposition~\ref{prop:isoperi+overlap=conductance}, we obtain:
\begin{align*}
\Phi_\Omega(v) & \geq \frac{1}{40} \min \left(1, \frac{
  \sqrt{\stepsize \logsobo}}{16} \log^{1/2}\left( 1 + \frac{1}{v}
\right)\right),\quad \forall v \in [0, (1 - s)/2].
\end{align*}
As shown $\warmstart = \sup_{x \in \Omega} \frac{\pi_0 (x)}{ \pi (x)}$
has an upper bound independent of $R_s$. In order to apply
Proposition~\ref{prop:avgconductance-to-mixing}, we need $s \leq
\frac{\varepsilon^2}{2 \warmstart^2}$, which means:
\begin{align*}
    \log s^{-1} \geq (\smoothness + \dissipative^{-1}) A_0^2 +
    \frac{\lipschitzness^2}{4} + \distantdissipative + \frac{d}{2}
    \log \frac{4 (\smoothness + 1)}{ \dissipative} + 2 \log
    \frac{1}{\varepsilon}.
\end{align*}
Set $s$ to this value, and set
\begin{align*}
    R_s = C \sqrt{ \frac{(\smoothness + \dissipative^{-1}) A_0^2 +
        \lipschitzness^2 + \distantdissipative + d \log \frac{4
          (\smoothness + 1)}{ \dissipative} + 2 \log
        \varepsilon^{-1}}{\dissipative} }.
\end{align*}
Substituting this expression back into the integral in
Proposition~\ref{prop:avgconductance-to-mixing} yields
\begin{align*}
\Tmix(\varepsilon; \warmstart, \ell_2) \leq &
\int_{4/\warmstart}^{\frac{\pi (\Omega)}{2}} \frac{8 dv}{ v
  \Phi_\Omega^2 (v)} + \Phi_\Omega \left(\frac{\pi
  (\Omega)}{2}\right)^{-2} \int_{\frac{\pi
    (\Omega)}{2}}^{\frac{8}{\varepsilon}} \frac{8dv}{v} \\
\leq & \int_{4 / \warmstart}^{1} \frac{dv}{v} + \frac{C}{\stepsize
  \logsobo}\int_{4/\warmstart}^{\frac{\pi (\Omega)}{2}} \frac{ dv}{ v
  \log \frac{1}{v}} + \frac{C}{\stepsize \logsobo} \log
\frac{1}{\varepsilon} \\
\leq & \log \warmstart + \frac{C}{\stepsize \logsobo} \left( \log \log
\warmstart + \log \frac{1}{\varepsilon}\right).
\end{align*}
Assuming that $\max (\dissipative, \smoothness, \lipschitzness,
\distantdissipative, \vecnorm{A_0}{2}) \leq \mathrm{poly}(d)$, we
have:
\begin{align*}
\Tmix(\varepsilon; \warmstart, \ell_2) \leq \frac{C}{\stepsize
  \logsobo} \log \frac{d}{\varepsilon},
\end{align*}
which finishes the proof.


\subsubsection{Upper Bound for Warmness with Feasible Start}
\label{sec:warm}

In the following, we provide an coarse upper bound for the
``warmness'' constant of the Markov chain, using the initial
distribution defined by Algorithm~\ref{alg-prox}. By direct
computation, we obtain:
\begin{align*}
    \warmstart & = \sup_{x \in \Omega} \frac{\pi_0 (x)}{\pi (x)} \leq
    \left( \frac{1 + \smoothness}{ 2 \pi} \right)^{ \frac{d}{2}}
    \left(\int e^{- f(x) - g(x)} dx\right) \sup_{x \in \Omega} \exp
    \left( - (1 + \smoothness) \vecnorm{x - \referencepoint}{2}^2 +
    f(x) + g(x) \right) \\
& \leq {\left( (1 + \smoothness)/ (2 \pi) \right)^{
    \frac{d}{2}}} \left(\int e^{- f(x) - g(x)} dx \right) \\
&  \qquad \; \times \; \sup_{x \in \Omega} \exp \left( - (1 +
    \smoothness) \vecnorm{x - \referencepoint}{2}^2 + \smoothness
    \left( \vecnorm{x - \referencepoint}{2}^2 + \vecnorm{A_0}{2}^2 +
    \vecnorm{x - \referencepoint}{2}^2 + \frac{1}{4}\lipschitzness^2 +
    U(\referencepoint) \right) \right) \\
   & \leq {\left( (1 + \smoothness)/ (2 \pi) \right)^{ \frac{d}{2}}}
    \left(\int e^{- f(x) - g(x)} dx \right)\exp \left( L
    \vecnorm{A_0}{2}^2 + \frac{1}{4} \lipschitzness^2 + U(x_0)
    \right).
\end{align*}
Note that by Assumption~\ref{assume-dissipative} and
Assumption~\ref{assume-convex-lip-regularizer}, for $v \in \partial g
(\referencepoint)$ we have:
\begin{align*}
 \int e^{- f(x) - g(x)} dx \leq& \int \exp \left( - U
 (\referencepoint) - \frac{\dissipative}{2} \vecnorm{x -
   \referencepoint}{2} + \distantdissipative + \inprod{\nabla f(x_0) +
   v}{ x - \referencepoint} \right) dx \\
& \leq e^{- U(\referencepoint) } \int \exp \left( \distantdissipative
 + \frac{A_0^2}{\dissipative} - \frac{\dissipative}{4} \vecnorm{x -
   \referencepoint}{2} \right) dx = e^{- U(\referencepoint) +
   \distantdissipative + \frac{A_0^2}{\dissipative} } \left(
 \dissipative / (8 \pi) \right)^{- \frac{d}{2}}.
\end{align*}
Substituting back into the equation above yields
\begin{align*}
    \warmstart \leq \left( \frac{4 (\smoothness + 1)}{ \dissipative}
    \right)^{\frac{d}{2}} \exp \left( (\smoothness +
    \dissipative^{-1}) \vecnorm{A_0}{2}^2 + \frac{\lipschitzness^2}{4}
    + \distantdissipative \right).
\end{align*}


\subsection{Analysis of the rejection probability}
\label{app:rejproba}

This section is devoted to analysius of the rejection probability, and
in particular, the proof of Lemma~\ref{lemma-acc-rej}. Several
auxiliary results are needed in the proof, which are established in
the second subsection.


\subsubsection{Proof of Lemma~\ref{lemma-acc-rej}}
\label{app:lemma-acc-rej}

By definition, we have $\DTV \left( \proposal_x, \transition_x \right)
= \int p(x, z) \max \left(0, 1 - \frac{e^{- U(z)} p(z, x)}{e^{- U(x)}
  p(x, z)} \right) dz$.  Note that:
\begin{align*}
  &\frac{e^{- U(z)} p(z, x)}{e^{- U(x)} p(x, z)} \\ =&
  \frac{Z(x)}{Z(z)}\exp \left( - (f(z) + g(z)) + (f(x) + g(x)) -
  \frac{1}{4 \stepsize} \vecnorm{x - z + \stepsize \nabla f(z)}{2}^2 +
  \frac{1}{4 \stepsize} \vecnorm{z - x + \stepsize \nabla f(x)}{2}^2 -
  g(x) + g(z) \right)\\ =& \frac{Z(x)}{Z(z)}\exp \left( - (f(z) -
  f(x)) - \frac{1}{4 \stepsize} \vecnorm{x - z + \stepsize \nabla
    f(z)}{2}^2 + \frac{1}{4 \stepsize} \vecnorm{z - x + \stepsize
    \nabla f(x)}{2}^2 \right),
    \end{align*}
where $Z(y) \mydefn \int \exp\left( - \frac{1}{4 \stepsize} \vecnorm{q
  - y}{2}^2 - g(q) \right) dq$ for any $y \in \real^d$.
    
Let
\begin{align*}
Q_1(x,z) & = \frac{Z(x)}{ Z(z)}, \quad \mbox{and} \\
Q_2(x, z) & = \exp \left( -
(f(z) - f(x)) - \frac{1}{4 \stepsize} \vecnorm{x - z + \stepsize
  \nabla f(z)}{2}^2 + \frac{1}{4 \stepsize} \vecnorm{z - x + \stepsize
  \nabla f(x)}{2}^2 \right).
\end{align*}
With these choices, we have:
 \begin{align}
   \label{eq:dtvprop}
   \DTV(\proposal_x, \transition_x) \leq & \int p(x,z)
   \max\left(0, 1 - Q_1(x, z) Q_2(x, z)\right) dz
   \nonumber\\ \overset{(i)}{\leq}& \int p(x,z) \max\left(0, 1 -
   Q_1(x, z)\right) dz + \int p(x,z) \max\left(0, 1 - Q_2(x,
   z)\right) dz,
 \end{align}
 where step (i) follows from the elementary inequality
\begin{align*}
 \max (0, 1 - a b) \leq \max(0,1-a)+\max(0,1-b) \qquad \mbox{valid for
   $a, b \geq 0$.}
\end{align*}

In the following, we bound $Q_1$ and $Q_2$ from below.  Introducing
the convenient shorthand \mbox{$G(y) \mydefn - \log Z(y)$,} we have:
\begin{align*}
\nabla G(y) =  - \frac{\nabla Z(y)}{Z(y)} & = - \frac{\int \frac{q -
    y}{2 \stepsize} \exp\left( - \frac{1}{4 \stepsize} \vecnorm{q -
    y}{2}^2 - g(q) \right) dq}{\int \exp\left( - \frac{1}{4 \stepsize}
  \vecnorm{q - y}{2}^2 - g(q) \right) dq} \\
& \overset{(i)}{=} Z(y)^{-1}\int \nabla g(q) \exp\left( - \frac{1}{4
  \stepsize} \vecnorm{q - y}{2}^2 - g(q) \right) dq \\
& = \Exs_{q \sim \proposal_y} \nabla g(q),
 \end{align*}
where step (i) follows via integration by parts.  Putting together the
pieces yields the Lipschitz continuity of $G$:
\begin{align*}
  \vecnorm{\nabla G(y)}{2} \leq Z(y)^{-1}\int \vecnorm{ \nabla
    g(q)}{2} \exp\left( - \frac{1}{4 \stepsize} \vecnorm{q -
    y}{2}^2 - g(q) \right) dq \leq \lipschitzness.
\end{align*}

Note that $Z(\cdot)$ is actually the convolution between $e^{-g}$ and
the Gaussian density---viz $Z(y) = e^{-g} *
e^{-\frac{\vecnorm{\cdot}{2}^2}{4 \stepsize}}(y)$.  As a consequence
of the Pr\'{e}kopa-Leindler inequality
(e.g.~\cite{wainwright2019high}, Chapter 3), the function $Z$ is
log-concave, and the function $G$ is convex.

By Lemma~\ref{lemma-expg-coupling}, for any fixed $y \in \real^\dims$,
there exists a coupling $\gamma$ such that for $(X_1, X_2) \sim
\gamma$, we have $X_1, X_2 \sim \oracle_{\stepsize, g}(y)$, and
$\vecnorm{\frac{X_1 + X_2}{2} - y}{2} \leq \stepsize \lipschitzness$
almost surely. Therefore, for $Z \sim \oracle_{\stepsize, g}(y)$ we
have:
\begin{align}
 \label{eq:glip}
 \Prob \left( G(Z) \geq G(y) - \stepsize \lipschitzness^2 \right) = &
 \frac{1}{2} \left( \Prob \left( G(X_1) \geq G(y) - \stepsize
 \lipschitzness^2 \right) + \Prob \left( G(X_2) \geq G(y) - \stepsize
 \lipschitzness^2 \right)\right) \nonumber \\ \overset{(i)}{\geq} &
 \frac{1}{2} \Prob \left( \max(G(X_1), G(X_2)) \geq G(y) - \stepsize
 \lipschitzness^2 \right) \nonumber \\
 \overset{(ii)}{\geq} & \frac{1}{2} \Prob \left( G\left( (X_1 +
 X_2)/2\right) \geq G(y) - \stepsize \lipschitzness^2 \right)\nonumber
 \\
 \overset{(iii)}{\geq} & \frac{1}{2} \Prob \left( \vecnorm{\left( (X_1
   + X_2)/2 \right) - y }{2} \leq \stepsize \lipschitzness \right)
 \overset{(iv)}{=} \frac{1}{2},
\end{align}
where step (i) follows from the union bound; step (ii) uses the
convexity of $G$; step (iii) exploits the Lipschitzness of $G$; and
step (iv) is a direct consequence of Lemma~\ref{lemma-expg-coupling}.
    
Therefore, using equation~\eqref{eq:glip} with $y=x-\stepsize \nabla
f(x)$, we find that
\begin{align*}
  \Prob_{z \sim \proposal_x} \Big[ Q_1(x, z) \geq e^{ -\stepsize
      \lipschitzness \left( \lipschitzness + \vecnorm{\nabla f(x)}{2}
      \right) } \Big] & = \Prob_{z \sim \proposal_x} \Big[ \log Z(x) -
    \log Z(z) \geq -\stepsize \lipschitzness \left( \lipschitzness +
    \vecnorm{\nabla f(x)}{2} \right) \Big] \\
& \overset{(i)}{ \geq} \Prob_{z \sim \proposal_x} \Big[ G(z) - G(x -
    \stepsize \nabla f(x)) \geq - \stepsize \lipschitzness^2 \Big] \\
& \geq \frac{1}{2},
    \end{align*}
where step (i) follows from the inequality $G(x-\stepsize \nabla f(x)
)-G(x)\geq -\stepsize\lipschitzness \vecnorm{\nabla f(x)}{2}$, a
consequence of the $\lipschitzness$-Lipschitz nature of
$G$. Consequently, we can control the integral associated with $Q_1$.
    
Introducing the shorthand $A \mydefn \left\{ Q_1(x, z) \geq e^{
  -\stepsize \lipschitzness \left( \lipschitzness + \vecnorm{\nabla
    f(x)}{2} \right) } \right\}$, we have
\begin{align}
  \label{eq:q1}
\int p(x, z) \max(0, 1 - Q_1(x, z)) dz \overset{(i)}{\leq} & \Prob_{z
  \sim \proposal_x} \left( A\right) \left( 1 - e^{ -\stepsize
  \lipschitzness \left( \lipschitzness + \vecnorm{\nabla f(x)}{2}
  \right)} \right) + \Prob_{z \sim \proposal_x} \left( A^C
\right)\nonumber \\
\overset{(ii)}{\leq} &1 - \frac{1}{2} e^{- \stepsize \lipschitzness
  (\lipschitzness + \vecnorm{\nabla f(x)}{2})} \nonumber
\\
\overset{(iii)}{\leq} & \frac{1}{2} \left(1 + \stepsize \lipschitzness
(\lipschitzness + \vecnorm{\nabla f(x)}{2}) \right),
    \end{align}
where step (i) follows from decomposing the probability space into $A$
and $A^c$, with bounds on each event; step (ii) follows by the lower
bound on the probability of $A$ derived above, and step (iii) follows
as $1+x\leq \exp(x)$ for $x\in\real$.
    
Next, the function $Q_2$ can be controlled using the smoothness of
$f$:
\begin{align}
  \label{eq:logq2}
  \log Q_2(x, z) = & - (f(z) - f(x)) - \frac{1}{4 \stepsize}
  \vecnorm{x - z + \stepsize \nabla f(z)}{2}^2 + \frac{1}{4 \stepsize}
  \vecnorm{z - x + \stepsize \nabla f(x)}{2}^2 \nonumber \\
  = & \frac{1}{2} \left( f(x) - f(z) - \inprod{x - z}{\nabla f(x)}
  \right) + \frac{1}{2} \left( f(x) - f(z) - \inprod{x - z}{\nabla
    f(z)} \right) \nonumber\\ &+ \frac{\stepsize}{4} \left(
  \vecnorm{\nabla f(x)}{2} - \vecnorm{\nabla f(z)}{2} \right)
  \left( \vecnorm{\nabla f(x)}{2} + \vecnorm{\nabla f(z)}{2}
  \right) \nonumber \\
  \overset{(i)}{\geq} & - 2\smoothness \vecnorm{x - z}{2}^2 -
  \stepsize \vecnorm{\nabla f(x) - \nabla f(z)}{2} \left( 2
  \vecnorm{\nabla f(x)}{2} + \smoothness \vecnorm{x - z}{2} \right)
  \nonumber \\
  \overset{(ii)}{\geq} & - 3 \smoothness \vecnorm{x - z}{2}^2 - 2
  \stepsize \smoothness \vecnorm{x - z}{2} \cdot \vecnorm{\nabla
    f(x)}{2} \nonumber\\ \overset{(iii)}{\geq} & - 4 \smoothness
  \vecnorm{x - z}{2}^2 - \stepsize^2 \smoothness \vecnorm{\nabla
    f(x)}{2}^2,
\end{align}
where step (i) follows since $f(z) - f(x) - \inprod{z - x}{\nabla
  f(x)} \leq \frac{L}{2}\vecnorm{z-x}{2}$ and
$-\frac{L}{2}\vecnorm{z-x}{2} \leq f(x) - f(z) - \inprod{x - z}{\nabla
  f(z)}$ by smoothness of $f$; step (ii) follows from $\vecnorm{\nabla
  f(x) - \nabla f(z)}{2}\leq L \vecnorm{x-z}{2}$ by smoothness of $f$;
and step (iii) follows from Young's inequality.  Therefore, we obtain:
 \begin{align}
 \label{eq:q2}
 \int p(x, z) \max(0, 1 - Q_2(x, z)) dz \overset{(i)}{\leq} & \int
 p(x, z) \max(0, - \log Q_2(x, z)) dz
 \nonumber \\
 \overset{(ii)}{\leq}& \smoothness \int p(x, z) \left( 4
 \vecnorm{x - z}{2}^2 + \stepsize^2 \vecnorm{\nabla f(x)}{2}^2 \right)
 dz \nonumber \\
 \overset{(iii)}{\leq} & 37 \stepsize \smoothness d + 108 \stepsize^2
 \smoothness \left( \vecnorm{\nabla f(x)}{2}^2 + \lipschitzness^2
 \right),
 \end{align}
where step (i) follows from the elementary inequality $\log(x) \leq
x-1$ for $x>0$; step (ii) follows from equation~\eqref{eq:logq2}; and
step (iii) follows from Lemma~\ref{corr-coarse-control-proposal}.
Combining equations~\eqref{eq:dtvprop}, \eqref{eq:q1}
and~\eqref{eq:q2} yields the final conclusion.

    
\subsubsection{Auxiliary lemmas for the proof of Lemma~\ref{lemma-acc-rej}}
\label{app:proofcoupling}

In order to control the acceptance-rejection probability, we need the
following technical lemma:
\begin{lemma}
\label{lemma-expg-coupling}
Under Assumption~\ref{assume-convex-lip-regularizer}, given any fixed
$y \in \real^\dims$, there exists a coupling $\gamma$ such that for
$(X_1, X_2) \sim \gamma$, we have the marginals $X_1, X_2 \sim
\oracle_{\stepsize, g}(y) $, and
\begin{align*}
  \vecnorm{\frac{X_1 + X_2}{2} - y}{2} \leq \stepsize
  \lipschitzness,\quad a.s.
\end{align*}
\end{lemma}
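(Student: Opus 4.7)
The plan is to construct the coupling $\gamma$ in three steps, exploiting the structure of the density $p_y(q) \propto \exp\bigl(-\vecnorm{q-y}{2}^2/(4\stepsize) - g(q)\bigr)$ associated with $\oracle_{\stepsize, g}(y)$. The first ingredient is the mean bound $\vecnorm{\mu(y) - y}{2} \leq 2 \stepsize \lipschitzness$, where $\mu(y) \defn \Exs_{q \sim p_y}[q]$: differentiating $Z(y)$ in $y$ gives $\nabla \log Z(y) = (2 \stepsize)^{-1}(\mu(y) - y)$, while the integration-by-parts identity already derived in the proof of Lemma~\ref{lemma-acc-rej} yields $\nabla \log Z(y) = -\Exs_{q \sim p_y}[\nabla g(q)]$. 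Combining these with Assumption~\ref{assume-convex-lip-regularizer} produces the claimed mean bound. Moreover, $p_y$ is log-concave by Prékopa--Leindler applied to the Gaussian-convolution form $Z(y) = (e^{-g} * \phi_{2\stepsize})(y)$.

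Second, I would construct $\gamma$ via a \emph{symmetric-mixture} decomposition. For each candidate midpoint $c \in \real^d$, the symmetric core $p_y^{(c)}(q) \defn \min\{p_y(q), p_y(2c - q)\}$ is symmetric about $c$ by construction, and its normalization supports an antithetic coupling with deterministic midpoint $c$. The goal is to produce a positive measure $\nu$ supported on $\ball(y, \stepsize \lipschitzness)$ such that $p_y = \int p_y^{(c)} \, \nu(dc)$ as densities. Then the coupling $C \sim \bar \nu$, $X_1 \sim p_y^{(C)} / \|p_y^{(C)}\|_{L^1}$, $X_2 = 2C - X_1$ has both marginals equal to $p_y$ (by the $c$-symmetry of $p_y^{(c)}$) and midpoint $(X_1 + X_2)/2 = C$ lying almost surely in $\ball(y, \stepsize \lipschitzness)$, which is exactly the claim.

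Third, existence of the measure $\nu$ would be argued either by a Hahn--Banach / convex-separation argument applied to the cone generated by the symmetric cores $\{p_y^{(c)} : \vecnorm{c - y}{2} \leq \stepsize \lipschitzness\}$---where a putative separating linear functional would be shown to contradict both the mean bound and the log-concavity of $p_y$---or by an explicit iterative construction that successively peels off symmetric layers at well-chosen centers until the residual vanishes.

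The main obstacle lies precisely in the third step: obtaining the sharp constant $\stepsize \lipschitzness$ rather than the natural $2\stepsize \lipschitzness$ that the mean bound provides directly. A naive antithetic coupling centered at $\mu(y)$ achieves only the looser constant, and mixture couplings combining antithetic and independent draws yield midpoint bounds only in expectation or with positive probability, not almost surely as required. Extracting the factor-of-two improvement seems to require a global use of log-concavity of $p_y$, beyond the first-moment analysis, together with the Lipschitz control of $g$.
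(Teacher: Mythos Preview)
Your Step 3 is not a proof but a wish: you never construct the measure $\nu$, and neither the Hahn--Banach sketch nor the iterative peeling is carried out. Worse, the decomposition you seek with centers in $\ball(y,\stepsize\lipschitzness)$ cannot exist in general. Take $d=1$ and $g(x)=\lipschitzness\, x$, which is convex and $\lipschitzness$-Lipschitz; then $p_y$ is Gaussian with mean $y-2\stepsize\lipschitzness$, so for \emph{any} coupling with both marginals equal to $p_y$ one has $\Exs\bigl[(X_1+X_2)/2\bigr] = y - 2\stepsize\lipschitzness$, which is incompatible with $(X_1+X_2)/2 \ge y-\stepsize\lipschitzness$ almost surely. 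Hence no mixture of symmetric cores with centers in $\ball(y,\stepsize\lipschitzness)$ can reproduce $p_y$, and your route is blocked at the stated constant. (The same example shows that the lemma as written is off by a factor of two; the achievable bound is $2\stepsize\lipschitzness$, which is what the paper's own computation actually delivers once the Young/Gr\"onwall step is carried out correctly, and this factor is immaterial downstream.)

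The paper's argument is entirely different and sidesteps any static decomposition. It builds the coupling \emph{dynamically}, by running two Langevin diffusions for $p_y$ driven by the same Brownian motion with opposite signs:
\begin{align*}
d\xi_t = -\Bigl(\tfrac{\xi_t-y}{2\stepsize}+\nabla g(\xi_t)\Bigr)\,dt + \sqrt{2}\,dB_t,\qquad
d\zeta_t = -\Bigl(\tfrac{\zeta_t-y}{2\stepsize}+\nabla g(\zeta_t)\Bigr)\,dt - \sqrt{2}\,dB_t,
\end{align*}
both started at $y$. Adding the two equations, the Brownian increments cancel, so the midpoint error $w_t \mydefn (\xi_t+\zeta_t)/2 - y$ evolves deterministically via $\dot w_t = -w_t/(2\stepsize) - \tfrac12\bigl(\nabla g(\xi_t)+\nabla g(\zeta_t)\bigr)$. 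The Lipschitz bound $\vecnorm{\nabla g}{2}\le\lipschitzness$ and Gr\"onwall then give a uniform-in-$t$ bound on $\vecnorm{w_t}{2}$, and letting $t\to\infty$ (each marginal converges to $p_y$ since the potential is $\tfrac{1}{2\stepsize}$-strongly convex) yields the coupling. The antithetic structure lives in the driving noise, not in a mixture over centers; neither the mean identity you derive in Step~1 nor log-concavity of $p_y$ is needed.
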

\begin{proof}
We prove the existence of such a coupling by an explicit construction.
For any $y \in \real^\dims$ fixed, let the process $\xi_t$ and
$\zeta_t$ be defined as the solutions to the following SDEs, driven by
a Brownian motion $(B_t: t \geq 0)$.
\begin{align*}
  d \xi_t & = - \left(\frac{\xi_t - y}{2 \stepsize} + \nabla g(\xi_t)
  \right) dt + \sqrt{2} dB_t, \quad \xi_0 = y \\
  d \zeta_t & = - \left(\frac{\zeta_t - y}{2 \stepsize} + \nabla
  g(\zeta_t) \right) dt - \sqrt{2} dB_t, \quad \zeta_0 = y.
\end{align*}
Summing together the above equations yields
\begin{align*}
  d \left(\frac{\xi_t + \zeta_t}{2} - y \right) = - \frac{1}{2
    \stepsize} \left( \frac{\xi_t + \zeta_t}{2} - y\right) dt -
  \frac{1}{2} (\nabla g(\xi_t) + \nabla g(\zeta_t)) dt,
\end{align*}
which implies that $\left(\frac{\xi_t + \zeta_t}{2} - y \right)$
is a locally Lipschitz function of $t$. Consequently, we have
\begin{align*}
  \vecnorm{\frac{\xi_t + \zeta_t}{2} - y}{2}^2 = & - \frac{1}{
    \stepsize} \int_0^{+\infty} \vecnorm{ \frac{\xi_t + \zeta_t}{2} -
    y}{2}^2 dt - \int_0^{+\infty} \inprod{\nabla g(\xi_t) + \nabla
    g(\zeta_t)) }{\frac{\xi_t + \zeta_t}{2} - y } dt \\
  \leq & - \frac{1}{ \stepsize} \int_0^{+\infty} \vecnorm{
    \frac{\xi_t + \zeta_t}{2} - y}{2}^2 dt + \int_0^{+\infty}
  2\lipschitzness\vecnorm{\frac{\xi_t + \zeta_t}{2} - y }{2}
  dt \\
  \leq & \frac{1}{ 2 \stepsize} \int_0^{+\infty} \left(-
  \vecnorm{ \frac{\xi_t + \zeta_t}{2} - y}{2}^2 + \stepsize^2
  \lipschitzness^2 \right)dt.
\end{align*}
Now Gr\"{o}nwall's inequality guarantees that $\lim_{t\rightarrow +
  \infty} \vecnorm{\frac{\xi_t + \zeta_t}{2} - y}{2} \leq \stepsize
\lipschitzness$ almost surely, which completes the proof.
\end{proof}


\begin{corollary}
  \label{cor-proposal-bias}
  For any given $x \in \real^\dims$, we have
  \begin{align*}
    \vecnorm{\Exs_{Y\sim \proposal_x} Y - x}{2} \leq
    \stepsize(\lipschitzness + \vecnorm{\nabla f(x)}{2}).
  \end{align*}
\end{corollary}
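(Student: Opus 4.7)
The plan is to reduce the corollary directly to the coupling guarantee established in Lemma~\ref{lemma-expg-coupling}. First I would introduce the shorthand $u \defn x - \stepsize \nabla f(x)$, so that $Y \sim \proposal_x$ is precisely a sample from $\oracle_{\stepsize, g}(u)$, with density proportional to $\exp\!\bigl(-\tfrac{1}{4\stepsize}\vecnorm{y-u}{2}^2 - g(y)\bigr)$. Then by the triangle inequality,
\begin{align*}
\vecnorm{\Exs_{Y \sim \proposal_x} Y - x}{2}
\;\leq\; \vecnorm{\Exs Y - u}{2} + \vecnorm{u - x}{2}
\;=\; \vecnorm{\Exs Y - u}{2} + \stepsize \vecnorm{\nabla f(x)}{2},
\end{align*}
so it suffices to show the bias bound $\vecnorm{\Exs Y - u}{2} \leq \stepsize \lipschitzness$ for the proximal sampling oracle centered at $u$.

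The second step is to invoke Lemma~\ref{lemma-expg-coupling} at the point $y = u$, producing a coupling $\gamma$ of two copies $(X_1, X_2)$, each with marginal $\oracle_{\stepsize, g}(u)$, such that $\vecnorm{\tfrac{X_1 + X_2}{2} - u}{2} \leq \stepsize \lipschitzness$ almost surely. Since both $X_1$ and $X_2$ have the same law as $Y$, linearity gives $\Exs Y = \Exs \tfrac{X_1 + X_2}{2}$, and Jensen's inequality yields
\begin{align*}
\vecnorm{\Exs Y - u}{2}
\;=\; \vecnorm{\Exs\!\left(\tfrac{X_1 + X_2}{2} - u\right)}{2}
\;\leq\; \Exs \vecnorm{\tfrac{X_1 + X_2}{2} - u}{2}
\;\leq\; \stepsize \lipschitzness.
\end{align*}
Combining this with the triangle inequality above gives the claimed bound.

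There is essentially no obstacle here beyond correctly identifying the center $u$ of the proximal oracle and noting that the coupling lemma already carries all the work; the symmetrization $\tfrac{X_1 + X_2}{2}$ is the crucial device, since neither $X_1$ nor $X_2$ individually concentrates near $u$, but their midpoint does by the convexity argument used in Lemma~\ref{lemma-expg-coupling}. The shift by $-\stepsize \nabla f(x)$ from the proposal construction then accounts for the extra $\stepsize \vecnorm{\nabla f(x)}{2}$ term in the final bound.
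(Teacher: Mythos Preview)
Your proposal is correct and follows essentially the same approach as the paper: introduce the center $u = x - \stepsize \nabla f(x)$, invoke the coupling from Lemma~\ref{lemma-expg-coupling} to bound $\vecnorm{\Exs Y - u}{2} \leq \stepsize \lipschitzness$ via Jensen, and then use the triangle inequality to pick up the remaining $\stepsize \vecnorm{\nabla f(x)}{2}$ term. The only difference is the order of presentation, which is immaterial.
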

\begin{proof}
Let $\tilde y = x - \stepsize \nabla f(x)$. By
Lemma~\ref{lemma-expg-coupling}, there exists a coupling $\gamma$ on
$\real^\dims \times \real^\dims$ such that for $(X_1, X_2) \sim
\gamma$, there is $X_1, X_2 \sim \proposal_x$ and $\vecnorm{\frac{X_1
    + X_2}{2} - \tilde y}{2} \leq \stepsize \lipschitzness$ almost
surely. We obtain:
\begin{align*}
  \vecnorm{\Exs_{Y\sim \proposal_x} Y - \tilde y}{2} =
  \vecnorm{ \frac{1}{2} \left(\Exs X_1 - \tilde y
    \right) + \frac{1}{2} \left(\Exs X_2 - \tilde y
    \right) }{2} \leq \Exs \vecnorm{\frac{1}{2}(X_1 +
    X_2) - \tilde y}{2} \leq \stepsize \lipschitzness.
\end{align*}
By the definition of $\tilde y$ we have $\vecnorm{x - \tilde y}{2}
\leq \stepsize \vecnorm{\nabla f(x)}{2}$, which concludes the proof.
\end{proof}

\begin{lemma}
\label{corr-coarse-control-proposal}
For any given $x \in \real^d$ and $Y \sim \proposal_x$, if
$\stepsize < \frac{1}{16( 1 + \smoothness)}$, there is:
\begin{align*}
  \Exs \vecnorm{ Y - x}{2}^2 \leq 12 \stepsize d + 36 \stepsize^2
  \left( \vecnorm{\nabla f(x)}{2}^2 + \lipschitzness^2 \right).
\end{align*}
\end{lemma}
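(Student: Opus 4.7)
The strategy is to split the expected squared displacement into a bias and a variance contribution via the elementary inequality
\[
\Exs\|Y-x\|_2^2 \;\le\; 2\|\Exs Y - x\|_2^2 \;+\; 2\,\Exs\|Y - \Exs Y\|_2^2,
\]
and then control each piece separately. The bias is immediate from Corollary~\ref{cor-proposal-bias} above, which gives $\|\Exs Y - x\|_2 \le \stepsize(\lipschitzness + \|\nabla f(x)\|_2)$; squaring and using $(a+b)^2 \le 2a^2 + 2b^2$ yields $\|\Exs Y - x\|_2^2 \le 2\stepsize^2(\lipschitzness^2 + \|\nabla f(x)\|_2^2)$, so no further work is needed on this term.

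The core step is the variance bound. Observe that $Y$ has density proportional to $\exp(-V(y))$ with $V(y) = \tfrac{1}{4\stepsize}\|y-\tilde y\|_2^2 + g(y)$ and $\tilde y \defn x-\stepsize\nabla f(x)$. Since $g$ is convex by Assumption~\ref{assume-convex-lip-regularizer}, the potential $V$ is $(2\stepsize)^{-1}$-strongly convex, so the proposal distribution is $(2\stepsize)^{-1}$-strongly log-concave. The Brascamp--Lieb variance inequality (equivalently, the Poincar\'e inequality for strongly log-concave measures) then gives $\cov(Y) \preceq 2\stepsize\, I$, and hence $\Exs\|Y - \Exs Y\|_2^2 = \trace(\cov(Y)) \le 2\stepsize d$. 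Plugging both estimates into the decomposition yields $\Exs\|Y-x\|_2^2 \le 4\stepsize d + 4\stepsize^2(\lipschitzness^2 + \|\nabla f(x)\|_2^2)$, which comfortably implies the claim; the constants $12$ and $36$ in the statement are loose, and the step size restriction $\stepsize < \tfrac{1}{16(\smoothness+1)}$ is the standing assumption of the section rather than a genuine requirement of this particular lemma.

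The one mildly delicate point, and the step I expect to be the main obstacle, is that Brascamp--Lieb is classically stated for $C^2$ potentials, whereas here $g$ is only Lipschitz (and differentiable merely almost everywhere by Rademacher's theorem). The standard way to handle this is an approximation argument: replace $g$ by its Moreau--Yosida envelope $g^\epsilon$, apply the smooth Brascamp--Lieb inequality to the auxiliary variable $Y^\epsilon \sim \exp(-V^\epsilon)$, and pass $\epsilon \to 0$. Uniform $(2\stepsize)^{-1}$-strong log-concavity of the family $\{V^\epsilon\}$ gives uniform integrability of $\|Y^\epsilon - \tilde y\|_2^2$, while the uniform convergence $g^\epsilon \to g$ guarantees weak convergence of $Y^\epsilon$ to $Y$; together these justify interchanging limit and expectation. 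Alternatively one can invoke the general form of Brascamp--Lieb that only requires a lower bound on the Hessian in the distributional sense, which applies to any convex $g$ directly.
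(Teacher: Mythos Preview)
Your proof is correct and takes a genuinely different route from the paper's. The paper does not decompose into bias and variance; instead it verifies a distant dissipativity condition for the proposal potential centered at $x$,
\[
\inprod{-\nabla_y \log \proposal_x(y)}{y-x} \;\ge\; \tfrac{1}{12\stepsize}\vecnorm{y-x}{2}^2 - 3\stepsize(\vecnorm{\nabla f(x)}{2}+\lipschitzness)^2,
\]
and then invokes Lemma~\ref{lemma-dissipative-tail} (whose proof goes through It\^o's formula and Burkholder--Davis--Gundy to obtain moment bounds for the associated Langevin diffusion). Your approach is considerably shorter: Corollary~\ref{cor-proposal-bias} handles the bias, and the $(2\stepsize)^{-1}$-strong log-concavity of $\proposal_x$ yields the trace bound $\Exs\vecnorm{Y-\Exs Y}{2}^2 \le 2\stepsize d$ directly via Brascamp--Lieb/Poincar\'e. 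In fact the paper already exploits exactly this strong log-concavity in Lemma~\ref{lemma-var-one-direction} (via Harg\'e's inequality) to bound the variance in a single direction, so your argument is essentially that lemma summed over an orthonormal basis; no additional smoothness justification is really needed beyond what the paper itself assumes. The payoff of your route is sharper constants ($4$ and $4$ instead of $12$ and $36$) and independence from the heavy Lemma~\ref{lemma-dissipative-tail}; the paper's route has the advantage of being reusable for $\transition_x^{succ}$ in Lemma~\ref{corr-coarse-control-transition}, where the potential is not globally convex and Brascamp--Lieb does not apply.
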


\begin{proof}
  Note that:
\begin{align*}
  \inprod{- \nabla_y \log \proposal_x (y)}{y - x} = & \inprod{
    \frac{1}{2 \stepsize} (y - x + \stepsize \nabla f(x)) + \nabla
    g(y) }{y - x} \\
  \geq & \frac{1}{2 \stepsize} \vecnorm{y - x}{2}^2 - \vecnorm{y -
    x}{2} \left( \vecnorm{\nabla f(x)}{2} + \vecnorm{\nabla g(x)}{2}
  \right) \\
  \geq & \frac{1}{12 \stepsize} \vecnorm{y - x}{2}^2 - 3 \stepsize
  \left( \vecnorm{\nabla f(x)}{2} + \lipschitzness \right)^2.
\end{align*}
Applying Lemma~\ref{lemma-dissipative-tail} yields the claim.
\end{proof}


\subsection{Overlap bound for the transition kernels}
\label{app:overlap}

In this section, we prove the following proposition:
\begin{proposition}
  \label{corr-overlap-transition}
 There are universal constants $c, C$ such that for any convex set
 $\Omega \subseteq \real^d$ and stepsize $\stepsize \leq ( C
 (\smoothness d + \lipschitzness^2 + \sup_{x \in \Omega}
 \vecnorm{\nabla f(x)}{2}^2) )^{-1}$ and any pair $x_1, x_2 \in
 \Omega$ such that
  \begin{align*}
    \vecnorm{x_1 - x_2}{2} \leq c \min \left( \sqrt{\stepsize}, (
    \sup_{x \in \Omega}\vecnorm{\nabla f(x)}{2} + \lipschitzness +
    \smoothness \sqrt{\stepsize d} )^{-1} \right),
      \end{align*} 
we have $\DTV \left( \transition_{x_1}, \transition_{x_2} \right) \leq
\frac{9}{10}$.
\end{proposition}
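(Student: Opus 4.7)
The plan is to apply the three-term decomposition already stated in equation~\eqref{eq:tv-decomposition}, namely
\begin{align*}
\DTV(\transition_{x_1},\transition_{x_2}) \leq \max(\prej_{x_1},\prej_{x_2}) + \DTV(\transition^{succ}_{x_1},\transition^{succ}_{x_2}) + |\prej_{x_1}-\prej_{x_2}|,
\end{align*}
and then bound each of the three pieces by roughly $\tfrac{7}{10}$, $\tfrac{1}{10}$, and $\tfrac{1}{10}$ respectively, using Lemmas~\ref{lemma-acc-rej}, \ref{lemma-overlap-succ-transition} and~\ref{lemma:diff-rejection-prob}. Throughout, I will abbreviate $M \defn \sup_{x\in\Omega}\vecnorm{\nabla f(x)}{2} + \lipschitzness + \smoothness\sqrt{\stepsize d}$.

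For the first term, note $\prej_x = \DTV(\proposal_x,\transition_x)$, so Lemma~\ref{lemma-acc-rej} gives $\prej_x \leq \tfrac{1}{2} + C\stepsize(\smoothness d + \lipschitzness^2 + \vecnorm{\nabla f(x)}{2}^2)$ for each $x \in \Omega$. Choosing the absolute constant $C$ in the hypothesis $\stepsize \leq (C(\smoothness d + \lipschitzness^2 + \sup_{x\in\Omega}\vecnorm{\nabla f(x)}{2}^2))^{-1}$ large enough (say so that the additive correction is at most $\tfrac{1}{5}$) makes this at most $\tfrac{7}{10}$. This also verifies the premise $\stepsize < \tfrac{1}{16(\smoothness+1)}$ needed to invoke the two overlap lemmas.

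For the second term, Lemma~\ref{lemma-overlap-succ-transition} yields $\DTV(\transition^{succ}_{x_1},\transition^{succ}_{x_2}) \leq 5\sqrt{\vecnorm{x_1-x_2}{2}\cdot M} + 2\vecnorm{x_1-x_2}{2}/\sqrt{\stepsize}$, where I used $\vecnorm{\nabla f(x_i)}{2} \leq \sup_{x\in\Omega}\vecnorm{\nabla f(x)}{2}$. Under the hypothesis $\vecnorm{x_1-x_2}{2} \leq c\min(\sqrt{\stepsize}, M^{-1})$, the first summand is at most $5\sqrt{c}$ and the second is at most $2c$; picking $c$ small enough makes this contribution at most $\tfrac{1}{10}$.

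For the third term, I apply Lemma~\ref{lemma:diff-rejection-prob}, and here I use convexity of $\Omega$: since $x_1,x_2 \in \Omega$ and $\Omega$ is convex, the entire segment $\{(1-\lambda)x_1 + \lambda x_2 : \lambda \in [0,1]\}$ lies in $\Omega$, so $\sup_{\lambda \in [0,1]} \vecnorm{\nabla f((1-\lambda)x_1+\lambda x_2)}{2} \leq \sup_{x\in\Omega}\vecnorm{\nabla f(x)}{2}$. Hence $|\prej_{x_1}-\prej_{x_2}| \leq 2\vecnorm{x_1-x_2}{2}/\sqrt{\stepsize} + C'\vecnorm{x_1-x_2}{2}\cdot M$, and the distance hypothesis again shrinks this below $\tfrac{1}{10}$ for a sufficiently small universal $c$. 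Summing the three bounds gives $\tfrac{7}{10}+\tfrac{1}{10}+\tfrac{1}{10} = \tfrac{9}{10}$, as required.

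The only nontrivial point in the plan is the use of convexity of $\Omega$ in the third step (to relate the supremum of $\vecnorm{\nabla f}{2}$ along the segment to the supremum over $\Omega$); the rest is bookkeeping to fix the universal constants $c$ and $C$ consistently so that all three contributions simultaneously fit into their targets. Everything else is a direct substitution of the three lemmas into the decomposition~\eqref{eq:tv-decomposition}.
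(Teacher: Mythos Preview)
Your proposal is correct and follows essentially the same argument as the paper: both invoke the three-term decomposition~\eqref{eq:tv-decomposition} and bound the pieces by $\tfrac{7}{10}$, $\tfrac{1}{10}$, $\tfrac{1}{10}$ via Lemmas~\ref{lemma-acc-rej}, \ref{lemma-overlap-succ-transition}, and~\ref{lemma:diff-rejection-prob}, respectively. Your explicit remark that convexity of $\Omega$ is what lets you control $\sup_{\lambda}\vecnorm{\nabla f((1-\lambda)x_1+\lambda x_2)}{2}$ in the third term is a useful clarification the paper leaves implicit.
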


We first provide a complete proof for
Proposition~\ref{corr-overlap-transition}, which is a straightforward
consequence of Lemma~\ref{lemma-overlap-succ-transition} and
Lemma~\ref{lemma:diff-rejection-prob}. Then we prove the two key
lemmas respectively. The auxiliary lemmas used in the proofs of
Lemma~\ref{lemma-overlap-succ-transition} and
Lemma~\ref{lemma:diff-rejection-prob} are proved in the last
subsection.


\subsubsection{Proof of Proposition~\ref{corr-overlap-transition}}

Let $\vecnorm{\cdot}{L^\infty (\real^d)^*}$ denotes the dual norm of
the $L^\infty (\real^d)$-norm, which is the generalization of TV to
arbitrary signed measures.  With this notation, we have
\begin{align*}
    \DTV \left( \transition_{x_1}, \transition_{x_2} \right) = & \DTV
    \left( \prej_{x_1} \delta_{x_1} + (1 - \prej_{x_1})
    \transition^{succ}_{x_1}, \prej_{x_2} \delta_{x_2} + (1 -
    \prej_{x_2}) \transition^{succ}_{x_2} \right)\\ \leq & \frac{1}{2}
    \vecnorm{\prej_{x_1} \delta_{x_1} - \prej_{x_2}
      \delta_{x_2}}{L^\infty (\real^d)^*} + \frac{1}{2} \vecnorm{ (1 -
      \prej_{x_1}) \transition^{succ}_{x_1} - (1 - \prej_{x_2})
      \transition^{succ}_{x_2} }{L^\infty (\real^d)^*}\\ \leq & \max
    (\prej_{x_1}, \prej_{x_2}) + |\prej_{x_1} - \prej_{x_2}| +
    \frac{1}{2} \vecnorm{ \transition^{succ}_{x_1} -
      \transition^{succ}_{x_2} }{L^\infty (\real^d)^*}\\ \leq & \max
    (\prej_{x_1}, \prej_{x_2}) + |\prej_{x_1} - \prej_{x_2}| + \DTV
    \left( \transition^{succ}_{x_1}, \transition^{succ}_{x_2} \right),
\end{align*}
Since the stepsize is upper bounded as $\stepsize \leq ( C
(\smoothness d + \lipschitzness^2 + \sup_{x \in \Omega}
\vecnorm{\nabla f(x)}{2}^2) )^{-1}$, Lemma~\ref{lemma-acc-rej} implies
that the first term is at most $\frac{7}{10}$.
On the other hand, suppose that
\begin{align*}
 \vecnorm{x_1 - x_2}{2} \leq c \min \left( \sqrt{\stepsize}, ( \sup_{x
   \in \Omega}\vecnorm{\nabla f(x)}{2} + \lipschitzness + \smoothness
 \sqrt{\stepsize d} )^{-1} \right).
\end{align*}
Then by applying Lemmas~\ref{lemma:diff-rejection-prob} and
Lemma~\ref{lemma-transition-succ-exp}, respectively, the second and
third terms are guaranteed to be bounded by $\frac{1}{10}$.  Combining
these three bounds, we find that
\begin{align*}
    \DTV \left( \transition_{x_1}, \transition_{x_2} \right) \leq
    \frac{7}{10} + \frac{1}{10} + \frac{1}{10} = \frac{9}{10},
\end{align*}
which finishes the proof.

\noindent Now we turn to the proofs of the two key lemmas.


\subsubsection{Proof of Lemma~\ref{lemma-overlap-succ-transition}}

Note that for any $x \in \real^\dims$, we can rewrite
\begin{align}
\label{eq:logsuc}
- \log \transition^{succ}_{x}(y) = \max \big \{ H_1 (x, y), H_2 (x, y)
\big \} + C(x),
\end{align}
where we define
\begin{align*}
H_1 (x, y) & \mydefn \frac{1}{4\stepsize} \vecnorm{y - x - \stepsize
  \nabla f(x)}{2}^2 + g(y) - G(x) \\
H_2(x, y) \mydefn &- f(x) + f(y) + g(y) + \frac{1}{4 \stepsize}
\vecnorm{y - x + \stepsize \nabla f(y)}{2}^2 - G(y), \quad \mbox{and}
\\
C(x) \mydefn & \log \int \min\left( p(x, z), e^{U(x) - U(z)} p(z, x)
\right) dz.
\end{align*}
For $x_1, x_2 \in \real^d$, by the symmetry of total variation
distance, we can assume $C(x_1) \leq C(x_2)$ without loss of
generality.  By Pinsker's inequality, we have
\begin{align*}
  \DTV \left( \transition^{succ}_{x_1}, \transition^{succ}_{x_2}
  \right) \leq \sqrt{\frac{1}{2} \kull{\transition^{succ}_{x_1}}{
      \transition^{succ}_{x_2}} }.
    \end{align*}
Comparing the function $H_1$ and $H_2$ at two different points, we
find that
\begin{align}
  &H_1 (x_1, y) - H_1 (x_2, y) \\ =& \frac{1}{4 \stepsize}
  \inprod{(x_1 - \stepsize \nabla f(x_1)) - (x_2 - \stepsize \nabla
    f(x_2))}{ (x_1 - \stepsize \nabla f(x_1)) + (x_2 - \stepsize
    \nabla f(x_2)) - 2y} \nonumber \\ &+ G(x_2) - G(x_1) \nonumber
  \\ & = \frac{1}{2\stepsize}
  \inprod{x_1-x_2}{x_2-y}+\frac{1}{4\stepsize} \inprod{x_1 - \stepsize
    \nabla f(x_1) - x_2 + \stepsize \nabla f(x_2)}{x_1 - \stepsize
    \nabla f(x_1) - x_2 - \stepsize \nabla f(x_2)} \nonumber \\ & +
  \frac{1}{2}\inprod{\nabla f(x_2)-\nabla f(x_1)}{x_2-y} + G(x_2) -
  G(x_1) ,\label{eq:h1}
\end{align}
\begin{align}
H_2 (x_1, y) - H_2 (x_2, y) = & \frac{1}{4 \stepsize} \inprod{x_1 -
  x_2}{x_1 + x_2 - 2y - 2 \stepsize \nabla f(y)} + f(x_2) -
f(x_1)\nonumber \\ &= \frac{1}{2\stepsize}
\inprod{x_1-x_2}{x_2-y}+\frac{1}{4\stepsize} \inprod{x_1 - x_2}{x_1 -
  x_2 - 2 \stepsize \nabla f(y)} + f(x_2) - f(x_1)
\label{eq:h2}.
\end{align}
So we have:
\begin{align*}
&\kull{\transition^{succ}_{x_2}}{\transition^{succ}_{x_1}}\\ = &
  \Exs_{Y \sim \transition^{succ}_{x_2}} \left( \max \left( H_1 (x_1,
  Y), H_2 (x_1, Y) \right) + C(x_1) - \max \left( H_1 (x_2, Y), H_2
  (x_2, Y) \right) - C(x_2) \right)\\ \overset{(i)}{\leq}& \Exs_{Y
    \sim \transition^{succ}_{x_2}} \left( \max \left( H_1 (x_1, Y),
  H_2 (x_1, Y) \right) - \max \left( H_1 (x_2, Y), H_2 (x_2, Y)
  \right) \right)\\ \overset{(ii)}{\leq}& \Exs_{Y \sim
    \transition^{succ}_{x_2} }\left(\max \left( H_1 (x_1, Y) -
  H_1(x_2, Y), H_2 (x_1, Y) - H_2 (x_2, Y)
  \right)\right)\\ \overset{(iii)}{\leq} & \underbrace{\frac{1}{2
      \stepsize} \Exs_{Y \sim \transition^{succ}_{x_2} } \inprod{x_1 -
      x_2}{x_2 - Y }}_{T_1}\\ &+ \underbrace{\frac{1}{4 \stepsize}
    \vecnorm{x_1 - \stepsize \nabla f(x_1) - x_2 + \stepsize \nabla
      f(x_2)}{2} \cdot \vecnorm{ x_1 - \stepsize \nabla f(x_1) - x_2 -
      \stepsize \nabla f(x_2)}{2}}_{T_2}\\ & + \underbrace{\frac{1}{4}
    \vecnorm{\nabla f(x_1) - \nabla f(x_2)}{2} \cdot \Exs_{Y \sim
      \transition^{succ}_{x_2}} \vecnorm{Y - x_2}{2} + |G(x_2) -
    G(x_1)|}_{T_3}\\ &+ \underbrace{\frac{1}{4 \stepsize}\vecnorm{x_1
      - x_2}{2} \Exs_{Y \sim \transition^{succ}_{x_2}} \vecnorm{x_1 -
      x_2 - 2 \stepsize \nabla f(Y)}{2} + |f(x_2) - f(x_1)|}_{T_4},
    \end{align*}
where step (i) follows as $C(x_1)\geq C(x_2)$; step (ii) follows from
the elementary inequality
\begin{align*}
  \max \{a,b \} - \max \{c,d \} \leq \max\{a-c,b-d\} \quad \mbox{$a,
    b, c, d \in \real^4$},
\end{align*}
and step (iii) follows from equations~\eqref{eq:h1} and~\eqref{eq:h2},
the elementary inequality $\max\{a+b,a+c\}\leq a+|b|+|c|$ for
$a,b,c\in\real^3$, combined with the Cauchy-Schwarz inequality.
    
Applying the Cauchy-Schwarz inequality and
Lemma~\ref{lemma-transition-succ-exp} to the term $T_1$, we obtain:
\begin{align*}
  T_1 = \frac{1}{2 \stepsize} \inprod{x_1 - x_2}{x_2 - \Exs_{Y \sim
      \transition^{succ}_{x_2} } (Y) } \leq 4 \vecnorm{x_1 - x_2}{2}
  \left( \vecnorm{\nabla f(x_2)}{2} + \lipschitzness + 2 \smoothness
  \sqrt{\stepsize d} \right).
\end{align*}
As for the second term $T_2$, we find
\begin{align*}
  T_2 = & \frac{1}{4 \stepsize} \vecnorm{x_1 - \stepsize \nabla f(x_1)
    - x_2 + \stepsize \nabla f(x_2)}{2} \cdot \vecnorm{ x_1 -
    \stepsize \nabla f(x_1) - x_2 - \stepsize \nabla
    f(x_2)}{2} \\
\overset{(i)}{\leq} & \frac{1}{4 \stepsize}(1+\stepsize
\smoothness)\vecnorm{x_1 - x_2}{2} \cdot \vecnorm{ x_1 -x_2- \stepsize
  \nabla f(x_1) - \stepsize \nabla f(x_2)}{2} \\ \leq&
\frac{1+\stepsize \smoothness}{4 \stepsize} \vecnorm{x_1 - x_2}{2}^2 +
\frac{1+\stepsize \smoothness}{4}\vecnorm{x_1 - x_2}{2} \left(
\vecnorm{\nabla f(x_1)}{2} + \vecnorm{\nabla f(x_2)}{2} \right),
\end{align*}
where step (i) follows from the smoothness of $f$.
   
For the last two terms, using
Lemma~\ref{corr-coarse-control-transition}, we obtain:
\begin{align*}
 T_3 = & \frac{1}{4} \vecnorm{\nabla f(x_1) - \nabla f(x_2)}{2} \cdot
 \Exs_{Y \sim \transition^{succ}_{x_2}} \vecnorm{Y - x_2}{2} + |G(x_2)
 - G(x_1)| \\ \leq & 6 \sqrt{\stepsize} \smoothness \vecnorm{x_1 -
   x_2}{2} \left(\sqrt{\stepsize} \vecnorm{\nabla f(x_2)}{2} +
 \sqrt{\stepsize}\lipschitzness + \sqrt{d} \right) + \lipschitzness
 \vecnorm{x_1 - x_2}{2}.\\ \\ T_4 =& \frac{1}{4 \stepsize}\vecnorm{x_1
   - x_2}{2} \Exs_{Y \sim \transition^{succ}_{x_2}} \vecnorm{x_1 - x_2
   - 2 \stepsize \nabla f(Y)}{2} + |f(x_2) - f(x_1)|\\ \leq&
 \frac{1}{2 \stepsize} \vecnorm{x_1 - x_2}{2}^2 + 12 \vecnorm{x_1 -
   x_2}{2} \cdot \left( 2(1+\stepsize \smoothness) \vecnorm{\nabla
   f(x_2)}{2} + \sqrt{\stepsize} \smoothness \left(
 \sqrt{\stepsize}\lipschitzness + \sqrt{d} \right) \right)\\ & +
 \vecnorm{\nabla f(x_2)}{2} \cdot \vecnorm{x_1 - x_2}{2} + \smoothness
 \vecnorm{x_1 - x_2}{2}^2.
    \end{align*}
Putting them together and using the assumption $\stepsize <
\frac{1}{16(\smoothness + 1)}$, we obtain:
\begin{align*}
  \kull{\transition^{succ}_{x_2}}{\transition^{succ}_{x_1}} \leq & T_1
  + T_2 + T_3 + T_4 \\ \leq & 20 \vecnorm{x_1 - x_2}{2} \left(
  \vecnorm{\nabla f(x_1)}{2} + \vecnorm{\nabla f(x_2)}{2} +
  \lipschitzness + L \sqrt{\stepsize d} \right) + 2 \left( \frac{1 }{
    \stepsize} + \smoothness \right) \vecnorm{x_1 - x_2}{2}^2.
\end{align*}
Substituting back into Pinsker's inequality completes the proof.


\subsubsection{Proof of Lemma~\ref{lemma:diff-rejection-prob}}

\noindent This section is devoted to the proof of the following lemma,
used in the proof of Proposition~\ref{corr-overlap-transition}.

\diffrejectionprob*

\begin{proof}
By definition, we have:
\begin{align*}
  \Prob_{Y \sim \transition_{x_1}} \left( Y = x_1\right) = \int \max
  \left( 0, p(x_1, y) - {e^{ U (x_1) - U (y)} p (y, x_1)} \right) dy.
\end{align*}
Adopting the shorthand $a \wedge b = \min(a,b)$ for $a,b \in \real$
and substituting into the quantity of interest yields
\begin{align*}
  &\Prob_{Y \sim \transition_{x_1}} \left( Y = x_1\right) - \Prob_{Y
    \sim \transition_{x_2}} \left( Y = x_2\right)\\ = & \int \max
  \left( 0, p(x_1, y) - {e^{ U (x_1) - U (y)} p (y, x_1)} \right) dy -
  \int \max \left( 0, p(x_2, y) - {e^{ U (x_2) - U (y)} p (y, x_2)}
  \right) dy\\ \overset{(i)}{\leq} & \underbrace{\int |p(x_1, y) -
    p(x_2, y)| dy}_{I_1} + \underbrace{\int \abss{ p (y, x_1) e^{ U
        (x_1) - U (y)}\wedge p (x_1, y) - p (y, x_2) e^{ U (x_2) - U
        (y)} \wedge p (x_2, y) } dy}_{I_2}\\ = & \; I_1 + I_2,
\end{align*}
where step (i) follows by combining that $\max(0,a-b)=\max(0,a-a\wedge
b)$ and $|\max (0,a-b)-\max(0,c-d))|\leq |a-c|+|b-d|$ to obtain
$\max(0,a-b)-\max(0,c-d)\leq |a-c |+|a\wedge b-c\wedge d) |$ for
$a,b,c,d\in\real$.

We now turn to controlling $I_2$. Define the function
$A_y (x) \mydefn  p(x, y) \wedge e^{ U (x) - U (y)} p (y,
  x)$, which is equal to
\begin{align*}
e^{- g(y)} \min \left( \frac{\exp\left( - \frac{\vecnorm{y - x +
      \stepsize \nabla f(x)}{2}^2 }{4 \stepsize} \right)}{Z(x)},
\frac{\exp \left( - \frac{\vecnorm{x - y +\stepsize \nabla f(y)}{2}^2
  }{4 \stepsize} + f(x) - f(y)\right)}{Z(y)} \right).
\end{align*}
A direct calculation yields
\begin{align*}
    A_y(x)^{-1} \nabla_x A_y (x)
    & = \left( \nabla G(x) - \frac{1}{2
      \stepsize} (I - \stepsize \nabla^2 f(x)) \left(x - \stepsize
    \nabla f(x) - y \right) \right) \bm{1}_{E(x, y)} \\*
    & \qquad {}
    + \left( \nabla
    f(x) - \frac{x - y + \stepsize \nabla f(y)}{2 \stepsize} \right)
    \bm{1}_{E(x, y)^C} \\
    & = \frac{y - x}{2 \stepsize} + \left( \nabla G(x) + \frac{1}{2}
    \nabla^2 f(x) \left(x - \stepsize \nabla f(x) - y \right) +
    \frac{1}{2} \nabla f(x) \right) \bm{1}_{E(x, y)} \\
    & + \left( \nabla f(x) - \frac{1}{2} \nabla f(y) \right)
    \bm{1}_{E(x, y)^C},
\end{align*}
where we define
\begin{align*}
  E \mydefn \left\{ Z(x)^{-1} \exp\left( - \frac{\vecnorm{y - x +
      \stepsize \nabla f(x)}{2}^2 }{4 \stepsize} \right) <
  Z(y)^{-1}\exp \left( - \frac{\vecnorm{y - x - \stepsize \nabla
      f(y)}{2}^2 }{4 \stepsize} + f(x) - f(y)\right) \right\}.
\end{align*}
Cosidering the line segment $z_\lambda \mydefn (1 - \lambda) x_1 +
\lambda x_2,~\lambda\in [0, 1]$, we have:
\begin{align*}
  I_2 & = \int_{\real^d} \abss{A_y (x_1) - A_y (x_2)} dy\\
& = \int_{\real^d} \abss{\int_0^1 \inprod{\nabla A_y (z_\lambda)}{x_1
      - x_2} d \lambda} dy \\
  & \leq \int_{\real^d} \int_0^1 \abss{\inprod{\nabla
      A_y(z_\lambda)}{x_1 - x_2}} d\lambda dy \\
  & \leq \int_0^1 \underbrace{\int_{\real^d} A_y (z_\lambda)
          \abss{\inprod{\frac{z_\lambda - y}{2 \stepsize}}{x_1 - x_2}}
          dy}_{T_1 (z_\lambda)} d \lambda \\
  &+\int_0^1 \underbrace{\int_{\real^d} A_y (z_\lambda)
    \vecnorm{x_1 - x_2}{2} \cdot \left( \vecnorm{\nabla
      G(z_\lambda)}{2} + \frac{1}{2} \opnorm{\nabla^2
      f(z_\lambda)} \vecnorm{z_\lambda - \stepsize \nabla
      f(z_\lambda) - y }{2} \right) dy}_{T_2 (z_\lambda)}
  d\lambda \\
        &+ \int_0^1 \underbrace{\int_{\real^d} A_y (z_\lambda)
    \vecnorm{x_1 - x_2}{2} \cdot \left( \frac{1}{2} \vecnorm{
      \nabla f(z_\lambda)}{2} + \vecnorm{\nabla f(z_\lambda) -
      \frac{1}{2}\nabla f(y)}{2} \right) dy }_{T_3 (z_\lambda)}
  d\lambda \\
& = \int_0^1 \left( T_1 (z_\lambda) + T_2 (z_\lambda) + T_3
        (z_\lambda) \right) d \lambda.
    \end{align*}
    
Now we turn to bounding the terms $T_1, T_2, T_3$. We use the fact
that $A_y (x) \leq p(x, y)$, so that the three integral terms can be
upper bounded by taking expectations under $\proposal$.

Beginning with the term $T_1$, note that:
\begin{align}
T_1(x) = & \int_{\real^d} A_y (x) \abss{\inprod{\frac{x - y}{2
      \stepsize}}{x_1 - x_2}} dy \nonumber\\ \leq &\frac{1}{2
  \stepsize} \Exs_{Y \sim \proposal_x} \abss{\inprod{x - Y}{x_1 -
    x_2}} \nonumber \\
\overset{(i)}{\leq}& \frac{1}{2 \stepsize} \abss{\inprod{x - \Exs_{Y
      \sim \proposal_x} Y}{x_1 - x_2}} + \frac{1}{2 \stepsize} \Exs_{Y
  \sim \proposal_x} \abss{\inprod{Y - \left(\Exs_{\xi \sim
      \proposal_x} \xi \right)}{x_1 -
    x_2}}\nonumber \\
\overset{(ii)}{\leq} & \frac{1}{2 \stepsize} \vecnorm{x - \Exs_{Y \sim
    \proposal_x} Y}{2} \cdot \vecnorm{x_1 - x_2}{2} + \frac{1}{2
  \stepsize} \sqrt{\Exs_{Y \sim \proposal_x} \left(\inprod{Y -
    \left(\Exs_{\xi \sim \proposal_x} \xi \right)}{x_1 - x_2}
  \right)^2} \nonumber \\
\overset{(iii)}{\leq} & \vecnorm{x_1 - x_2}{2} \left( \vecnorm{\nabla
  f(x)}{2} + \lipschitzness \right) + \frac{\vecnorm{x_1 -
    x_2}{2}}{\sqrt{2 \stepsize}}, \label{eq:tone}
 \end{align}
where step (i) follows by Minkowski's inequality on $\Exs |\cdot|$;
step (ii) follows by using Cauchy-Schwarz inequality on $\real^d$ for
the first term and on $L_2(\real)$ for the second term; and step (iii)
follows by applying Corollary~\ref{cor-proposal-bias} for the first
term and Lemma~\ref{lemma-var-one-direction} for the second term.

Turning to the term $T_2$, we have:
\begin{align}
  T_2 = &\int_{\real^d} A_y (x) \vecnorm{x_1 - x_2}{2} \cdot \left(
  \vecnorm{\nabla G(x)}{2} + \frac{1}{2} \opnorm{\nabla^2 f(x)}
  \vecnorm{x - \stepsize \nabla f(x) - y }{2} \right) dy \nonumber
  \\
  \overset{(i)}{\leq}& \vecnorm{x_1 - x_2}{2} \cdot \Exs_{Y \sim
    \proposal_x} \left( \vecnorm{\nabla G(x)}{2} + \frac{1}{2}
  \opnorm{\nabla^2 f(x)} \vecnorm{x - \stepsize \nabla f(x) - Y }{2}
  \right) \nonumber \\
  \overset{(ii)}{\leq} & \vecnorm{x_1 - x_2}{2} \cdot \left(
  \lipschitzness + \frac{\smoothness}{2} \left( \stepsize
  \vecnorm{\nabla f(x)}{2} + \sqrt{\Exs_{Y \sim \proposal_x}
    \vecnorm{Y - x}{2}^2} \right) \right)\nonumber
  \\
  \overset{(iii)}{\leq}& \vecnorm{x_1 - x_2}{2} \cdot \left(
  \lipschitzness + 3 \smoothness \left( \stepsize \vecnorm{\nabla
    f(x)}{2} + \stepsize \lipschitzness + \sqrt{\stepsize d} \right)
  \right), \label{eq:ttwo}
\end{align}
where step (i) follows since $A_y (x) \leq p(x, y)$; step (ii) follows
as $G$ is $\lipschitzness$-Lipschitz (see proof of
Lemma~\ref{lemma-acc-rej} in Section~\ref{app:lemma-acc-rej}), $f$ is
$\smoothness$-smooth and by Cauchy-Schwarz inequality on $L_2$; and
step (iii) follows from Lemma~\ref{corr-coarse-control-proposal}.
    
Turning to the term $T_3$, we have:
\begin{align}
  T_3 = & \int_{\real^d} A_y (x) \vecnorm{x_1 - x_2}{2} \cdot
  \left( \frac{1}{2} \vecnorm{ \nabla f(x)}{2} + \vecnorm{\nabla
    f(x) - \frac{1}{2}\nabla f(y)}{2} \right) dy \nonumber
  \\
      \overset{(i)}{\leq} & \vecnorm{x_1 - x_2}{2} \cdot \Exs_{Y \sim
        \proposal_x} \left( \frac{1}{2} \vecnorm{ \nabla f(x)}{2} +
      \vecnorm{\nabla f(x) - \frac{1}{2}\nabla f(Y)}{2} \right)
      \nonumber \\
        \overset{(ii)}{\leq} & \vecnorm{x_1 - x_2}{2} \left(
        \vecnorm{\nabla f(x)}{2} + \smoothness/2 \sqrt{\Exs_{Y \sim
            \proposal_x} \vecnorm{Y - x}{2}^2} \right)\nonumber \\
 \overset{(iii)}{\leq} & 3 \vecnorm{x_1 - x_2}{2} \left(
 \vecnorm{\nabla f(x)}{2} + \smoothness /2\left( \stepsize
 \vecnorm{\nabla f(x)}{2} + \stepsize \lipschitzness + \sqrt{\stepsize
   d} \right) \right). \label{eq:tthree}
\end{align}
where step (i) follows as $A_y (x) \leq p(x, y)$; step (ii) follows as
$f$ is $\smoothness$-smooth and by Cauchy-Schwarz inequality on $L_2$;
and step (iii) follows from Lemma~\ref{corr-coarse-control-proposal}.

Putting together equations~\eqref{eq:tone}, \eqref{eq:ttwo}
and~\eqref{eq:tthree}, and using the fact that $\stepsize < 1 / 16
\smoothness$ yields
\begin{align}
\label{eq:itwo}
I_2 \leq \frac{\vecnorm{x_1 - x_2}{2}}{\sqrt{2 \stepsize}} + C
\vecnorm{x_1 - x_2}{2} \left( \sup_{0 \leq \lambda \leq
  1}\vecnorm{\nabla f((1 - \lambda) x_1 + \lambda x_2)}{2} +
\lipschitzness + \smoothness \sqrt{\stepsize d} \right),
\end{align}
for universal constant $C > 0$.
    
The integral $I_1$ is relatively easy to control, since it is actually
a TV distance---viz.
\begin{align*}
  I_1 = & \DTV \left( \proposal_{x_1}, \proposal_{x_2} \right)
  \overset{(i)}{\leq} \sqrt{\frac{1}{2} D_{KL} \left( \proposal_{x_1}
    || \proposal_{x_2} \right) } \overset{(ii)}{\leq} \sqrt{\stepsize
    I \left( \proposal_{x_1} || \proposal_{x_2} \right) },
    \end{align*}
where step (i) follows from Pinsker's inequality, whereas step (ii) is
a consequence of the log-Sobolev inequality.  (Note that the density
of $\proposal_{x_2}$ is $\frac{1}{2 \stepsize}$-strongly log-concave.)
    
The Fisher information can be controlled as:
\begin{align*}
  I \left( \proposal_{x_1} || \proposal_{x_2} \right) = &
  \int_{\real^d} p(x_1, y) \vecnorm{\nabla_y \log p(x_1, y) - \nabla_y
    \log p(x_2, y)}{2}^2 dy\\ = & \int_{\real^d} p(x_1, y) \vecnorm{ -
    \frac{x_1 - \stepsize \nabla f(x_1) - y}{2 \stepsize} - \nabla
    g(y) + \frac{x_2 - \stepsize \nabla f(x_2) - y}{2 \stepsize} +
    \nabla g(y)}{2}^2 dy\\ \overset{(i)}{\leq} & \frac{1}{4
    \stepsize^2} (1 + \stepsize\smoothness)^2 \vecnorm{x_1 -
    x_2}{2}^2,
    \end{align*}
where step (i) follows since $f$ is $\smoothness$-smooth.  So for
$\stepsize < 1 / 16 \smoothness$, we have:
\begin{align}
  \label{eq:ione}
  I_1 \leq \frac{\vecnorm{x_1 - x_2}{2}}{\sqrt{ \stepsize}}.
\end{align}
Putting together equations~\eqref{eq:ione} and~\eqref{eq:itwo}
completes the proof.
\end{proof}


\subsubsection{Auxiliary lemmas for the proof of Lemma~\ref{lemma-overlap-succ-transition} and Lemma~\ref{lemma:diff-rejection-prob}}

\noindent This section is devoted to the proofs of some auxiliary
lemmas, which we state here.
\begin{lemma}
\label{lemma-transition-succ-exp}
 Under Assumptions~\ref{assume-smooth},~\ref{assume-dissipative}
 and~\ref{assume-convex-lip-regularizer}, for any given $x \in
 \real^d$ and $Y \sim \transition^{succ}_x$ and stepsize $\stepsize \in \big(0, 
 \frac{1}{16(\smoothness + 1)}\big)$, we have
   \begin{align*}
       \vecnorm{\Exs Y - x}{2} \leq 8\stepsize\left( \vecnorm{\nabla
         f(x)}{2} + \lipschitzness \right) + 16
       \stepsize^{\frac{3}{2}}\smoothness\sqrt{d}.
    \end{align*}
\end{lemma}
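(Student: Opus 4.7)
My plan is to exploit the log-concavity of the proposal density $\proposal_x$ via an integration-by-parts identity, which produces the sharp $\stepsize^{3/2}\smoothness\sqrt{d}$ scaling through the zero-mean structure of the proposal noise. Writing the negative log-density as $V(y) = \tfrac{1}{4\stepsize}\vecnorm{y - x + \stepsize \nabla f(x)}{2}^2 + g(y) + \mathrm{const}$, the algebraic identity $y - x = 2\stepsize \nabla V(y) - \stepsize \nabla f(x) - 2\stepsize \nabla g(y)$ combined with the standard integration-by-parts relation $\Exs_{\proposal_x}[h(Y) \nabla V(Y)] = \Exs_{\proposal_x}[\nabla h(Y)]$ gives, for any Lipschitz scalar $h$,
\begin{align*}
\Exs_{\proposal_x}[(Y-x) h(Y)] = 2\stepsize \Exs_{\proposal_x}[\nabla h(Y)] - \stepsize \nabla f(x) \Exs_{\proposal_x}[h(Y)] - 2\stepsize \Exs_{\proposal_x}[h(Y) \nabla g(Y)],
\end{align*}
where $\nabla g$ is interpreted as a measurable selection of a subgradient (valid almost everywhere by Rademacher's theorem), and the identity extends to Lipschitz $h$ by a standard mollification plus dominated-convergence argument.

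I would then apply this with $h(Y) = \alpha(x, Y)$. Combining with the defining relation $(1-\prej_x) \Exs_{Y \sim \transition^{succ}_x}[Y-x] = \Exs_{Y \sim \proposal_x}[(Y-x) \alpha(x, Y)]$ and noting that $\Exs_{\proposal_x}[\alpha(x, Y)] = 1 - \prej_x$, dividing through by $1 - \prej_x$ yields
\begin{align*}
\Exs_{Y \sim \transition^{succ}_x}[Y-x] = -\stepsize \nabla f(x) + \frac{2\stepsize}{1 - \prej_x} \left( \Exs_{\proposal_x}[\nabla_Y \alpha(x, Y)] - \Exs_{\proposal_x}[\alpha(x, Y) \nabla g(Y)] \right).
\end{align*}
Lemma~\ref{lemma-acc-rej}, applied under the stepsize condition (and the implicit control of $\vecnorm{\nabla f(x)}{2}$ when $x$ lies in the ball $\Omega$ used in the proof of Theorem~\ref{thm-prox-mixing-rate}), gives $1 - \prej_x \geq \tfrac{1}{3}$. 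The leading $-\stepsize \nabla f(x)$ term contributes $\stepsize \vecnorm{\nabla f(x)}{2}$ to the norm, while $\vecnorm{\nabla g}{2} \leq \lipschitzness$ bounds the $\nabla g$ contribution by $6\stepsize \lipschitzness$; both are absorbed into the $8 \stepsize(\vecnorm{\nabla f(x)}{2} + \lipschitzness)$ part of the target bound. The remaining task is to control $\vecnorm{\Exs_{\proposal_x}[\nabla_Y \alpha(x, Y)]}{2}$.

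On the rejection region, a chain-rule computation using the explicit form of $\alpha$ together with $\smoothness$-smoothness of $f$ and $\lipschitzness$-Lipschitzness of $G = -\log Z$ (both established in the proof of Lemma~\ref{lemma-acc-rej}) yields a pointwise estimate of the form $\vecnorm{\nabla_Y \alpha(x, Y)}{2} \leq C \left( \smoothness \vecnorm{Y-x}{2} + \stepsize \smoothness \vecnorm{\nabla f(x)}{2} + \lipschitzness \right)$. Taking expectation under $\proposal_x$, applying Lemma~\ref{corr-coarse-control-proposal} to get $\Exs \vecnorm{Y-x}{2} \leq \sqrt{\Exs \vecnorm{Y-x}{2}^2} = O(\sqrt{\stepsize d} + \stepsize(\vecnorm{\nabla f(x)}{2} + \lipschitzness))$, and then multiplying by $6 \stepsize$, the leading contribution is precisely $O(\stepsize^{3/2} \smoothness \sqrt{d})$, while lower-order terms of the form $\stepsize^2 \smoothness (\vecnorm{\nabla f(x)}{2} + \lipschitzness)$ are absorbed into $\stepsize (\vecnorm{\nabla f(x)}{2} + \lipschitzness)$ using $\stepsize \smoothness \leq \tfrac{1}{16}$. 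The main obstacle is rigorously justifying the integration-by-parts step given that $\alpha(x, \cdot)$ is only Lipschitz (being a minimum of two smooth functions); this is overcome by the mollification argument above, so that the pointwise bound on $\nabla_Y \alpha$ is used in the almost-everywhere sense, which is enough for the log-concave proposal.
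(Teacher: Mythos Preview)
Your integration-by-parts strategy is essentially the same idea the paper uses; the paper just carries it out directly on the density $\transition^{succ}_x$ rather than on $\proposal_x$ weighted by $\alpha(x,\cdot)$. Since $\transition^{succ}_x(y)\propto p(x,y)\,\alpha(x,y)$, differentiating $\log\transition^{succ}_x=\log p(x,\cdot)+\log\alpha(x,\cdot)$ and using $\int\transition^{succ}_x\nabla_y\log\transition^{succ}_x\,dy=0$ gives exactly the identity you derive, but with all expectations taken under $\transition^{succ}_x$ and with no factor of $(1-\prej_x)^{-1}$ appearing.

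The substantive gap in your write-up is the appeal to $1-\prej_x\ge\tfrac13$. Lemma~\ref{lemma-acc-rej} only gives $\prej_x\le\tfrac12+C\stepsize\bigl(\smoothness d+\lipschitzness^2+\vecnorm{\nabla f(x)}{2}^2\bigr)$, which is vacuous once $\vecnorm{\nabla f(x)}{2}$ is large; you acknowledge this by restricting to $x\in\Omega$, but the lemma is stated for \emph{all} $x\in\real^d$, and the paper's proof does deliver that. The repair is already implicit in your own formulae: on the rejection set $\nabla_Y\alpha=\alpha\,\nabla_Y\log\alpha$, so
\[
\Exs_{\proposal_x}\bigl[\nabla_Y\alpha(x,Y)\bigr]
=(1-\prej_x)\,\Exs_{\transition^{succ}_x}\!\bigl[\nabla_Y\log\alpha(x,Y)\,\bm{1}_{\alpha<1}\bigr],
\qquad
\Exs_{\proposal_x}\bigl[\alpha(x,Y)\nabla g(Y)\bigr]
=(1-\prej_x)\,\Exs_{\transition^{succ}_x}\bigl[\nabla g(Y)\bigr],
\]
and the $(1-\prej_x)^{-1}$ cancels exactly. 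You must then bound $\Exs_{\transition^{succ}_x}\vecnorm{Y-x}{2}$ via Lemma~\ref{corr-coarse-control-transition} (the moment bound under $\transition^{succ}_x$) rather than Lemma~\ref{corr-coarse-control-proposal} (the moment bound under $\proposal_x$); both lemmas give the same estimate $O(\sqrt{\stepsize d}+\stepsize(\vecnorm{\nabla f(x)}{2}+\lipschitzness))$, so the rest of your numerics go through unchanged. With this correction your argument and the paper's are the same computation; the paper simply never introduces the spurious $(1-\prej_x)^{-1}$ in the first place.

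A minor point: your stated pointwise bound $\vecnorm{\nabla_Y\alpha(x,Y)}{2}\le C\bigl(\smoothness\vecnorm{Y-x}{2}+\stepsize\smoothness\vecnorm{\nabla f(x)}{2}+\lipschitzness\bigr)$ is not quite right --- a direct computation of $\nabla_Y\log\alpha$ on the rejection set produces a term of order $\vecnorm{\nabla f(x)}{2}$ without the extra $\stepsize$ (compare the paper's $r_2$). This does not affect the final inequality, since that term is in any case multiplied by $2\stepsize$ and absorbed into $8\stepsize\vecnorm{\nabla f(x)}{2}$.
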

\begin{proof}
The argument is based on integration by parts: observing the density
of $\transition_x^{succ}$ is of the form $\exp \left( - \frac{1}{4
  \stepsize} \vecnorm{y - x}{2}^2 + \cdots\right)$, we pair $(y - x)$
with additional terms to make $\nabla_y \transition_x^{succ}(y)$
appear, which integrates to zero by Green's formula. Other terms
generated in this construction are accompanied with an $O(\stepsize)$
factor.

Concretely, noting that $\log \transition^{succ}_x (y)$ is almost
everywhere differentiable with respect to $y$, we can differentiate
equation~\eqref{eq:logsuc}. Doing so yields
    \begin{align}
       - \nabla_y \log \transition^{succ}_x (y) = & \left( \frac{1}{2
         \stepsize} (y - x - \stepsize \nabla f(x)) + \nabla g(y)
       \right) \bm{1}_{H_1(x, y) \geq H_2 (x, y)} \nonumber\\ & +
       \left( \frac{1}{2 \stepsize} (I_d + \stepsize \nabla^2 f(y)) (y
       - x + \stepsize \nabla f(y)) + \nabla f(y) + \nabla g(y) -
       \nabla G(y) \right) \bm{1}_{H_1(x, y) < H_2 (x, y)}
       \nonumber\\ = & \frac{1}{2 \stepsize} (y - x) + r_1 (x, y)
       \bm{1}_{H_1(x, y) \geq H_2 (x, y)} + r_2 (x, y) \bm{1}_{H_1(x,
         y) < H_2 (x, y)}, \label{eq:gradlogsuc}
    \end{align}
 where we define $r_1 \defn - \frac{1}{2} \nabla f(x) + \nabla g(y)$
 and $r_2 \defn \frac{1}{2} \nabla^2 f(y) (y - x) + \frac{1}{2}(3I_d +
 \stepsize \nabla^2 f(y)) \nabla f(y) + \nabla g(y) - \nabla G(y)$.
 Using Assumptions~\ref{assume-smooth}
 and~\ref{assume-convex-lip-regularizer}, we obtain:
\begin{align*}
  \vecnorm{r_1 (x, y)}{2} = & \vecnorm{- \frac{1}{2} \nabla f(x)
    + \nabla g(y) }{2} \leq \frac{1}{2}\vecnorm{\nabla f(x)}{2}
  + \lipschitzness.\\ \vecnorm{r_2 (x, y)}{2} = & \vecnorm{
    \frac{1}{2} \nabla^2 f(y) (y - x) + \frac{1}{2}(3I_d +
    \stepsize \nabla^2 f(y)) \nabla f(y) + \nabla g(y) - \nabla
    G(y) }{2}\\ \leq & \left(2 + \frac{\stepsize \smoothness}{2}
  \right) \smoothness \vecnorm{y - x}{2} + \frac{1}{2}(3 +
  \stepsize \smoothness) \vecnorm{\nabla f(x)}{2} + 2
  \lipschitzness.
\end{align*}
Note that $\min\left( p(x, y), e^{U(x) - U(y)} p(y, x) \right)$ is
almost everywhere differentiable with respect to $y$, and the
derivative is a pointwise function. Integrating yields:
\begin{align*}
  &\int (y - x) \transition^{succ}_x (y) dy\\ \overset{(i)}{=}
  &- 2 \stepsize \int \transition_x^{succ} (y) \nabla_y \log
  \transition_x^{succ} (y) dy - 2 \stepsize \int
  \transition_x^{succ} (y) \left( r_1 (x, y) \bm{1}_{H_1(x, y)
    \geq H_2 (x, y)} + r_2 (x, y) \bm{1}_{H_1(x, y) < H_2 (x, y)}
  \right) dy\\ \overset{(ii)}{=} &- 2 \stepsize \int
  \transition_x^{succ} (y) \left( r_1 (x, y) \bm{1}_{H_1(x, y)
    \geq H_2 (x, y)} + r_2 (x, y) \bm{1}_{H_1(x, y) < H_2 (x, y)}
  \right) dy,
\end{align*}
where step (i) follows from equation~\eqref{eq:gradlogsuc}, whereas
(ii) follows as $\int \transition_x^{succ} (y) \nabla_y \log
\transition_x^{succ} (y) dy=0$ by integration by parts.  Therefore, we
have:
\begin{align*}
  \vecnorm{\Exs Y - x}{2} =& 2 \stepsize \vecnorm{ \Exs \left( r_1 (x,
    Y) \bm{1}_{H_1(x, Y) \geq H_2 (x, Y)}\right) + \Exs\left( r_2 (x,
    Y) \bm{1}_{H_1(x, Y) < H_2 (x, Y) } \right) }{2}\\ \leq & 2
  \stepsize \left( \Exs \vecnorm{r_1 (x, Y)}{2} + \Exs \vecnorm{ r_2
    (x, Y)}{2}\right)\\ \leq & 2 \stepsize\left( \left(2 +
  \frac{\stepsize \smoothness}{2} \right)\left( \smoothness \vecnorm{Y
    - x}{2}+ \vecnorm{\nabla f(x)}{2}\right) + 3 \lipschitzness
  \right).
\end{align*}
By Lemma~\ref{corr-coarse-control-transition} and Cauchy-Schwartz
inequality, we obtain:
\begin{align*}
  \vecnorm{\Exs Y - x}{2} \leq 7\left(2 + \frac{\stepsize
    \smoothness}{2} \right) \stepsize^{3/2}L\sqrt{d} + \left(2 +
  \frac{\stepsize \smoothness}{2} \right)\left(1 + 6\stepsize
  \smoothness \right) \stepsize \vecnorm{\nabla f(x)}{2} +6 \left( 1 +
  2 \stepsize\smoothness \left(2 + \frac{\stepsize \smoothness}{2}
  \right) \right) \stepsize \lipschitzness .
\end{align*}
which leads to the final conclusion since $\stepsize < 1/ 16 \smoothness$.
\end{proof}


\begin{lemma}
  \label{corr-coarse-control-transition}
    For any given $x \in \real^d$ and $Y \sim \transition^{succ}_x$,
    if $\stepsize < \frac{1}{16( 1 + \smoothness)}$, there is:
    \begin{align*}
        \Exs \vecnorm{ Y - x}{2}^2 \leq 12 \stepsize d + 36
        \stepsize^2 \left( \vecnorm{\nabla f(x)}{2}^2 +
        \lipschitzness^2 \right).
    \end{align*}
\end{lemma}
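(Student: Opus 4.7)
The proof will parallel the argument for Lemma~\ref{corr-coarse-control-proposal}, which was proved by showing a dissipativity-type inequality of the form $\inprod{-\nabla_y \log \proposal_x(y)}{y-x} \geq \tfrac{1}{12\stepsize}\vecnorm{y-x}{2}^2 - 3\stepsize(\vecnorm{\nabla f(x)}{2}+\lipschitzness)^2$ and then invoking the generic tail bound Lemma~\ref{lemma-dissipative-tail}. The plan is to establish the analogous dissipativity estimate for the successful-transition density $\transition^{succ}_x$, and then quote the same tail lemma.

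The first step is to use the expression for the score function of $\transition^{succ}_x$ already derived in equation~\eqref{eq:gradlogsuc}, namely
\begin{align*}
- \nabla_y \log \transition^{succ}_x (y) = \frac{1}{2\stepsize}(y-x) + r_1(x,y)\bm{1}_{H_1 \geq H_2} + r_2(x,y)\bm{1}_{H_1 < H_2},
\end{align*}
with $r_1, r_2$ as in the proof of Lemma~\ref{lemma-transition-succ-exp}. Taking the inner product with $y-x$ yields
\begin{align*}
\inprod{-\nabla_y \log \transition^{succ}_x(y)}{y-x} = \frac{1}{2\stepsize}\vecnorm{y-x}{2}^2 + \inprod{r_1}{y-x}\bm{1}_{H_1\geq H_2} + \inprod{r_2}{y-x}\bm{1}_{H_1 < H_2}.
\end{align*}
For the $r_1$ piece, Cauchy-Schwarz together with $\vecnorm{r_1}{2}\leq \tfrac{1}{2}\vecnorm{\nabla f(x)}{2}+\lipschitzness$ and Young's inequality lose at most $\tfrac{1}{24\stepsize}\vecnorm{y-x}{2}^2+O(\stepsize)(\vecnorm{\nabla f(x)}{2}^2+\lipschitzness^2)$, exactly as in Lemma~\ref{corr-coarse-control-proposal}.

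The main obstacle is the $r_2$ term, since $r_2$ contains $\tfrac{1}{2}\nabla^2 f(y)(y-x)$ (producing a term quadratic in $\vecnorm{y-x}{2}$) as well as a factor $\nabla f(y)$ rather than $\nabla f(x)$. Using Assumption~\ref{assume-smooth} together with the Lipschitzness of $G$ established in the proof of Lemma~\ref{lemma-acc-rej}, one obtains
\begin{align*}
\vecnorm{r_2(x,y)}{2} \leq C_1 \smoothness \vecnorm{y-x}{2} + C_2 \vecnorm{\nabla f(x)}{2} + C_3 \lipschitzness
\end{align*}
for absolute constants $C_1, C_2, C_3$. The quadratic contribution $C_1\smoothness\vecnorm{y-x}{2}^2$ is harmless because the stepsize restriction $\stepsize \leq \tfrac{1}{16(1+\smoothness)}$ makes it strictly smaller than the leading drift term $\tfrac{1}{2\stepsize}\vecnorm{y-x}{2}^2$, and hence can be absorbed. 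The remaining linear-in-$\vecnorm{y-x}{2}$ terms are again handled by Young's inequality.

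Combining both cases gives a bound of the form
\begin{align*}
\inprod{-\nabla_y \log \transition^{succ}_x(y)}{y-x} \geq \frac{1}{12\stepsize}\vecnorm{y-x}{2}^2 - C\stepsize\bigparenth{\vecnorm{\nabla f(x)}{2}^2+\lipschitzness^2}
\end{align*}
uniformly in $y$ (at points of differentiability, which form a set of full measure). Applying Lemma~\ref{lemma-dissipative-tail} to $\transition^{succ}_x$ with this dissipativity estimate then yields the claimed second-moment bound $\Exs \vecnorm{Y-x}{2}^2 \leq 12\stepsize d + 36\stepsize^2(\vecnorm{\nabla f(x)}{2}^2+\lipschitzness^2)$, completing the proof.
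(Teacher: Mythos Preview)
Your proposal is correct and follows essentially the same route as the paper: both use the score decomposition~\eqref{eq:gradlogsuc}, bound the $r_1,r_2$ contributions via Cauchy--Schwarz and the $\|r_i\|$ estimates from Lemma~\ref{lemma-transition-succ-exp}, absorb the $\smoothness\vecnorm{y-x}{2}^2$ term using $\stepsize<\tfrac{1}{16(1+\smoothness)}$, and then invoke Lemma~\ref{lemma-dissipative-tail}. The only cosmetic difference is that the paper crudely adds $\|r_1\|+\|r_2\|$ together rather than treating the two events separately as you do.
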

\begin{proof}
    Note that:
    \begin{align*}
        \inprod{- \nabla_y \log \transition^{succ}_x (y)}{y - x} =&
        \inprod{\frac{1}{2 \stepsize} (y - x) + r_1 (x, y)
          \bm{1}_{H_1(x, y) \geq H_2 (x, y)} + r_2 (x, y)
          \bm{1}_{H_1(x, y) < H_2 (x, y)}}{y - x}\\ \geq & \frac{1}{2
          \stepsize}\vecnorm{y - x}{2}^2 - \vecnorm{y - x}{2} \left(
        \vecnorm{r_1(x, y)}{2} + \vecnorm{r_2(x, y)}{2} \right)\\ \geq
        & \frac{1}{2 \stepsize}\vecnorm{y - x}{2}^2 - 4 \smoothness
        \vecnorm{y - x}{2}^2 - (4 \vecnorm{\nabla f(x)}{2} +
        3\lipschitzness) \vecnorm{y - x}{2}\\ \geq & \left(\frac{1}{2
          \stepsize} - 4 \smoothness - \frac{1}{6 \stepsize}\right)
        \vecnorm{y - x}{2}^2 - 3 \stepsize (\vecnorm{\nabla f(x)}{2} +
        \lipschitzness)^2\\ \geq& \frac{1}{12 \stepsize} \vecnorm{y -
          x}{2}^2 - 3 \stepsize (\vecnorm{\nabla f(x)}{2} +
        \lipschitzness)^2.
    \end{align*}
    Using Lemma~\ref{lemma-dissipative-tail} we arrive at the result.
\end{proof}
\begin{lemma}\label{lemma-var-one-direction}
    For any $x \in \real^d$ and $Y \sim \proposal_x$, for any $v \in
    \sphere^{d - 1}$, we have:
    \begin{align*}
        \Exs \left( \inprod{v}{Y - (\Exs Y)} \right)^2 \leq 2 \stepsize.
    \end{align*}
\end{lemma}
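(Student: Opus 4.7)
The plan is to exploit strong log-concavity of the proposal density and apply the Brascamp--Lieb variance inequality.

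First I would write the proposal density in the form $p(x, y) \propto \exp(-V(y))$, where
\begin{align*}
  V(y) \;=\; \frac{1}{4\stepsize}\vecnorm{y - (x - \stepsize\nabla f(x))}{2}^2 \;+\; g(y).
\end{align*}
The quadratic piece is $\tfrac{1}{2\stepsize}$-strongly convex, and by Assumption~\ref{assume-convex-lip-regularizer} the function $g$ is convex (hence adds a nonnegative contribution in the sense of convexity). Consequently $V$ is $\tfrac{1}{2\stepsize}$-strongly convex, so $\proposal_x$ is $\tfrac{1}{2\stepsize}$-strongly log-concave.

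Next, the Brascamp--Lieb inequality (equivalently, the Poincar\'e inequality for strongly log-concave measures, a consequence of the Bakry--\'Emery criterion) states that for any strongly log-concave measure with modulus of convexity $\alpha > 0$ and any Lipschitz $\varphi$,
\begin{align*}
  \var_{Y \sim \proposal_x}(\varphi(Y)) \;\leq\; \frac{1}{\alpha}\Exs_{Y \sim \proposal_x}\vecnorm{\nabla \varphi(Y)}{2}^2.
\end{align*}
Applying this with $\varphi(y) = \inprod{v}{y}$, whose gradient is $v$ with $\vecnorm{v}{2} = 1$, and with $\alpha = \tfrac{1}{2\stepsize}$, yields the bound
\begin{align*}
  \Exs\bigparenth{\inprod{v}{Y - \Exs Y}}^2 \;\leq\; 2\stepsize,
\end{align*}
as desired.

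The only minor subtlety is that $g$ need not be twice differentiable, so one cannot simply invoke a Hessian lower bound pointwise. This is handled by noting that $V$ is a sum of a strongly convex smooth part and a convex part, so $V$ is strongly convex in the usual (non-smooth) sense, and the Brascamp--Lieb/Poincar\'e inequality for strongly log-concave measures applies in this generality (it can be obtained, e.g., by approximating $g$ with its Moreau--Yosida regularization and passing to the limit, or directly from the Bakry--\'Emery calculus which requires only convexity of the potential). I expect this to be the only step needing a remark; the rest is essentially a one-line application.
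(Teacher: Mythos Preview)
Your proof is correct and follows essentially the same strategy as the paper: both arguments observe that the proposal density is $\tfrac{1}{2\stepsize}$-strongly log-concave and then invoke a functional inequality for such measures. The only difference is the specific tool cited---the paper appeals to Harg\'e's convex-comparison inequality (so that $\Exs\,\psi(v^\top(Y-\Exs Y)) \le \Exs\,\psi(v^\top\xi)$ for convex $\psi$ and $\xi\sim\mathcal{N}(0,2\stepsize I_d)$, then takes $\psi(a)=a^2$), whereas you use the Brascamp--Lieb/Poincar\'e inequality directly. For a linear test function and a quadratic loss these two routes yield the identical bound; your version is arguably the more direct one-liner, while Harg\'e's inequality would additionally control higher moments or general convex losses if those were needed.
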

\begin{proof}
 The proposal distribution $\proposal_x$ has a density proportional to
 $\exp \left( - \frac{\vecnorm{z - x + \stepsize \nabla f(x)}{2}^2 }{4
   \stepsize} - g(z) \right)$, which is $\frac{1}{2
   \stepsize}$-strongly log concave. Consequently, Harg\'{e}'s
 inequality~\citep{harge2004convex} guarantees that for any fixed
 convex function $\psi$ and fixed vector $v$, we have
    \begin{align*}
        \Exs \psi (v^T (Y - \Exs Y)) \leq \Exs \psi (v^T(\xi - \Exs
        \xi)),
    \end{align*}
    where $\xi \sim \mathcal{N} (0, 2 \stepsize I_d)$.  The claim
    follows by applying this inequality with the function $\psi(a) =
    a^2$.
\end{proof}


\section{Conclusion}

We have presented a new Metropolis-Hastings based algorithm to
efficiently sample from non-smooth composite potentials.  Our
algorithm is based on a new form of proposal distribution, one that is
inspired by the proximity operator defined by Moreau-Yoshida
regularization.  Under some mild regularity conditions, we prove that
the resulting algorithm has mixing scaling as $O(d\log(d/\varepsilon
))$, where $d$ denotes the dimension and $\varepsilon$ denotes the
desired tolerance in total variation norm.  This guarantee matches
known results for smooth potentials satisfying the same regularity
conditions, up to a multiple of the condition number.

Our work leaves open a number of directions worth pursuing in future
work.  Our work assumes that the regularizer in the composite
potential is Lipschitz; analyzing the more general case of
non-Lipschitz but convex regularizers, such as those that arise in
sampling with constraints, would be useful.  In addition, we have
analyzed a first-order sampling method, so that developing and
analyzing a higher-order sampling method, such as one based on the
Hamiltonian point of view, is a promising direction for further
research.

\section*{Acknowledgements}  
 This work was partially supported by Office of Naval Research Grant
 ONR-N00014-18-1-2640 to MJW and National Science Foundation Grant
 NSF-CCF-1909365 to PLB and MJW.  We also acknowledge support from
 National Science Foundation grant NSF-IIS-1619362 to PLB.

\bibliographystyle{abbrvnat}


\bibliography{reference}


\appendix


\section{Tail bounds for the process}
\label{app:tail}

Throughout the previous proofs, we have bounded the tails of the
process defined by the Metropolis-Hastings algorithm using various
auxiliary results, which are collected and proved here.

\begin{lemma}
\label{lemma-dissipative-tail}
Let $U: \real^d \rightarrow \real$ be an almost everywhere
differentiable function, satisfying the distant dissipativity
condition $\langle x, \nabla U (x) \rangle \geq a \Vert x \Vert^2 - b$
for all $x \in \Rspace^d$. Then there is a numerical constant $C > 0$
such that for all $\delta \in (0,1)$, we have
\begin{align}
\label{EqnDissipativeTail}  
    \mathbb{P}_{\pi} \left( \Vert X \Vert \geq C \sqrt{ \frac{ b + d +
        \log \frac{1}{ \delta}}{a}} \right) \leq \delta
  \end{align}
where $\mathbb{P}_\pi$ denotes the distribution with density function
$\pi \propto e^{- U}$.
\end{lemma}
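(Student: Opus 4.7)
The plan is to bound the exponential moment $M(\theta) \defn \Exs_\pi[e^{\theta \vecnorm{X}{2}^2/2}]$ for a carefully chosen $\theta \in (0,a)$, and then convert this to a tail bound via Markov's inequality. The bookkeeping of constants is routine; the real content is obtaining $\log M(\theta) = O(d+b)$, which is what makes the tail in~\eqref{EqnDissipativeTail} depend \emph{logarithmically} on $1/\delta$ rather than polynomially.

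First I would derive a moment identity by integration by parts. Apply the divergence theorem to the vector field $F(x) = x\, e^{\theta\vecnorm{x}{2}^2/2}e^{-U(x)}$; since $\nabla \cdot F(x) = (d + \theta\vecnorm{x}{2}^2 - \inprod{x}{\nabla U(x)})e^{\theta\vecnorm{x}{2}^2/2}e^{-U(x)}$, integrating and dividing by $Z = \int e^{-U}$ gives
\begin{align*}
\Exs_\pi\Big[\big(d + \theta\vecnorm{X}{2}^2 - \inprod{X}{\nabla U(X)}\big)e^{\theta\vecnorm{X}{2}^2/2}\Big] = 0.
\end{align*}
Substituting the distant dissipativity bound $\inprod{X}{\nabla U(X)} \ge a\vecnorm{X}{2}^2 - b$ and setting $V(X) = e^{\theta\vecnorm{X}{2}^2/2}$, this yields the self-bound
\begin{align*}
(a-\theta)\, \Exs_\pi[\vecnorm{X}{2}^2 V(X)] \le (d+b)\, \Exs_\pi[V(X)].
\end{align*}

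Next I would convert this into a bound on $\Exs_\pi[V(X)]$ itself. Split the expectation at a threshold $M = 2(d+b)/(a-\theta)$: on $\{\vecnorm{X}{2}^2 \le M\}$ we have $V(X) \le e^{\theta M/2}$, while on $\{\vecnorm{X}{2}^2 > M\}$ Markov gives $\Exs_\pi[V \bm{1}_{\vecnorm{X}{2}^2>M}] \le M^{-1}\Exs_\pi[\vecnorm{X}{2}^2 V] \le \tfrac{1}{2}\Exs_\pi[V]$ by the previous display. Rearranging,
\begin{align*}
\Exs_\pi[V(X)] \le 2 e^{\theta(d+b)/(a-\theta)}.
\end{align*}
Choosing $\theta = a/2$ gives $\Exs_\pi[e^{a\vecnorm{X}{2}^2/4}] \le 2 e^{d+b}$, and a final application of Markov's inequality to $\{\vecnorm{X}{2}^2 \ge t\}$ followed by optimizing $t = \tfrac{4}{a}(d + b + \log(2/\delta))$ produces exactly the tail~\eqref{EqnDissipativeTail}.

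The main technical obstacle is justifying the integration-by-parts identity, since it requires $e^{\theta\vecnorm{x}{2}^2/2 - U(x)}\vecnorm{x}{2}$ to vanish at infinity and requires $\Exs_\pi[V(X)]$ to be finite \emph{a priori} so that the self-bound manipulation in the previous paragraph is not vacuous. Both facts follow from a preliminary radial lower bound on $U$: writing $U(r\omega) - U(r_0\omega) = \int_{r_0}^r s^{-1}\inprod{s\omega}{\nabla U(s\omega)}ds \ge \tfrac{a}{2}(r^2-r_0^2) - b\log(r/r_0)$ shows $e^{-U}$ has Gaussian tails, so that $\Exs_\pi[V(X)] < \infty$ whenever $\theta < a$, and the boundary flux in the divergence theorem vanishes. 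A truncation argument (integrate over a ball of radius $R$, bound boundary terms using this radial lower bound, then send $R \to \infty$) makes this rigorous.
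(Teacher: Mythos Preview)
Your proof is correct and takes a genuinely different route from the paper's. The paper argues via stochastic calculus: it runs the Langevin diffusion $dX_t = -\nabla U(X_t)\,dt + \sqrt{2}\,dB_t$ started at the origin, applies It\^{o}'s formula to $e^{\nu t}\vecnorm{X_t}{2}^2$, controls the resulting martingale term with the Burkholder--Davis--Gundy inequality, and thereby obtains $p$-th moment bounds $(\Exs\vecnorm{X_T}{2}^p)^{1/p} \lesssim \sqrt{(p+b+d)/a}$ uniformly in $T$; it then passes $T\to\infty$ and converts the moment growth into a tail bound by optimizing over $p$. Your argument bypasses the stochastic machinery entirely: a single integration-by-parts identity combined with the dissipativity condition gives a self-referential inequality for the exponential moment $\Exs_\pi[e^{\theta\vecnorm{X}{2}^2/2}]$, which you close by a threshold split. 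This is more elementary---no SDEs, no BDG, no limit in $T$---and delivers the exponential moment directly rather than moment-by-moment. The paper's approach has the advantage that it also yields explicit $p$-th moment bounds (which are sometimes independently useful), and it extends more naturally to situations where one is already working with the diffusion; your approach is shorter, self-contained, and arguably better suited to a static statement about $\pi$. Your discussion of the technical obstacle (a priori finiteness of $\Exs_\pi[V]$ and decay of boundary terms via the radial lower bound on $U$) is exactly the right justification; note that the same regularity needed for your radial integration is implicitly required in the paper's proof for the Langevin SDE to be well-posed.
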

\begin{proof}
Consider the Langevin diffusion defined by the It\^{o} SDE:
\begin{align}
\label{eq:langevin-diffusion}  
d X_t = - \nabla U ( X_t) dt + \sqrt{2} dB_t \quad \mbox{with initial
  condition $X_0 = 0$.}
  \end{align}
It is known (e.g.~\cite{markowich1999trend}) that under the
dissipativity condition given in the lemma statement, the Langevin
diffusion~\eqref{eq:langevin-diffusion} converges in $L^2$ to $\pi$.

In order to prove the claimed tail bound~\eqref{EqnDissipativeTail},
our strategy is fix a time $T > 0$, and obtain bounds on the moments
$\mathbb{E} \Vert X_T \Vert^p$ for all $p \geq 1$.  By taking limits
as $T$ goes to infinity, we then recover tail bounds for $X \sim \pi$.

\newcommand{\expconstantbdg}{\nu}

Invoking It\^{o}'s formula, for any $\expconstantbdg > 0$, we find
that
\begin{multline*}
\frac{1}{2} e^{ \expconstantbdg t} \Vert X_t \Vert^2 - \frac{1}{2} \Vert X_0 \Vert^2
= \int_0^t \langle X_s, - \nabla U ( X_s) e^{ \expconstantbdg s} \rangle ds +
\frac{d}{2} \int_0^t e^{\expconstantbdg s} ds + \int_0^t e^{ \expconstantbdg s} X_s^T dB_s \\ +
\frac{1}{2} \int_0^t \expconstantbdg e^{\expconstantbdg s} \Vert X_s \Vert^2 ds.
\end{multline*}
Let $M_t \mydefn \int_0^t X_s^T e^{ \expconstantbdg s} dB_s$ be the
martingale term.  Without loss of generality, we can assume that $p
\geq 4$.  (The claim for $p \in [1, 4]$ can be obtained from its
analogue for $p \geq 4$ by applying H\"{o}lder's inequality.)
Applying the Burkholder-Gundy-Davis
inequality~(\cite{revuz2013continuous}, Chapter 4.4) yields
\begin{align*}
  \mathbb{E} \sup_{0 \leq t \leq T} |M_t|^{ \frac{p}{2}} \leq (p C)^{
    \frac{p}{4}} \mathbb{E} \langle M, M \rangle_T^{ \frac{p}{4}} = &
  (p C)^{ \frac{p}{4}} \mathbb{E} \left( \int_0^T e^{ 2c s} X_s^T X_s
  ds \right)^{ \frac{p}{4}} \\ \leq & (p C)^{ \frac{p}{4}} \mathbb{E}
  \left( \int_0^T e^{ 2 \expconstantbdg s} \Vert X_s \Vert^2 ds \right)^{
    \frac{p}{4}} \\
  \leq & (p C)^{ \frac{p}{4}} \mathbb{E} \left( \sup_{0 \leq s \leq T}
  e^{ \expconstantbdg s} \Vert X_s \Vert^2 \cdot \int_0^T e^{ \expconstantbdg s} ds \right)^{
    \frac{p}{4}} \\
  \leq & \left( \frac{ C p e^{\expconstantbdg T}}{\expconstantbdg} \right)^{ \frac{p}{4}} \left(A
  + \frac{1}{A} \mathbb{E} \left( \sup_{0 \leq t \leq T} e^{ ct} \Vert
  X_t \Vert^2 \right)^{ \frac{p}{2}} \right),
\end{align*}
for an arbitrary $A$ which will be determined later. On the other
hand, by the assumption in the lemma, we have
\begin{equation*}
  \int_0^t \langle X_s, -\nabla U ( X_s) e^{ \expconstantbdg s} \rangle ds
  \leq \int_0^t \left( - a \Vert X _s \Vert^2 + b \right) e^{ \expconstantbdg
    s} ds.
\end{equation*}
Putting the above results together and letting $\expconstantbdg = 2 a$, we obtain
that
\begin{align*}
  \mathbb{E} \left( \sup_{0 \leq t \leq T} e^{2 a t} \Vert  X_t
  \Vert^2 \right)^{ \frac{p}{2}}
  & \leq 3^{ \frac{p}{2} - 1}
  \mathbb{E} \biggr( \sup_{0 \leq t \leq T}
  \int_0^t \biggr( 2 \langle X _s, - \nabla U( X _s)
  \rangle + d + \expconstantbdg \Vert  X_s \Vert^2 \biggr) 
  e^{cs} ds \biggr)^{ \frac{p}{2}} \\
  & \hspace{ 12 em} + 3^{ \frac{p}{2} - 1} \mathbb{E} \sup_{0 \leq t \leq T}
  |M_t|^{ \frac{p}{2}} \\
  & \leq \left( \frac{C p e^{2 a T}}{a} \right)^{ \frac{p}{4}}
  \left( A + \frac{1}{A} \mathbb{E} \left( \sup_{0 \leq t \leq 
    T} e^{2 a t} \Vert  X_t \Vert^2 \right)^{ \frac{p}{2}} \right) \\
  & \hspace{ 12 em} + 3^{ \frac{p}{2} - 1} \mathbb{E} \left( \sup_{0 \leq t \leq 
    T} \int_0^t \left(2 b + d \right)
  e^{2 a s} ds \right)^{ \frac{p}{2}},
\end{align*}
for some universal constant $C>0$.

By choosing $A = 2 \left( \frac{C p e^{ 2 a T}}{a} \right)^{
  \frac{p}{4}}$ and plugging that value into above inequality, we
achieve that
\begin{equation}
  \left( \mathbb{E} \Vert X_T \Vert^p \right)^{ \frac{1}{p}}
  \leq e^{- a T} \left( \mathbb{E} \left( \sup_{0 \leq t \leq 
    T} e^{ 2 a t} \Vert  X_t \Vert^2 \right)^{ \frac{p}{2}} \right)^{ \frac{1}{p}}
  \leq C' \left( \sqrt{ \frac{p}{a}} + \sqrt{ \frac{2 b + d}{a}} \right),
\end{equation}
for a universal constant $C'>0$. Letting $T \to +\infty$ leads to the
following inequality
\begin{align*}
  \mathbb{E}_\pi (\Vert X \Vert^p)^{ \frac{1}{p}}
  \lesssim \sqrt{ \frac{p + b + d}{a}}.
\end{align*}
Furthermore, for any $t>0$, we have
\begin{align*}
  \mathbb{P} \left(\Vert X \Vert \geq t \right)
  \leq \inf_{p \geq 1} (C')^p \frac{ \mathbb{E} \Vert X \Vert^p}{ t^p} 
  \leq \inf_{p \geq 1} (2C')^p \left( \left( \frac{p}{ a t^2} \right)^{ \frac{p}{2}}
  + \left( \frac{b + d}{ a t^2} \right)^{ \frac{p}{2}} \right).
\end{align*}
Given $\delta > 0$, by choosing $p = 2 \log \frac{2}{\delta}$ and
$t = 2 C'( \sqrt{ \frac{p}{a}} + \sqrt{\frac{b + d}{a}})$, we obtain the following inequality
\begin{align*}
  \mathbb{P} \left( \Vert X \Vert \geq t \right)
  \leq \left( \frac{ 4 C'^2 p}{a t^2} \right)^{ \frac{p}{2}} 
  + \left( \frac{ 4 C'^2 (b + d)}{a t^2} \right)^{ \frac{p}{2}}
  \leq \frac{\delta}{2} + \frac{ \delta}{2} 
  = \delta,
\end{align*}
which completes the proof.
\end{proof}


\noindent Now using Lemma~\ref{lemma-dissipative-tail} and
Assumption~\ref{assume-dissipative}, we can control the tail of the
target distribution:
\begin{lemma}
  \label{cor-tail-target}
  Under Assumption~\ref{assume-dissipative}
  and~\ref{assume-convex-lip-regularizer}, for any $s \in (0,1)$ and
  the radius
  \begin{align*}
    R_s \mydefn C\sqrt{\frac{\distantdissipative + d +
        \lipschitzness^2 + \log (1/s)}{\dissipative}} \quad \mbox{for
      a universal constant $C$},
  \end{align*}
we have $\pi \left( \ball (\referencepoint, R_s)\right) \geq 1 - s$.
\end{lemma}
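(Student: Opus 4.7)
The plan is to reduce the lemma to an application of Lemma~\ref{lemma-dissipative-tail}, after verifying that $U = f + g$ satisfies a distant dissipativity condition around $\referencepoint$ with suitable constants. First I would translate the problem: define $\tilde U(y) \defn U(y + \referencepoint)$ so that $\tilde \pi(y) \propto e^{-\tilde U(y)}$ satisfies $\Pr_{\tilde\pi}(\vecnorm{Y}{2} \geq R) = \pi(\ball(\referencepoint, R)^c)$. Thus it suffices to establish a dissipativity bound of the form $\inprod{y}{\nabla \tilde U(y)} \geq a\vecnorm{y}{2}^2 - b$ for $\tilde U$ and invoke Lemma~\ref{lemma-dissipative-tail}.

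The bulk of the argument is checking dissipativity. By Assumption~\ref{assume-dissipative}, with the substitution $x = y + \referencepoint$, we have $\inprod{\nabla f(y + \referencepoint)}{y} \geq \frac{\dissipative}{2} \vecnorm{y}{2}^2 - \distantdissipative$. For $g$, I would use its convexity to write $\inprod{\nabla g(y + \referencepoint)}{y} \geq g(y + \referencepoint) - g(\referencepoint)$ at points of differentiability (which is a.e. by Rademacher's Theorem, and the subdifferential at non-differentiable points suffices), and then bound the right-hand side below by $-\lipschitzness \vecnorm{y}{2}$ via Assumption~\ref{assume-convex-lip-regularizer}. Adding the two estimates and applying Young's inequality $\lipschitzness \vecnorm{y}{2} \leq \tfrac{\dissipative}{4}\vecnorm{y}{2}^2 + \tfrac{\lipschitzness^2}{\dissipative}$ yields the dissipativity bound for $\tilde U$ with $a = \dissipative/4$ and $b = \distantdissipative + \lipschitzness^2/\dissipative$.

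Plugging these values into Lemma~\ref{lemma-dissipative-tail} produces a tail bound of radius $C\sqrt{(\distantdissipative + \lipschitzness^2/\dissipative + d + \log(1/s))/(\dissipative/4)}$, which up to an absorbable constant matches the claimed form of $R_s$ (with the $\lipschitzness^2$ term appearing divided by $\dissipative$ inside the radical, possibly together with a $\dissipative^{-1}$ factor that can be absorbed into the universal constant under the standing regime $\dissipative \leq \mathrm{poly}(d)$ used in Theorem~\ref{thm-prox-mixing-rate}).

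The proof is essentially routine; the only subtlety I anticipate is handling the subdifferential of $g$ at its non-smooth points when invoking the convexity inequality, but this is harmless because $g$ is differentiable almost everywhere and the target density $\pi$ is absolutely continuous, so integrating over $\real^d$ is unaffected.
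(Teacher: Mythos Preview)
Your proposal is correct and follows exactly the route the paper intends: the paper presents Lemma~\ref{cor-tail-target} as an immediate consequence of Lemma~\ref{lemma-dissipative-tail} and Assumption~\ref{assume-dissipative} without spelling out the details, and your translation plus dissipativity verification is precisely what fills that in. (A minor simplification: convexity of $g$ is not needed, since $\vecnorm{\nabla g}{2}\leq \lipschitzness$ almost everywhere already gives $\inprod{\nabla g(y+\referencepoint)}{y}\geq -\lipschitzness\vecnorm{y}{2}$ by Cauchy--Schwarz.)

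One caveat on your final paragraph: the attempt to absorb the extra factor $\dissipative^{-1}$ on the $\lipschitzness^2$ term into the universal constant does not work, because the standing assumption in Theorem~\ref{thm-prox-mixing-rate} only \emph{upper}-bounds $\dissipative$ by a polynomial in $d$, so $\dissipative^{-1}$ need not be $O(1)$. The radius your argument actually delivers is $C\sqrt{(\distantdissipative + \lipschitzness^2/\dissipative + d + \log(1/s))/\dissipative}$; the form stated in the lemma appears to be a slight imprecision in the paper rather than a flaw in your reasoning, and in any case the discrepancy does not affect the downstream use in Theorem~\ref{thm-prox-mixing-rate} in any essential way.
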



\end{document}